\documentclass[11pt]{article}

\usepackage[a4paper,top=2cm,bottom=2cm,left=3cm,right=3cm,marginparwidth=1.75cm]{geometry}

\usepackage{graphicx}
\usepackage{booktabs}
\usepackage[accsupp]{axessibility}

\usepackage{url}            
\usepackage{booktabs}       
\usepackage{amsfonts}       
\usepackage{nicefrac}       
\usepackage{microtype}      
\usepackage{xcolor}         

\usepackage{amsthm}
\usepackage{amssymb}
\usepackage{mathtools}
\usepackage{enumitem}
\usepackage{multirow}
\usepackage{subcaption}  

\usepackage[export]{adjustbox}
\usepackage{caption}

\usepackage{bbm}

\usepackage{comment}

\usepackage{rotating}  
\usepackage{pdflscape}  

\usepackage[misc]{ifsym}

\newtheorem{theorem}{Theorem}

\newtheorem{corollary}{Corollary}

\newsavebox{\largestimage}

\usepackage[pagebackref,breaklinks,colorlinks,citecolor=blue]{hyperref}

\usepackage{cleveref}

\title{
    \textbf{Harnessing Data Asymmetry: \\
    Manifold Learning in the Finsler World}
}
\author{%
  Thomas Dag\`es$^{1,2,3}$\thanks{\,\Letter\ \texttt{tom.dages@tum.de}}
  \quad Simon Weber$^{2,3,4}$
  \quad Daniel Cremers$^{2,3}$
  \quad Ron Kimmel$^1$\\
}

\date{}

\begin{document}
\maketitle
\footnotetext[1]{Technion -- Israel Institute of Technology}
\footnotetext[2]{Technical University of Munich}
\footnotetext[3]{Munich Center for Machine Learning}
\footnotetext[4]{University of Oxford}

\begin{abstract}
    Manifold learning is a fundamental task at the core of data analysis and visualisation. It aims to capture the simple underlying structure of complex high-dimensional data by preserving pairwise dissimilarities in low-dimensional embeddings. Traditional methods rely on symmetric Riemannian geometry, thus forcing symmetric dissimilarities and embedding spaces, e.g.\ Euclidean. However, this discards in practice valuable asymmetric information inherent to the non-uniformity of data samples. We suggest to harness this asymmetry by switching to Finsler geometry, an asymmetric generalisation of Riemannian geometry, and propose a Finsler manifold learning pipeline that constructs asymmetric dissimilarities and embeds in a Finsler space. This greatly broadens the applicability of existing asymmetric embedders beyond traditionally directed data to any data. We also modernise asymmetric embedders by generalising current reference methods to asymmetry, like Finsler t-SNE and Finsler Umap. On controlled synthetic and large real datasets, we show that our asymmetric pipeline reveals valuable information lost in the traditional pipeline, e.g.\ density hierarchies, and consistently provides superior quality embeddings than their Euclidean counterparts.
\end{abstract}

\begin{figure}[t]
    \centering
    \includegraphics[width=\textwidth]{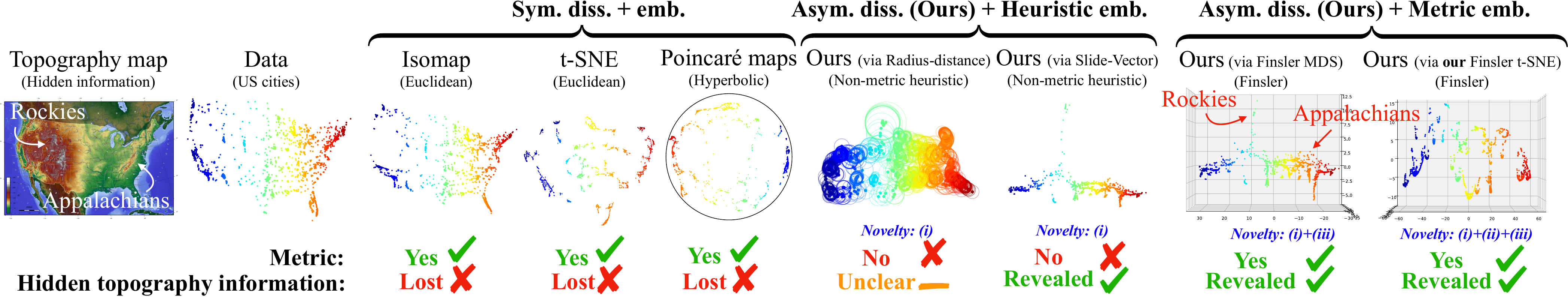}%
    \caption{
    Motivation: how asymmetry can arise and why preserving it matters. 
    We aim to recover the underlying smooth US manifold from US cities (latitude-longitude).
    Hidden external factors (e.g.\ mountain ranges) bias the sampling density: fewer cities lie in high-altitude regions. Reweighing distances by local density yields asymmetric dissimilarities that encode differences in geographical setting. Symmetrising and embedding in symmetric spaces (e.g.\ Isomap, t-SNE, Umap, Poincaré maps) discards this information. Our asymmetric dissimilarity construction can be fed to existing traditional asymmetric embedders (e.g.\ slide-vector, radius-distance), but these are heuristic and non-metric. Finsler geometry provides a principled metric framework, enabling the existing Finsler MDS \cite{dages2025finsler} and our novel and more scalable Finsler t-SNE and Umap to better
    embed and reveal the hidden terrain.
    }
    \label{fig: US cities}
\end{figure}%

\section{Introduction}

Manifold learning is a core task in data analysis. 
It seeks low-dimensional representations of high-dimensional data by preserving pairwise dissimilarities, capturing the underlying manifold.
Classical methods share a three-stage pipeline: (i) \textit{Data construction} -- compute pairwise dissimilarities of data points, (ii) \textit{Embedding definition} -- choose how to measure embedding dissimilarities, and (iii) \textit{Optimisation} -- optimise the embedding to fit the dissimilarities. As the hidden data manifold is usually Riemannian, 
these methods target symmetric dissimilarities and embed in a canonical Riemannian space, typically Euclidean. 

However, we advocate that, due to the sampling, equipping the manifold with an asymmetric Finsler metric is both more natural, generalising the Riemannian perspective \cite{chern1996finsler}, and captures additional valuable information that is lost in the symmetric setting (\cref{fig: US cities}).
This novel asymmetric perspective is
encouraged after noticing that existing methods construct asymmetric data dissimilarities, undesired
when using Riemannian theory, requiring poorly justified post hoc averaging to fix the issue, discarding information in the process.

We thus construct asymmetric data dissimilarities and must then define and optimise their embedding. 
As Euclidean space is symmetric, it cannot represent 
asymmetry,
so classical methods would fail. 
Recently, 
it was suggested to embed asymmetric data to a canonical Finsler space via Finsler MDS \cite{dages2025finsler}, 
but it is limited to oracle asymmetric data only. Thanks to our asymmetric data construction, we greatly broaden its, and generally any asymmetric embedder's, applicability to any data. 
Additionally, Finsler MDS' 
optimizer
is slow, unstable, unscalable, and
without
extra
perks like clustering. 
We propose to overcome these issues by adapting modern methods, like references t-SNE \cite{van2008visualizing} and Umap \cite{mcinnes2018umap}, to 
asymmetry.
Our contributions are as follows:
\begin{enumerate}[label=(\alph*),topsep=0pt]
    \item 
    Reveal
    theory inconsistency
    in classical 
    \textit{data construction}
    and give a principled remedy
    towards
    avoiding asymmetry.
    
    \item Embrace sampling-induced asymmetry in \textit{data construction} via a Finsler
    lens,
    enriching our view on sampled data
    with information that symmetric methods cannot capture,
    and opening up applications of asymmetric methods, e.g.\ Finsler MDS, even to traditionally symmetric data.

    \item Embed arbitrary data in the canonical Finsler space (Finsler \textit{embedding definition} and \textit{optimisation} stages). 
    We generalise modern optimisation-based methods to asymmetric data and Finsler embeddings, and introduce in particular Finsler t-SNE and Finsler Umap.

    \item We experimentally demonstrate the advantage of our asymmetric Finsler pipeline. 
    Through visualisations,
    we show that Finsler embeddings not only recover the hidden manifold structure but also 
    reveal additional information hidden in the sampling-induced asymmetry.
    In extensive classification benchmarks, we consistently outperform Euclidean baselines on label-related clustering metrics revealing the superior quality of our embeddings.

\end{enumerate}

\section{Related works}

Our work blends manifold learning with Finsler geometry.

\hfill \break
\noindent \textbf{Manifold learning.}
This 
classical
field \cite{shepard1962analysis,kruskal1964multidimensional,fischl1999cortical,wandell2000visualization} seeks low-dimensional embeddings 
preserving
pairwise dissimilarities \cite{saeed2018survey}. 
The many methods
are 
often
grouped by whether they preserve structure locally \cite{roweis2000nonlinear,donoho2003hessian,zhang2006mlle,belkin2001laplacian,belkin2003laplacian,zhang2004principal,coifman2005geometric,coifman2006diffusion,lin2023hyperbolic,lin2024tree,suzuki2019hyperbolic, gou2023discriminative,martin2005visualizing,tang2016visualizing,mcinnes2018umap,venna2010information,liu2024curvature}, globally \cite{weinberger2005nonlinear,weinberger2006unsupervised,brand2005nonrigid,funke2020low}, or both \cite{pearson1901pca,hotelling1933analysis,scholkopf1998nonlinear,schwartz1989numerical,tenenbaum2000global,van2008visualizing,bronstein2006generalized,rosman2010nonlinear,aflalo2013spectral,pai2022deep,schwartz2019intrinsic,bracha2024wormhole,joharinad2025isumap,kim2024inductive,klimovskaia2020poincare}. Most assume symmetric dissimilarities, excluding asymmetric data 
such as directed graphs or physical systems. 
Fewer works tackle
asymmetry, 
either by artificially reweighing symmetric dissimilarities or leaving the metric setting. 
Traditionally, they either
a) decompose separately symmetric and skew-symmetric parts (ASYMSCAL \cite{chino1978graphical,chino2011asymmetric}, Gower \cite{constantine1978graphical,gower1977analysis}),
or b) add
per-point objects \cite{young1975asymmetric,martin2005visualizing,gower1977analysis,okada1987nonmetric,tanioka2018asymmetric}, e.g.\ circles in radius-distance models 
that add to the Euclidean embedding dissimilarity the difference in radius of each point's circle
\cite{okada1987nonmetric,tanioka2018asymmetric}, 
or c) use
global-vectors \cite{leeuw1982theory,borg2007modern,okada2012bayesian}, e.g.\ in the slide-vector model \cite{leeuw1982theory} 
embedding dissimilarities are the Euclidean distance between a non-perturbed embedding point and a globally shifted one.
These asymmetric embedders are not only heuristic but also non-metric, as for instance 
embeddings are not guaranteed to vary faithfully with data dissimilarity changes.

Recently, Finsler MDS \cite{dages2025finsler} extended classical Multi-Dimensional Scaling (MDS) \cite{shepard1962analysis,kruskal1964multidimensional,cox2000multidimensional} to metric handling of asymmetric dissimilarities via Finsler geometry, but it targets inherently asymmetric data (e.g.\ current flows on a river) and 
was not made applicable
to general datasets (e.g.\ images) usually treated as symmetric. 
It also relies on 
classical MDS rather than modern scalable methods like t-SNE \cite{van2008visualizing} and Umap \cite{mcinnes2018umap}.
Beyond \cite{dages2025finsler}, we
construct asymmetric dissimilarities on any data (i), 
unlocking all asymmetric embedders 
to any data, 
accelerate Finsler MDS optimisation (iii),
and extend modern reference methods, e.g.\ t-SNE and Umap, to asymmetry (ii)-(iii), enabling large-scale embedding.
To the best of our knowledge, no prior work, including \cite{dages2025finsler},
extends modern methods like t-SNE and Umap to asymmetric data, 
scales to large asymmetric datasets,
or builds asymmetric dissimilarities on any data (e.g.\ image datasets).

\hfill \break
\noindent \textbf{Finsler computer vision.}
Finsler geometry \cite{ohta2009heat,bao2012introduction,mirebeau2014efficient,bonnans2022linear} provides a 
theoretical
asymmetry framework. Remarkably, it has been largely unexplored in computer vision, with rare exceptions in robotics \cite{ratliff2021generalized}, image processing \cite{chen2016geoevo,chen2015global,chen2017elastica,chen2016minpath,yang2019geodesic,dages2025metric}, shape analysis \cite{weber2024finsler}, and manifold learning \cite{dages2025finsler}.

\section{The traditional manifold learning pipeline}

The goal of manifold learning is to find an embedding $y_1, \ldots, y_N\in\mathbb{R}^m$
of $N$ points 
$x_1,\ldots,x_N\in\mathbb{R}^n$
into a low-dimensional space, i.e.\ $m<n$, while preserving a collection of data dissimilarities $\boldsymbol{D}\in\mathbb{R}^{N\times N}_+$.
The original points 
$x_i$
are often assumed to lie on a manifold $\mathcal{X}\subset\mathbb{R}^n$, equipped with a metric $L$ given at point $x$ on its tangent plane $\mathcal{T}_x\mathcal{X}$ by $L_x:\mathcal{T}_x\mathcal{X}\to\mathbb{R}_+$ and defining the concept of distance and in turn geodesic distance $d_L:\mathcal{X}\times\mathcal{X}\to\mathbb{R}_+$.

A major assumption in traditional techniques is that the dissimilarities are symmetric $\boldsymbol{D} = \boldsymbol{D}^\top$, as the metric $L$ of the underlying manifold $\mathcal{X}$ is naturally symmetric for most datasets consisting in a collection of independently sampled points, e.g.\ image classification datasets. As such, the data manifold is often assumed to be Riemannian and is to be embedded into a Riemannian space $\mathbb{R}^m$.

\begin{figure}[t]%
    \centering

    \includegraphics[width=\textwidth]{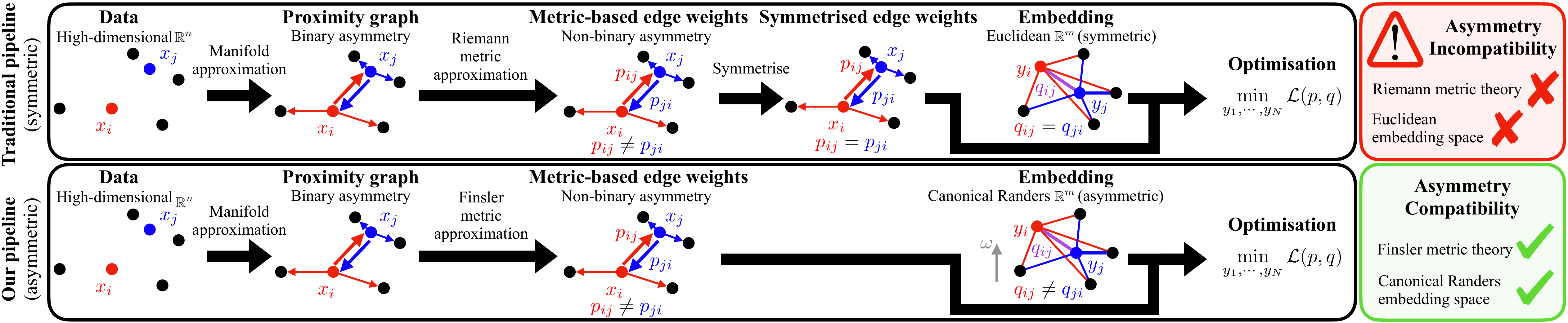}
    \captionof{figure}{
    The traditional manifold learning pipeline 
    leads to asymmetric data dissimilarities on sampled data due to directed proximity graphs and local distance transforms. As this violates the Riemannian manifold assumption and is incompatible with symmetric Euclidean space embeddings, heuristic symmetrisation of data dissimilarities is required yet theoretically unjustified.
    We propose to equip the data manifold with a Finsler metric allowing asymmetric dissimilarities. Embeddings are then performed in a canonical Finsler space, enabling us not only to accurately capture the structure of the data but also harness and reveal the natural asymmetry of the sampling.
    }
    \label{fig: pipelines overview}
\end{figure}

\hfill \break
\noindent \textbf{Riemannian manifolds.} The manifold $\mathcal{X}$ is Riemannian if it is equipped with a Riemannian metric $R$, 
which has quadratic form
$R_x(u) = \sqrt{u^\top M(x) u} = \lVert u\rVert_{M(x)}$.
The positive definite matrix $M(x)$ is the metric tensor and fully describes the metric. 
These metrics are symmetric as $R_x(-u) = R_x(u)$ for any tangent vector $u$. By integrating the length of infinitesimal tangent vectors $R_x(\gamma'(t)dt)$ of curves $\gamma(t)$ on $\mathcal{X}$, the Riemannian length of a curve 
is independent of its traversal direction, making Riemannian geodesic distances also symmetric.

In addition to dimensionality reduction, we usually aim to flatten the embedded data manifold in  $\mathbb{R}^m$ to show the simple intrinsic nature of the original data manifold $\mathcal{X}$. We thus opt for canonical embedding spaces with simple geometry, having geodesic curves given by the usual straight segments and simple closed formulas for distance calculations. The canonical Riemannian space is the Euclidean space, with distances computed in $\mathbb{R}^m$ as $d_E(y_i, y_j) = \lVert y_j - y_i\rVert_2$.

\hfill \break
\noindent \textbf{Data construction.}
Traditional manifold learning methods follow a shared pipeline (see \cref{fig: pipelines overview}). 
First, the smooth manifold is discretely approximated on the provided samples by a proximity graph, e.g.\ the universal kNN, radius, or rarer adaptive graphs \cite{gabriel1969new,toussaint1980relative,jaromczyk1992relative,costa2005estimating,zelnik2004self,correa2012locally,arias2017spectral,dyballa2023ian,alvarez2011global,zhang2011adaptive,mekuz2006parameterless,amenta1998new,bernardini2002ball,belkin2009constructing,boissonnat2007manifold,inkaya2015adaptive,zhu2016natural},
based on the ambient Euclidean distance. 
Taking the manifold metric
as the Euclidean distance along close points, i.e.\ graph edges, $L_{x_i}(x_j - x_i) \approx \lVert x_j - x_i\rVert_2$ approximates in the first order the graph edges $x_j-x_i$ to lie on the tangent plane $\mathcal{T}_{x_i}\mathcal{X}$.
Modern methods often tweak the metric to a new Riemannian metric $R$ accounting for local sampling disparities, typically $R_{x_i}(x_j - x_i) \approx \tfrac{1}{\sigma_i}\lVert x_j - x_i\rVert_2$ with $\sigma_i$ estimated from local densities.
Data dissimilarities $p_{ij}$ are then built by optionally 
remapping
distances
via a transformation $h^p$:
$p_{ij} \!=\! h^p(R_{x_i}(x_j - x_i))$, 
in other words 
$ p_{ij} \!=\! h_i^p(\lVert x_j - x_j\rVert_2)$\footnote{And optional extra $i$-based transforms, e.g.\ Umap's distance threshold.}. 
Commonly if $j$ neighbours $i$,
\begin{equation}
    \label{eq: traditional data dissimilarities}
    \small
        p_{ij} \!= \!\!\!\! \!\!\!\!\!\underbrace{\lVert x_j - x_i\rVert_2}_{\text{
        MDS \cite{schwartz1989numerical}, Isomap \cite{tenenbaum2000global}}} \! \!;
        \underbrace{\mathrm{SM}\Big(-\tfrac{\lVert x_j - x_i \rVert_2^2}{2\sigma_i^2}\Big)
        }_{\text{t-SNE \cite{van2008visualizing}}}  ; 
        \underbrace{e^{-\frac{\lVert x_j - x_i\rVert_2 - \rho_i}{\sigma_i}}}_{\text{Umap \cite{mcinnes2018umap}}}
\end{equation}
where $\rho_i$ is another local estimate
and $\mathrm{SM}$ is the softmax.
If $i$ does not conversely neighbour $j$, then $p_{ji}$ is undefined.
Populating the graph weights $p_{ij}$ into a matrix provides the sparse dissimilarity matrix $\boldsymbol{D}$. Modern methods (t-SNE, Umap) 
use
the sparsity for computational efficiency. 
Earlier works
(MDS, Isomap), geodesically extend $\boldsymbol{D}$ into a dense matrix with Dijkstra's algorithm \cite{dijkstra1959note} on the neighbourhood graph to account for long range behaviours. But this operation is costly as it requires fixing the connectivity, and lengths of shortest paths are overestimated due to undersampling, requiring expensive additional effort to filter only consistent shortest paths \cite{bronstein2006generalized,bronstein2006efficient,bronstein2006robust,schwartz2019intrinsic,rosman2008topologically,rosman2010nonlinear,bracha2024wormhole}.
Directed neighbourhood graphs like kNN lead to binary asymmetry\footnote{
In binary asymmetry,
either $p_{ij}=p_{ji}$ or one of them is undefined.}
due to missing reverse edges.
Local metric tweaking of $R$, with transforms $h_i^p$ depending on $i$, yield ``incompatible'' \cite{mcinnes2018umap} weights $p_{ij}$ with non-binary asymmetry\footnote{
In non-binary asymmetry, $p_{ij}\neq p_{ji}$ yet both are defined.},
see \cref{fig: asymmetry types}. 
As asymmetry violates the Riemannian 
assumption,
the $p_{ij}$ are arbitrarily symmetrised,
commonly\footnote{
If $p_{ij}$ is defined but not $p_{ji}$, then $p_{ji}$ is treated as $0$ in symmetrisation.
}
\begin{equation}
    \footnotesize
    \label{eq: traditional symmetrisation}
    p_{ij} \! \leftarrow \! \underbrace{\frac{p_{ij} + p_{ji}}{2},\max(p_{ij}, p_{ji})}_{\text{Classical MDS, Isomap}} ;
    \underbrace{\frac{p_{ij} + p_{ji}}{2N}}_{\text{t-SNE}} ;
    \underbrace{p_{ij} + p_{ji} - p_{ij}p_{ji}}_{\text{Umap}}.
\end{equation}

\begin{figure}[t]
    \centering
    \adjincludegraphics[
            width=\textwidth,
            trim={{0} {0} {0.022\width} {0}},clip,
            ]{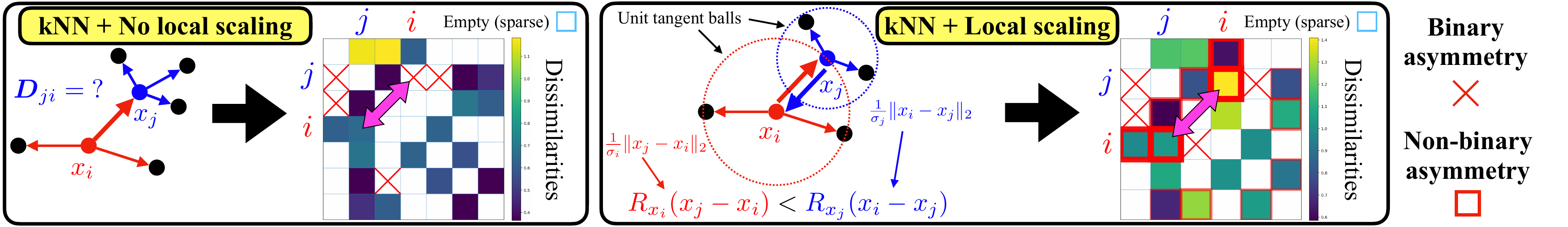}
    \caption{
    Left: binary asymmetry from absent reverse edges in directed proximity graphs.
    Right: non-binary asymmetry with reciprocal edges having differing distances from locally tweaking the metric and approximating geodesic with tangent space distances.
    }
    \label{fig: asymmetry types}
\end{figure}

\noindent \textbf{Embedding definition.}
The canonical Riemannian embedding space is Euclidean, thus embedding distances are given by $d_{ij} \triangleq d_E(y_i,y_j) = \lVert y_j-y_i\rVert_2$, which can be transformed 
to
dissimilarities $q_{ij}\leftarrow h^q(\lVert y_j - y_i\rVert_2)$, 
commonly
\begin{equation}
    \label{eq: traditional embedding dissimilarities}
    \small
        q_{ij} \!=\! \underbrace{d_{ij}\!\triangleq\!\lVert y_j-y_i\rVert_2}_{\text{
        MDS, Isomap
        }} \,;\,
        \underbrace{\tfrac{1}{\mathcal{Z}_{y}}(\!1 + \tfrac{d_{ij}^2}{\nu}\!)^{\!-\frac{\nu+1}{2}}
        }_{\text{t-SNE}} \,;\,
        \underbrace{(1+ad_{ij}^{2b})^{-1}}_{\text{Umap}},
\end{equation}
with 
parameters
$\nu$ \cite{van2009learning}, $a$, and $b$ and normalisation 
$\mathcal{Z}_y$.
The $q_{ij}$
are computed 
for 
all pairs
and are symmetric (Euclidean space):
$h^q$ is independent of $i$.

\hfill\break
\noindent \textbf{Optimisation.}
An objective $\mathcal{L}(p,q)$ is minimised, e.g.
\begin{equation}
    \label{eq: traditional objectives}
    \mathcal{L} = \underbrace{\mathrm{MSE}_w(p, q)}_{\text{
    MDS, Isomap}}
    \; ;\;
    \underbrace{\mathrm{KL}(p \,\Vert\, q)
    }_{\text{t-SNE}
    } \;;\;
    \underbrace{\mathrm{CE}(p,q)
    }_{\text{Umap}},
\end{equation}
where $\mathrm{KL}$, $\mathrm{CE}$, and $\mathrm{MSE}_w$ are the KL-divergence \cite{kullback1951information}, 
cross-entropy, and mean squared error with optional weights $w$ to focus only on consistent pairs \cite{rosman2010nonlinear,schwartz2019intrinsic,bracha2024wormhole}.
MDS is solved by iterative minimisation via the SMACOF algorithm \cite{leeuw1977application,borg2007modern}, but the cheaper kernel PCA (Isomap) is usually done in practice\footnote{In Isomap, $w_{ij}\!=\!1$ and distances $p_{ij}$, $q_{ij}$ are squared in the objective.}.
Gradient descent is done in t-SNE and Umap (with negative sampling for Umap). Efficient implementations, which are crucial for large datasets, 
require explicit closed-form gradient and update rules\footnote{We found an error in the update rule of t-SNE 
with $\nu$ d.o.f.\
\cite{van2009learning} that is
alas 
present
in reference toolkits and libraries 
like
Scikit-learn \cite{pedregosa2011scikit}, see \cref{sec: tSNE detailed update rule,th: correct tSNE update rule}.
} (\cref{sec: update rules of t-SNE and Umap}).

\section{Limitations of traditional manifold learning}

We argue in this work that there are two fundamental issues with traditional methods relating to Riemannian metrics:
(i) the \textit{data construction} methodology cannot be Riemannian as it leads to asymmetry, 
and (ii) asymmetry is
incompatible with the Euclidean \textit{embedding definition} and \textit{optimisation}. Also, symmetry discards information on the sampling itself.

\hfill \break
\noindent \textbf{Non-Riemannian methodology.}
A major issue with existing methods, claiming to be Riemannian, is that the \textit{data construction} leads to asymmetric 
dissimilarities.
This is prohibited in Riemannian geometry as Riemannian metrics are symmetric.
Rather than switching to asymmetric tools, like Finsler geometry, that would account for the natural asymmetry of the data,
unjustified heuristic fixes
symmetrise them 
(\cref{eq: traditional symmetrisation}), discarding information in the process.
Yet even from a Riemannian perspective, this symmetrisation is problematic.
While it can be acceptable for binary asymmetry, 
as it converts 
the directed neighbourhood graph to another valid undirected one, it is highly problematic with non-binary asymmetry, which we now focus on.
The issue arises from combining local metric tweaking $R_{x_i}(x_j - x_i) = c(x_i) \lVert x_j - x_i\rVert_2$,
e.g.\ $c(x_i) \!=\! \tfrac{1}{\sigma_i}$  
(manifold with non-uniform isotropic metric $M(x)\!=\!c(x)I$),
with approximating geodesic distances between close points with tangent space distances, $d_R(x_i, x_j) \approx R_{x_i}(x_j - x_i)$, leading to 
$p_{ij} \!=\! h^p(R_{x_i} (x_j - x_i))$.
Since $R$ is Riemannian, it must have symmetric manifold distances $d_R(x_i, x_j) = d_R(x_j, x_i)$. Yet tangent space distances $R_{x_i}$ and $R_{x_j}$ can greatly vary in non-uniform metrics, even between close points, if the local scaling $c(x_i)$ and $c(x_j)$ does.
This problematic yet universal methodology puts into question the claimed rigorous Riemannian theoretical foundations existing methods might rely on,
even when trying to justify post hoc symmetrisation 
fixes
(\cref{eq: traditional symmetrisation}) with layers of 
complex theory\footnote{Such as the probabilistic union averaging with fuzzy logic in Umap.}.
However, with our rigorous metric analysis and 
understanding of the logical flaw, we provide a novel
Riemannian 
theory-justified
methodology 
towards
fixing the issue.
The idea is that linear interpolation of the metric along the edge leads to replacing $\sigma_i$ by the harmonic mean of $\sigma_i$ and $\sigma_j$ (see \cref{th: symmetric geodesic dist interp metric,th: theoretical justified symmetric geodesic neighbour distances} in \cref{sec: fixing symmetric pipeline}). While theoretically grounded, our Riemannian remedy avoiding asymmetry still loses important information compared to an asymmetric perspective: the asymmetric disparities from the sampling itself, e.g.\ from disparate sampling densities. Such important information is lost 
in all symmetric settings 
(see \cref{fig: US cities}).
From now on,
we propose to preserve this information and thus suggest to work with the naturally arising asymmetric dissimilarities $\boldsymbol{D}^\top\neq \boldsymbol{D}$, which requires to drop the Riemannian tools for asymmetric-compatible ones.

\hfill \break
\noindent \textbf{Incompatibility with asymmetry.}
Traditional pipelines assume symmetric dissimilarities $\boldsymbol{D}^\top = \boldsymbol{D}$ to embed data in the symmetric canonical Euclidean space $\mathbb{R}^m$ (\textit{embedding definition} and \textit{optimisation} stages). 
As such, traditional Euclidean embedding methods would simply crash or at best discard
the naturally arising asymmetry, losing the extra information on the discrete sampling distribution.
Recently, \cite{dages2025finsler} proposed 
Finsler geometry 
for asymmetry by replacing
in the \textit{embedding definition}
the Euclidean 
with the asymmetric canonical Randers space $\mathbb{R}^m$.
However, this method was only applicable to 
data with given asymmetry, like digraphs or physical systems with currents,
but not arbitrary datasets traditionally considered symmetric
yet having natural asymmetry as we have discussed
(absent \textit{data construction} stage to get asymmetric dissimilarities).
Also,
this method is slow and numerically unstable
(\textit{optimisation} stage).
Additionally, 
it does not possess the benefits of recent reference symmetric methods, e.g. t-SNE or Umap, providing high speeds, numerical stability, and clustering properties. It is thus desirable to adapt the modern reference techniques to handle asymmetric data.

\section{Asymmetric manifold learning}

In this paper, we suggest to embrace the natural asymmetry arising from traditional pipelines. To do so, we need to drop the reliance on Riemannian geometry and work with its asymmetric generalisation instead: Finsler geometry.

\begin{figure}[t]
    \centering
    \includegraphics[width=\columnwidth]{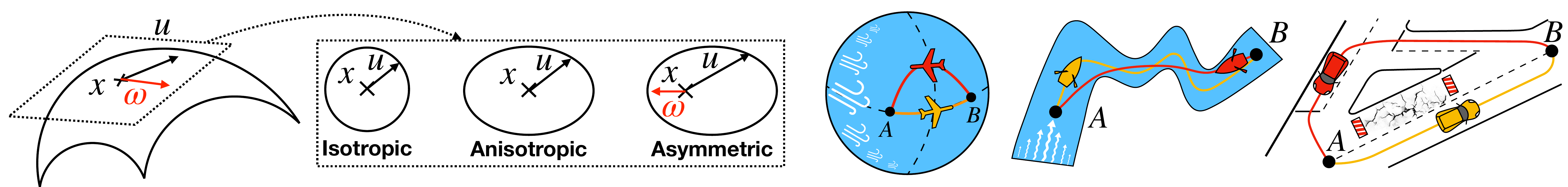}
    \caption{
    Metrics define distances in tangent spaces (left) via their convex unit tangent ball. Riemannian metrics -- whether isotropic or anisotropic -- remain symmetric due to symmetric unit tangent balls. Finsler metrics allow asymmetry, so geodesic paths and distances need not be symmetric (right). Courtesy of \cite{weber2024finsler,dages2025finsler,dages2025metric}.
    }
    \label{fig: Finsler unit tangent balls}
\end{figure}

\hfill \break
\noindent \textbf{Finsler geometry.}
Finsler manifolds \cite{bao2012introduction} generalise Riemannian ones 
\cite{chern1996finsler} and allow asymmetry. Such manifolds $\mathcal{X}$ are equipped with a Finsler metric $F$, 
with
$F_x: \mathcal{T}_x\mathcal{X}\to\mathbb{R}_+$ 
such that $F_x(u)=0$ if and only if $u=0$, it follows the triangular inequality, and is positive-homogeneous, i.e.\ $F_x(\lambda u) = \lambda F_x(u)$ for any $\lambda>0$.
Many families of parametric Finsler metrics exist \cite{javaloyes2011definition,matsumoto1972c,randers1941asymmetrical,matsumoto1989slope,kropina1959projective}. We work with the common \cite{weber2024finsler,dages2025finsler,dages2025metric} Randers metric \cite{randers1941asymmetrical} for simplicity, $F_x(u) = \lVert u\rVert_{M(x)} + \omega(x)^\top u$ with $\lVert \omega(x)\rVert_{M^{-1}(x)}<1$, where the added linearity to a Riemannian part breaks the symmetry. Note that Riemannian metrics have $\omega \equiv 0$.

To embed asymmetric data in $\mathbb{R}^m$, \cite{dages2025finsler} recently proposed to replace the canonical Riemannian metric (Euclidean) with a canonical Randers metric $F^C$ having constant $\omega$, typically chosen along the last dimension, perturbing the Euclidean metric: $F_x^C(u) = \lVert u\rVert_2 + \omega^\top u$.
Note that it becomes Euclidean when $\omega = 0$ and is flat just like it as geodesics are the usual Euclidean segments. However, their lengths depend on the traversal direction, with a simple closed-form expression $d_{F^C}(x,y) = \lVert y-x\rVert_2 + \omega^\top (y-x)$.
Schematically, the canonical Randers space combines a Riemannian hyperplane $\mathbb{R}^{m-1}$ with an orthogonal direction of asymmetry $\omega$. 
For symmetric data, the embedding collapses to the $m-1$ hyperplane. It is thus recommended to compare asymmetric embeddings to the canonical Randers space $\mathbb{R}^{m+1}$ to symmetric embeddings in $\mathbb{R}^m$ \cite{dages2025finsler}.

Canonical Randers space embeddings were found in \cite{dages2025finsler} by replacing the Euclidean distances $q_{ij} = d_{ij}$ by the Finsler ones $q_{ij}^F = d_{F^C}(y_i, y_j) \triangleq d_{ij}^F$, and minimising the MSE loss by the Finsler SMACOF algorithm. However, this algorithm is slow and memory expensive due to reshaping linear systems of equations to $N^2\times N^2$ matrices, with unstable pseudo-inverse calculations in practice (unlike the original SMACOF). Although it theoretically handles asymmetry, it does not use any of the modern manifold learning knowledge for harnessing sparse dissimilarities and fast calculations as in t-SNE or Umap, which also provide scalability and clustering properties. Additionally, \cite{dages2025finsler} did not provide any way to apply their method to traditionally considered symmetric manifolds, e.g.\ image datasets, drastically limiting their application scope to physically asymmetric data like current maps or small digraphs.

\hfill \break
\noindent \textbf{From symmetric to asymmetric manifolds.}
We now return to the traditional manifold learning pipeline and its \textit{data construction}.
Asymmetry naturally arose when combining local Riemannian metric tweaking with approximating geodesic with tangent lengths. Although it is an issue for Riemannian geometry, it naturally fits in a Finsler perspective, which we propose to adopt.
This process in fact equips the data manifold $\mathcal{X}$ with a Finsler metric $F$, where \textit{geodesic} distances between close points are given by the scaled Riemannian \textit{tangent} lengths $d_F(x_i, x_j) = R_{x_i}(x_j - x_i)$. We thus propose to keep the computed asymmetric dissimilarities $p_{ij} = h^p(R_{x_i}(x_j - x_i)) = h_i^p(\lVert x_j - x_i\rVert_2)$ without artificial symmetrisation.
We can optionally extend $p_{ij}$ geodesically on the directed neighbourhood graph if a dense dissimilarity matrix $\boldsymbol{D}$ is needed, otherwise $p_{ij}$ form a sparse $\boldsymbol{D}$.
Should they be symmetric, i.e.\ $R$ is uniform and $h_i^p$ is independent of $i$, then the data manifold $\mathcal{X}$ is equipped with a symmetric metric and 
symmetric Euclidean embeddings should be done.
If not, then we should embed the Finsler manifold $\mathcal{X}$ to an asymmetric Finsler space, e.g.\ the canonical Randers space.
Switching to Finsler metrics not only resolves the theoretical issue with the traditional methodology, its bypassing of symmetrisation preserves valuable natural asymmetric information that encodes the sampling, e.g.\ density disparities.

\hfill \break
\noindent \textbf{Finsler manifold learning.}
We propose to generalise existing symmetric manifold learning methods to asymmetric data and Finsler embeddings while preserving their unique advantages and characteristics (\textit{embedding definition} and \textit{optimisation} stages). Our extension is general and can be applied to most methods, yet we focus as examples on the references t-SNE and Umap.
Given traditional methods with symmetric embedding dissimilarities $q_{ij} = h^q(\lVert x_j - x_i\rVert_2)$ and Riemannian objectives $\mathcal{L}(p,q)$, we propose to generalise them to Finsler methods by embedding into the canonical Randers embedding space with asymmetric embedding dissimilarities $q_{ij}^F = h^q(d_{F^C}(x_j - x_i))$, e.g.\
\begin{equation}
    \label{eq: finsler embedding dissimilarities}
    \small
    q_{ij}^F = \hspace{-1.5em} \underbrace{d_{ij}^F}_{\text{Finsler MDS \cite{dages2025finsler}}} \!\!\! ; \,
    \underbrace{\tfrac{1}{\mathcal{Z}_y^F}\bigg(\!1 + \tfrac{(d_{ij}^F)^2}{\nu}\!\bigg)^{\!-\frac{\nu+1}{2}}
    }_{\text{Finsler t-SNE}} \,;\,
    \underbrace{(1+a (d_{ij}^F)^{2b})^{-1}}_{\text{Finsler Umap}},
\end{equation}
and then minimise the Finsler objective $\mathcal{L}(p, q^F)$, e.g.\
\begin{equation}
    \mathcal{L} = 
    \underbrace{\mathrm{MSE}_w(p, q^F)}_{\text{Finsler MDS \cite{dages2025finsler}}} \; ;\;
    \underbrace{\mathrm{KL}(p\,\Vert\,q^F)
    }_{\text{Finsler t-SNE}} \;;\;
    \underbrace{\mathrm{CE}(p, q^F)
    }_{\text{Finsler Umap}}.
\end{equation}
Analysis of Euclidean objectives yields their minimisation\footnote{Like closed-form spectral solutions (Isomap), advanced optimisation algorithms (SMACOF), or gradient descent and variants (t-SNE, Umap).}.
Similar analysis of Finsler objectives generalises the update rules\footnote{Note that closed-form spectral solutions often do not generalise in the Finsler setting. More expensive iterative alternatives, e.g.\ the Finsler SMACOF or gradient descent procedures, are then necessary (see \cref{sec: Asymmetric Finsler generalisation of spectral methods in manifold learning}).}. 
In t-SNE and Umap, explicit gradients are needed for efficient implementations in optimised code languages.
Remarkably, gradients of canonical Finsler distances also share the same antisymmetry as Euclidean ones (\cref{th: gradient canonical Finsler distances}, \cref{sec: proofs finsler tsne umap}),
suggesting beyond the arguments in \cite{dages2025finsler} that this is the proper generalisation of the Euclidean space. 
Calculations provide explicit gradients (\cref{th: finsler tsne update rule,th: finsler umap update rule}).
Of note, updates are no longer just a combination of (asymmetric) forces along the rays connecting points, but also of external current forces\footnote{This echoes the time dynamics in currents, e.g.\ boats on a river, given by the Zermelo metric, which is a Randers metric \cite{zermelo1931navigationsproblem,shen2003finsler,dages2025finsler}.}. 
The update rules then follow gradient descent with any optional tricks from the Euclidean methods, e.g.\ negative sampling \cite{mikolov2013distributed,tang2016visualizing} for (Finsler) Umap.

\begin{theorem}[Finsler t-SNE]
    \label{th: finsler tsne update rule}
    Let $\mathcal{L} = -\sum_{ij} p_{ij}\ln q_{ij}^F$ be the Finsler t-SNE 
    loss
    with $q_{ij}^F$ from \cref{eq: finsler embedding dissimilarities}. 
    Denoting 
    $t_{ij}^F \!=\! \big(1+\tfrac{(d_{ij}^F)^2}{\nu}\big)^{-1}$
    and $\delta_{pq^F}^{ij} \!=\! p_{ij} - q_{ij}^F$,
    then
    {\small
    \begin{equation*}
        {\frac{\partial \mathcal{L}}{\partial y_i}}\! =  
        \frac{\nu+1}{\nu}\! \sum\limits_j \!\Bigg(\Bigg[\! 
        \delta_{pq^F}^{ij}
        t_{ij}^F \frac{d_{ij}^F}{d_{ij}} + 
        \delta_{pq^F}^{ji}
        t_{ij}^F\frac{d_{ij}^F}{d_{ij}} \!\Bigg]\!(y_i - y_j)
        + \Bigg[ -
        \delta_{pq^F}^{ij}
        t_{ij}^F d_{ij}^F + 
        \delta_{pq^F}^{ji}
        t_{ij}^F d_{ij}^F \!\Bigg]\omega\Bigg).
    \end{equation*}
    }%
\end{theorem}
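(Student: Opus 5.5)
The computation follows the Euclidean t-SNE derivation, with $d_{ij}$ replaced by $d_{ij}^F$. The only new ingredient is the gradient of the canonical Randers distance. The key preliminary fact, which we take from \cref{th: gradient canonical Finsler distances}, is
\[
\frac{\partial d_{ij}^F}{\partial y_i} = \frac{y_i-y_j}{d_{ij}} - \omega ,\qquad
\frac{\partial d_{ji}^F}{\partial y_i} = \frac{y_i-y_j}{d_{ij}} + \omega .
\]
These follow from $d_{ij}^F=\lVert y_j-y_i\rVert_2+\omega^\top(y_j-y_i)$ and $d_{ji}^F=\lVert y_j-y_i\rVert_2+\omega^\top(y_i-y_j)$, using the symmetric Euclidean part $d_{ij}=d_{ji}$.

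First, I write $q_{ij}^F = w_{ij}/\mathcal{Z}_y^F$ with $w_{ij}=(t_{ij}^F)^{(\nu+1)/2}$ and $\mathcal{Z}_y^F=\sum_{k\neq l} w_{kl}$. The loss then splits as
\[
\mathcal{L} = -\sum_{kl}p_{kl}\ln w_{kl} + \Big(\sum_{kl}p_{kl}\Big)\ln \mathcal{Z}_y^F .
\]
I take $\sum p_{kl}=1$, as in t-SNE where $p$ is a normalised distribution. If $p$ is instead row-normalised, the constant $N$ would appear; I will assume the global normalisation.

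Differentiating gives
\[
\frac{\partial \mathcal{L}}{\partial y_i} = \sum_{kl}(q_{kl}^F - p_{kl})\frac{\partial \ln w_{kl}}{\partial y_i}.
\]
Only the terms with $k=i$ or $l=i$ depend on $y_i$, namely the pairs $(i,j)$ and $(j,i)$.

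Next I compute
\[
\frac{\partial \ln w_{kl}}{\partial d_{kl}^F} = -\frac{\nu+1}{2}\cdot\frac{2d_{kl}^F/\nu}{1+(d_{kl}^F)^2/\nu} = -\frac{\nu+1}{\nu}\,t_{kl}^F d_{kl}^F .
\]
This gives
\[
\frac{\partial \mathcal{L}}{\partial y_i} = \frac{\nu+1}{\nu}\sum_j\Big[\delta^{ij}_{pq^F} t_{ij}^F d_{ij}^F \frac{\partial d_{ij}^F}{\partial y_i} + \delta^{ji}_{pq^F} t_{ji}^F d_{ji}^F \frac{\partial d_{ji}^F}{\partial y_i}\Big].
\]
Substituting the two gradients above and grouping the terms along $y_i-y_j$ and along $\omega$ yields the claimed structure: a radial term with coefficient $\frac{1}{d_{ij}}(\cdot)$ and a current term with opposite signs for $ij$ and $ji$.

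The main obstacle is bookkeeping in the reverse-direction term. It naturally carries $t_{ji}^F d_{ji}^F$, whereas the statement writes $t_{ij}^F d_{ij}^F$, and these differ in general because $d_{ji}^F\neq d_{ij}^F$. I would check this carefully. The honest derivation gives $t_{ji}^F d_{ji}^F$ in the $\delta^{ji}$ terms. So I would either present the formula with those indices, interpreting the statement's second index pair as a notational slip, or verify whether some convention for $t_{ij}^F$ makes the two coincide. I expect the former.

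The remaining care points are the normalisation constant, which requires $\sum_{kl}p_{kl}=1$, and the differentiability of $d_{ij}$, which requires $y_i\neq y_j$.
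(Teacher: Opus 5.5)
Your proposal is correct and follows essentially the same route as the paper: the chain rule through $d_{ij}^F$ and $d_{ji}^F$, the gradient lemma for the canonical Randers distance, and the usual t-SNE normalisation step using $\sum_{kl}p_{kl}=1$ (the paper also assumes this). Differentiating $\ln w_{kl}$ directly in $y_i$ instead of first computing $\partial\mathcal{L}/\partial d_{ij}^F$ is only a repackaging of the same computation.

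Your suspicion about the indices is also right. The paper's own appendix derivation ends with $t_{ji}^F$ and $d_{F^C}(y_j,y_i)$ in the $\delta_{pq^F}^{ji}$ terms, so the $t_{ij}^F d_{ij}^F$ written there in the statement is a notational slip. It is not harmless: since $d_{ij}^F-d_{ji}^F=2\omega^\top(y_j-y_i)$, the two versions differ as soon as $\omega^\top(y_j-y_i)\neq 0$.
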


\begin{theorem}[Finsler Umap]
    \label{th: finsler umap update rule}
    Let $\mathcal{L} = -\sum p_{ij}\ln q_{ij}^F + (1-p_{ij})\ln(1-q_{ij}^F)$ be the Finsler Umap 
    loss
    with
    $q_{ij}^F$
    from 
    \cref{eq: finsler embedding dissimilarities}. 
    Denoting $c_{ij}^a = -\ln q_{ij}^F$ and $c_{ij}^r = -\ln (1-q_{ij}^F)$ to be the asymmetric attractive and repulsive forces 
    and $b' = 2b-1$,
    then
    $\tfrac{\partial c_{ij}^a}{\partial y_i}= -\tfrac{\partial c_{ij}^a}{\partial y_j}$ and $\tfrac{\partial c_{ij}^r}{\partial y_i} = - \tfrac{\partial c_{ij}^r}{\partial y_j}$ with
    {\small
    \begin{align*}
        \frac{\partial c_{ij}^a}{\partial y_i} \!=\! 2ab \frac{(d_{ij}^F)^{b'}}{d_{ij}} q_{ij}^F (y_i - y_j) - 2ab (d_{ij}^F)^{b'} q_{ij}^F\omega
        \text{ and }
        \frac{\partial c_{ij}^r}{\partial y_i} = \frac{-2b q_{ij}^F}{d_{ij} d_{ij}^F} (y_i - y_j) + \frac{2b q_{ij}^F}{d_{ij}^F} \omega.
    \end{align*}
    }%
\end{theorem}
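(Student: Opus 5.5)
The plan is a direct chain-rule computation. The key point is that both $c_{ij}^a$ and $c_{ij}^r$ depend on the embedding only through the scalar $s \triangleq d_{ij}^F = d_{F^C}(y_i,y_j) = \lVert y_j - y_i\rVert_2 + \omega^\top(y_j-y_i)$. So I will first compute the gradient of the canonical Randers distance with respect to each endpoint, then the scalar derivatives $\tfrac{d c^a}{ds}$ and $\tfrac{d c^r}{ds}$, and finally multiply.

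\emph{Step 1 (gradient of the canonical Finsler distance).} For $y_i \neq y_j$, write $d_{ij} = \lVert y_j - y_i\rVert_2$. Differentiating the closed form gives
\begin{equation*}
\frac{\partial d_{ij}^F}{\partial y_i} = \frac{y_i - y_j}{d_{ij}} - \omega, \qquad \frac{\partial d_{ij}^F}{\partial y_j} = \frac{y_j - y_i}{d_{ij}} + \omega = -\frac{\partial d_{ij}^F}{\partial y_i}.
\end{equation*}
This antisymmetry is the content of \cref{th: gradient canonical Finsler distances}, which I may invoke directly. Since $c_{ij}^a$ and $c_{ij}^r$ are functions of $d_{ij}^F$ alone, the chain rule transfers the antisymmetry immediately, giving $\tfrac{\partial c_{ij}^a}{\partial y_j} = -\tfrac{\partial c_{ij}^a}{\partial y_i}$ and likewise for $c_{ij}^r$. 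It also remains to check that $s>0$ whenever $y_i\neq y_j$. This holds because $\lVert\omega\rVert_2<1$ gives $s \ge (1-\lVert\omega\rVert_2)\,d_{ij} > 0$, so powers $s^{b'}$ and the term $1/s$ are well defined.

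\emph{Step 2 (scalar derivatives).} With $q = (1+a s^{2b})^{-1}$ we have $c^a = \ln(1+a s^{2b})$, hence
\begin{equation*}
\frac{dc^a}{ds} = \frac{2ab\,s^{2b-1}}{1+a s^{2b}} = 2ab\, s^{b'} q .
\end{equation*}
For the repulsive term, $1-q = a s^{2b}q$, so $c^r = -\ln a - 2b\ln s + \ln(1+a s^{2b})$. Therefore
\begin{equation*}
\frac{dc^r}{ds} = -\frac{2b}{s} + 2ab\,s^{2b-1}q = \frac{2b}{s}\big(a s^{2b} q - (1+a s^{2b})q\big) = -\frac{2b\,q}{s}.
\end{equation*}
The simplification uses $(1+as^{2b})q=1$.

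\emph{Step 3 (assemble).} Multiplying each scalar derivative by $\tfrac{\partial d_{ij}^F}{\partial y_i} = \tfrac{y_i-y_j}{d_{ij}} - \omega$ yields exactly the two displayed formulas. The attractive gradient is $2ab\tfrac{(d_{ij}^F)^{b'}}{d_{ij}}q_{ij}^F(y_i-y_j) - 2ab(d_{ij}^F)^{b'}q_{ij}^F\omega$. The repulsive gradient is $-\tfrac{2bq_{ij}^F}{d_{ij}d_{ij}^F}(y_i-y_j) + \tfrac{2bq_{ij}^F}{d_{ij}^F}\omega$.

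The only step needing care is the algebraic simplification of $\tfrac{dc^r}{ds}$ via $1-q = as^{2b}q$, together with the regularity caveat at coincident points $y_i=y_j$. I would handle that caveat as in Euclidean Umap, by restricting to $y_i\neq y_j$, where the Finsler distance is smooth.
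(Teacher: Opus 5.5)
Your proposal is correct and follows essentially the same route as the paper: a chain rule through the scalar $d_{ij}^F$, combined with the gradient $\tfrac{\partial d_{ij}^F}{\partial y_i} = \tfrac{y_i-y_j}{d_{ij}} - \omega$ of the canonical Randers distance. Your log-decomposition of $c_{ij}^r$ is just a tidier form of the paper's simplification via $(q_{ij}^F)^{-1}-1 = a(d_{ij}^F)^{2b}$, and you obtain the antisymmetry directly from the chain rule instead of repeating the calculation in $y_j$ as the paper does.
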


\begin{proof}
    We prove \Cref{th: finsler tsne update rule,th: finsler umap update rule} in \cref{sec: proofs finsler tsne umap}.
\end{proof}

\hfill\break
\noindent \textbf{Combining Finsler data and embedding perspectives.}
For any dataset, we can combine both our asymmetric understanding of the input data manifold, with extracted asymmetric dissimilarities $\boldsymbol{D}\neq \boldsymbol{D}^\top$, and our generalisation of existing manifold learning methods to Finsler embeddings. To the best of our knowledge, we are the first to provide this perspective. In particular, \cite{dages2025finsler} proposed a Finsler embedding method for given asymmetric data, like physical systems or digraphs, but did not propose ways to find asymmetry on arbitrary datasets, like collections of images.

\section{Experiments}

We provide qualitative and quantitative experiments to showcase our approach. Full details -- data, implementation, results -- are 
in
the appendix (\cref{sec: implementation details,sec: further experiments}).

\subsection{Synthetic datasets}

As high-dimensional data is often sampled from unknown incomprehensible manifolds, evaluating embeddings is non-trivial. It is standard in the field to test new methods on synthetic data whose underlying manifold is known and understood. Such mandatory experiments offer fundamental insight into the quality of embeddings and how well they preserve manifolds. We test several synthetic setups.

\subsubsection{Planar manifold.} \textit{Data.}
We use $N=300$ points on the unit Euclidean disk in $\mathbb{R}^2$, sampled on a uniform polar grid. In Cartesian coordinates this yields non-uniform density (denser at the centre).

\hfill\break
\noindent\textit{Methods.}
We embed with symmetric baselines (Isomap, t-SNE, Umap) in $\mathbb{R}^m$ with $m=2$, matching the manifold dimension of $\mathcal{X}$. Following \cite{dages2025finsler}, Finsler embeddings use the canonical Randers space $\mathbb{R}^{m+1}=\mathbb{R}^3$ with an extra coordinate to encode asymmetry. To reveal density disparities, we use methods in our asymmetric pipeline (Finsler MDS, Finsler t-SNE, and Finsler Umap). 
Note that without our pipeline, the original Finsler MDS \cite{dages2025finsler} would not run here as they did not provide ways to compute asymmetric dissimilarities on such data.
We also run the original Finsler SMACOF and a new solution with gradient-descent (GD) on the Finsler stress, which scales to larger $N$ and avoids pseudo-inverse instabilities we found in Finsler SMACOF.

\hfill\break
\noindent\textit{Results.}
Although the underlying manifold is the Euclidean disk, asymmetric density-dependent dissimilarities $p_{ij}$ endow it with a Finsler metric: higher local densities (lower $\sigma_i$) increase $p_{ij}$, so going from dense to sparse regions costs more than the reverse. Embedding with our asymmetric $p_{ij}$ in the canonical Randers space exposes this (see \cref{fig: toy data single peak n 300}): high-density areas are mapped to lower $z$ than sparse ones, while the disk geometry is preserved in top-down views along $xy$ hyperplanes orthogonal to $\omega$ (upwards $z$ axis). This density information is lost in symmetric embeddings. Isomap preserves the disk but does not explicitly reveal density variation, while the references t-SNE and Umap poorly represent the data. 
We rule out an artifact from the extra Randers coordinate by giving Euclidean (resp.\ Finsler) methods an extra (resp.\ minus) embedding dimension (\cref{sec: toy planar data embedding dim extra minus}). 
At larger $N$ (\cref{sec: toy planar data number of datapoints}), we see limits of Finsler MDS, the local asymmetric nature of purely local methods (Finsler Umap), and confirm that t-SNE outperforms Umap but is slower (Euclidean and Finsler).

\begin{figure}[t]
    \captionsetup[subfigure]{labelformat=empty,justification=centering,position=top}
    \centering
    \includegraphics[width=\columnwidth]{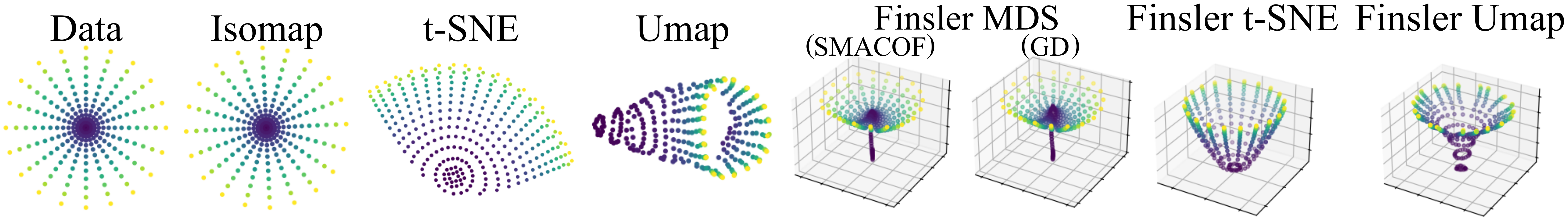}%
    \caption{
    Toy planar data with non-uniform density, embedded with symmetric baselines and our Finsler methods using asymmetric dissimilarities. Variation in the $z$ coordinate reveals asymmetric distances that quantitatively encode density differences, while a top view ($xy$ only) preserves the manifold as in Isomap.
    }
    \label{fig: toy data single peak n 300}
\end{figure}

\subsubsection{Curved manifold.} \textit{Data.}
The data is $N\approx2000$ points sampled on the Swiss roll: an intrinsically flat 2D manifold embedded in $\mathbb{R}^3$ with extrinsic curvature. Points are obtained by regular sampling of the unit square, then mapped to the rolled surface. This yields higher sample density where mean curvature is larger (toward the roll’s inside). See \cref{sec: data swiss roll} for details.

\hfill\break
\noindent\textit{Methods.}
We embed the 
Swiss roll with Isomap, t-SNE, and Umap (resp.\ Finsler MDS-GD, t-SNE, and Umap) in $\mathbb{R}^2$ (resp.\ $\mathbb{R}^3$). To enhance manifold preservation in (Finsler) Umap, we geodesically extend distances $d_{ij}$ in the kNN graph 
($k=15$)
and 
keep
the 50 smallest edges per node to 
get
a sparse graph.

\hfill\break
\noindent\textit{Results.}
Our Finsler embeddings visually expose the density asymmetry: denser regions map lower than sparse ones (see \cref{fig: toy data swiss roll}). In contrast, Euclidean methods do not reveal it, whether preserving the manifold (Isomap) or not (t-SNE, Umap). Finsler Umap highlights the asymmetry more than Finsler MDS and t-SNE, whose embeddings are near-flat and almost orthogonal to the $z$ axis of asymmetry. Thus, our approach not only recovers the band-like structure in top-down views similar to Euclidean mappings but also adds density-based information lost in Euclidean methods.

\begin{figure}[t]
  \centering
  \setlength{\tabcolsep}{2pt}
  \resizebox{\textwidth}{!}{%
      \begin{tabular}{@{\hspace{-5pt}}c c c c c c c c}
    
        \begin{tabular}[t]{@{}c@{}}
          \scriptsize Data\\[0.25em]
          \adjincludegraphics[width=0.12\textwidth,
            trim={{0.1\width} {0.1\width} {0.1\width} {0.1\width}},clip,
            ]{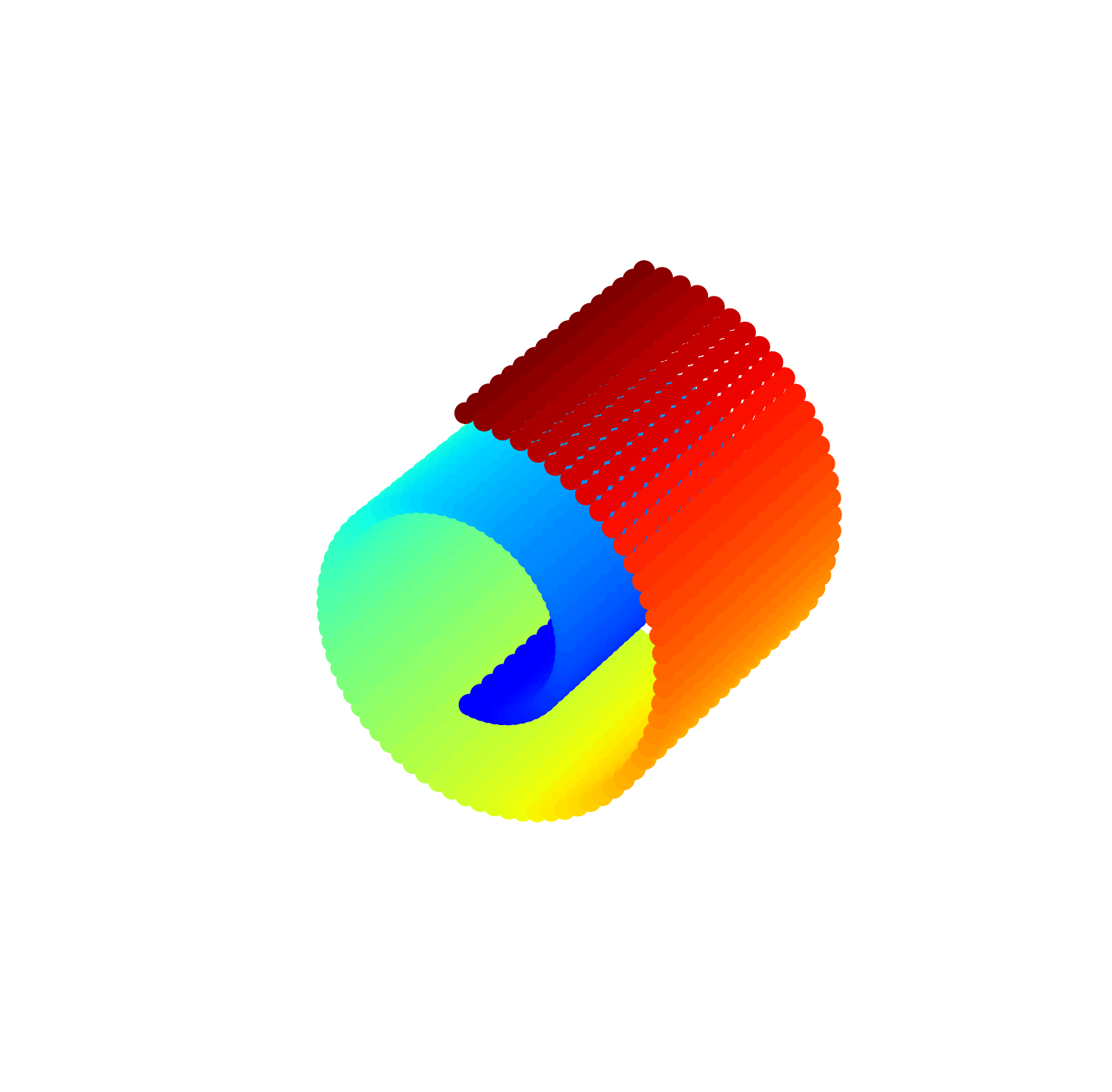}
        \end{tabular} 
        &
    
        \begin{tabular}[t]{@{}c@{}}
          \scriptsize Isomap\\[0.25em]
          \adjincludegraphics[width=0.12\textwidth,
            trim={{0.1\width} {0.1\width} {0.1\width} {0.1\width}},clip,
            ]{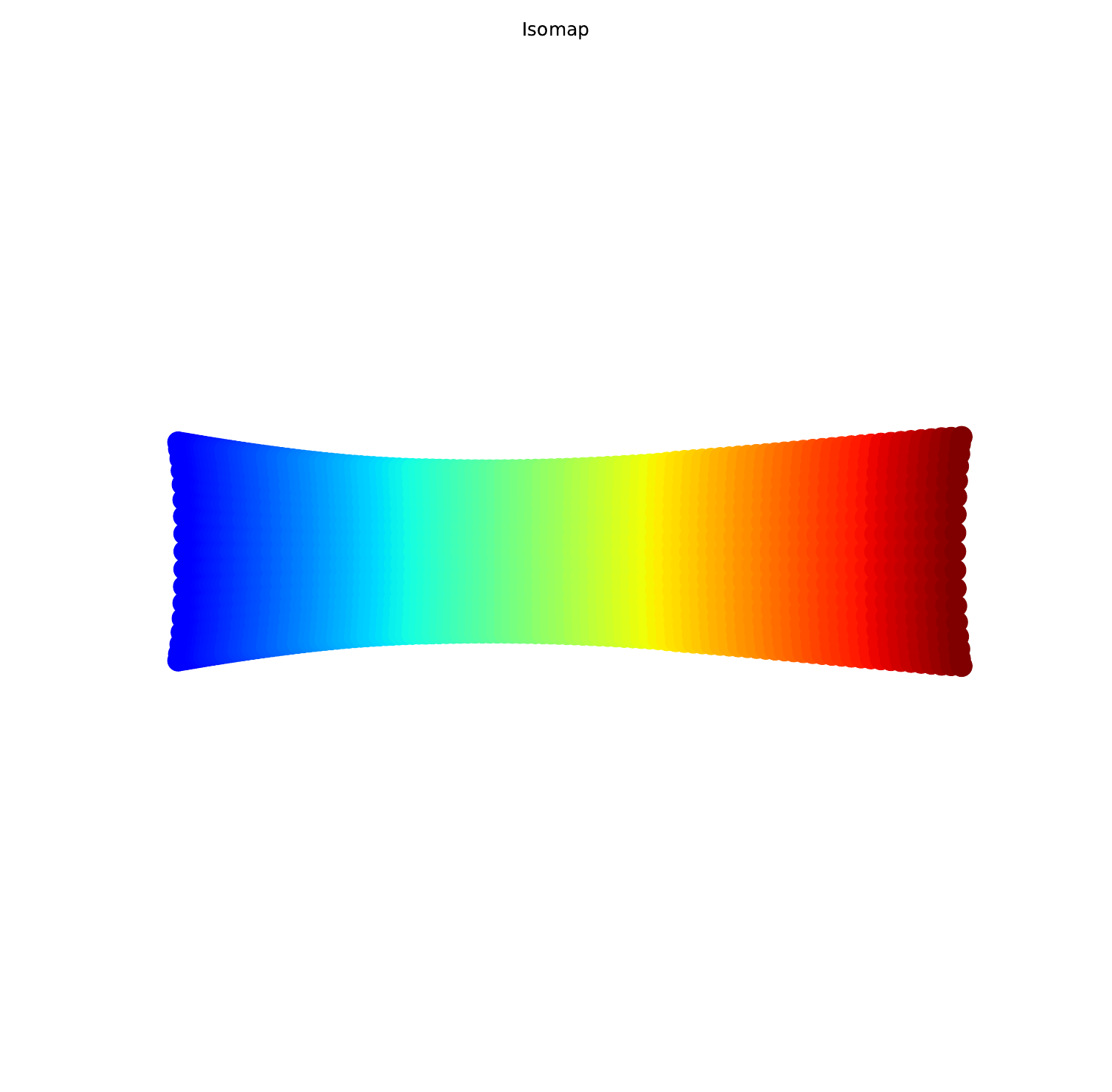}
        \end{tabular} &
    
        \begin{tabular}[t]{@{}c@{}}
          \scriptsize t-SNE\\[0.25em]
          \adjincludegraphics[width=0.09\textwidth,
            trim={{0.24\width} {0.13\width} {0.21\width} {0.13\width}},clip,
            ]{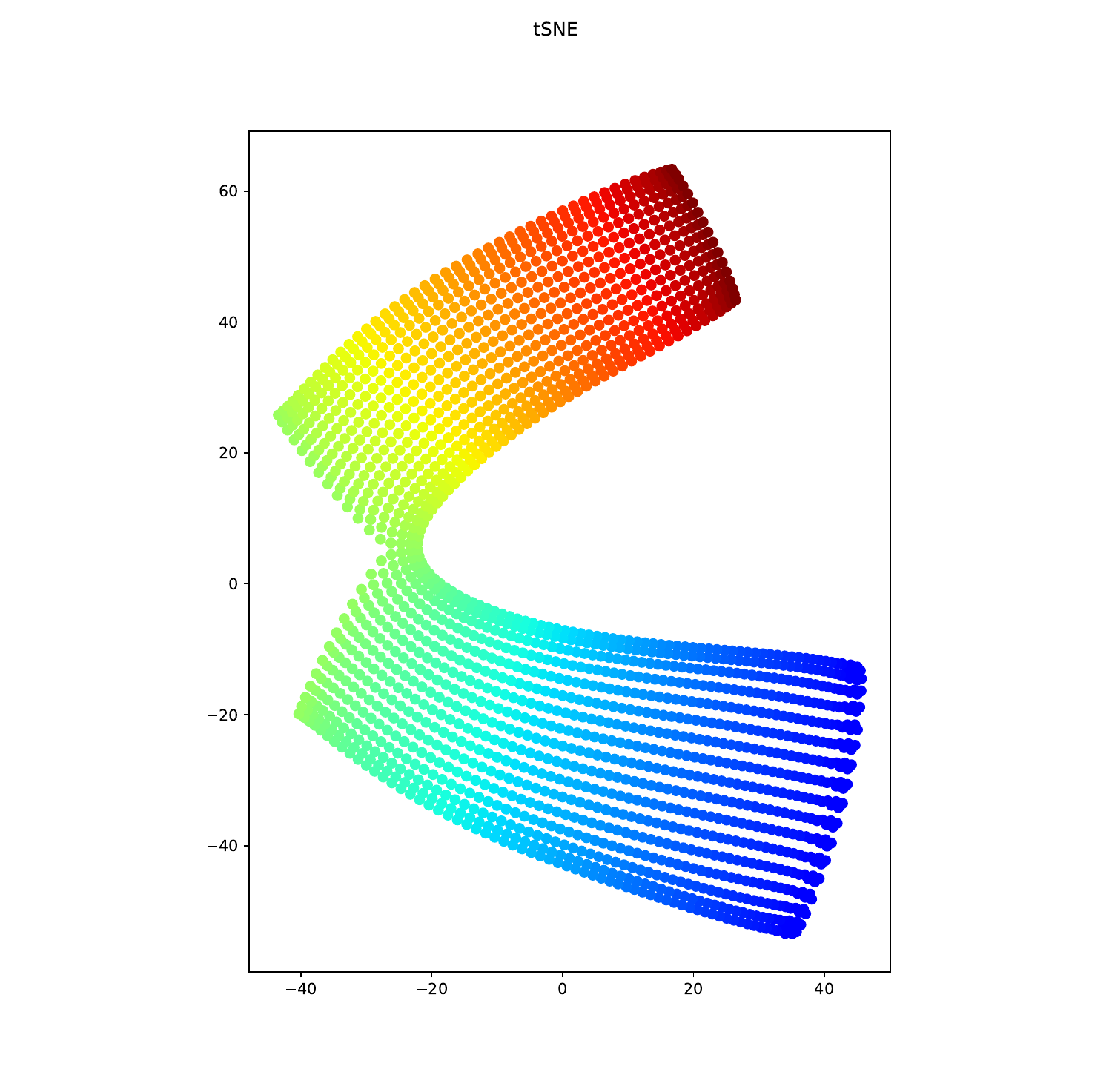}
        \end{tabular} &
    
        \begin{tabular}[t]{@{}c@{}}
          \scriptsize Umap\\[0.25em]
          \adjincludegraphics[width=0.105\textwidth,
            trim={{0.19\width} {0.13\width} {0.17\width} {0.14\width}},clip,
            ]{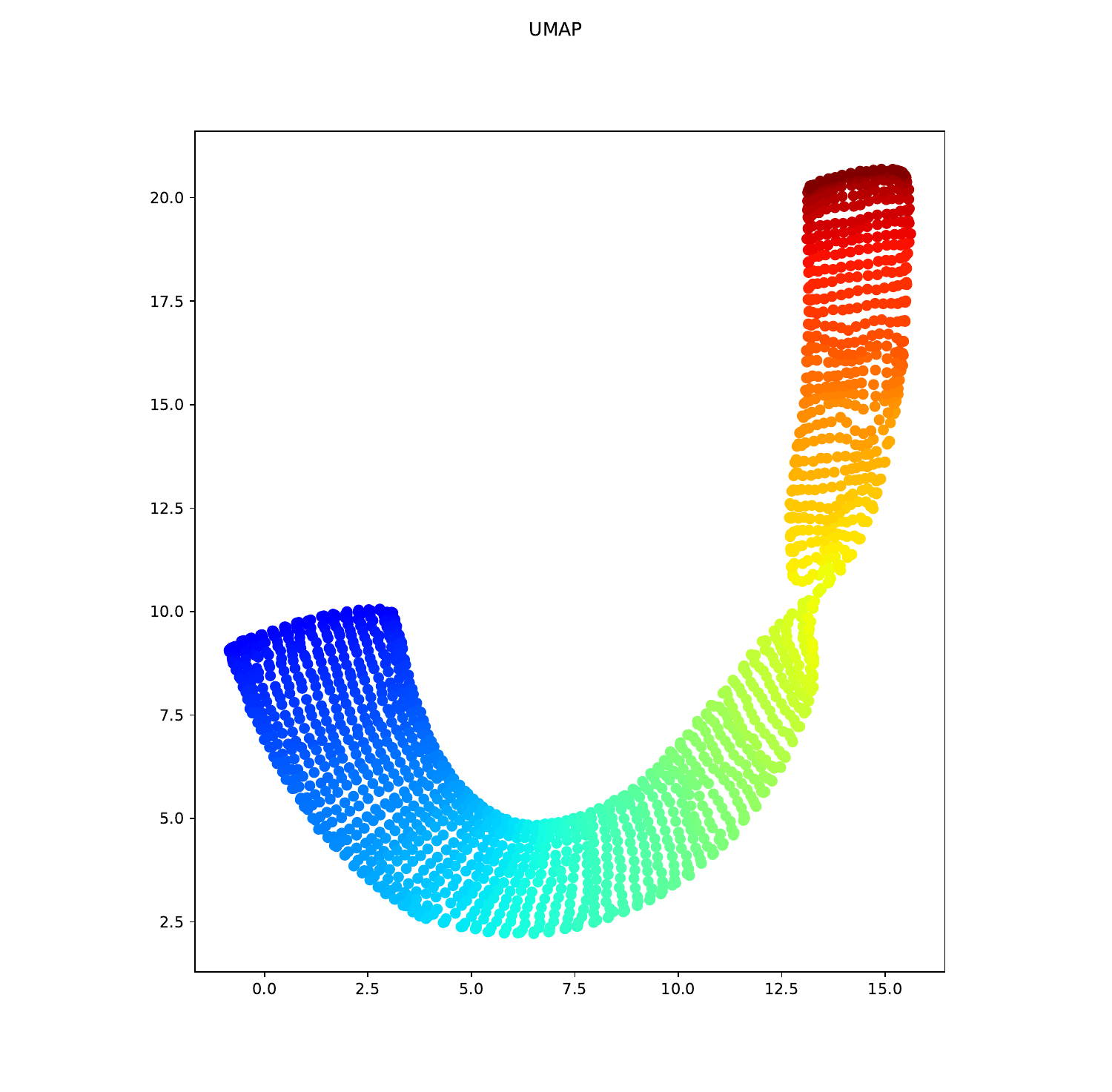}
        \end{tabular}
    
        &
    
        &
    
        \begin{tabular}[t]{@{}c@{}}
          \scriptsize Finsler MDS\\[0.25em]
          \adjincludegraphics[width=0.09\textwidth,
            trim={{0.15\width} {0.33\width} {0.04\width} {0.35\width}},clip,
            ]{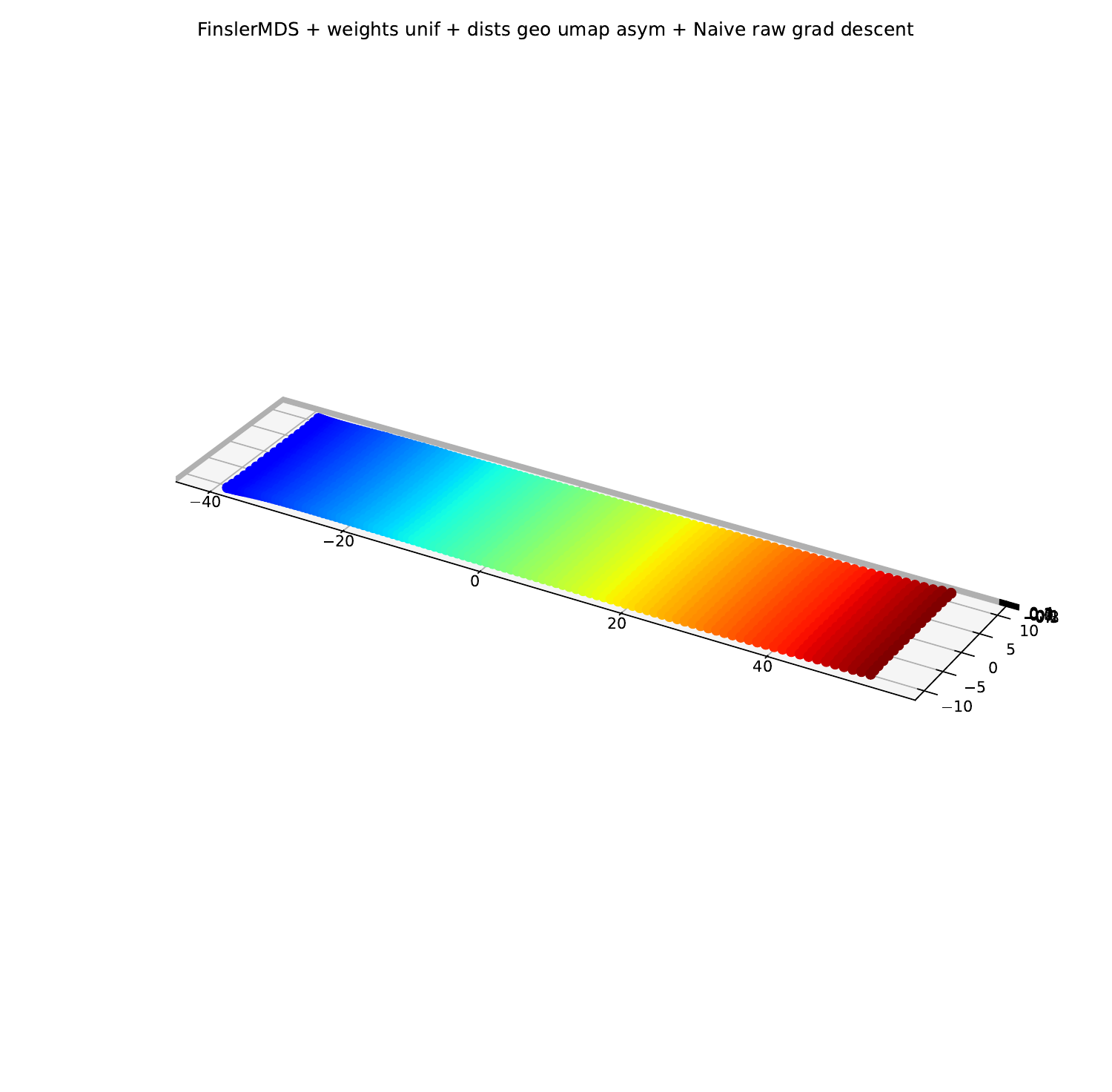} \\[-0.6em]
          \adjincludegraphics[width=0.12\textwidth,
            trim={{0.12\width} {0.45\width} {0.04\width} {0.45\width}},clip,
            ]{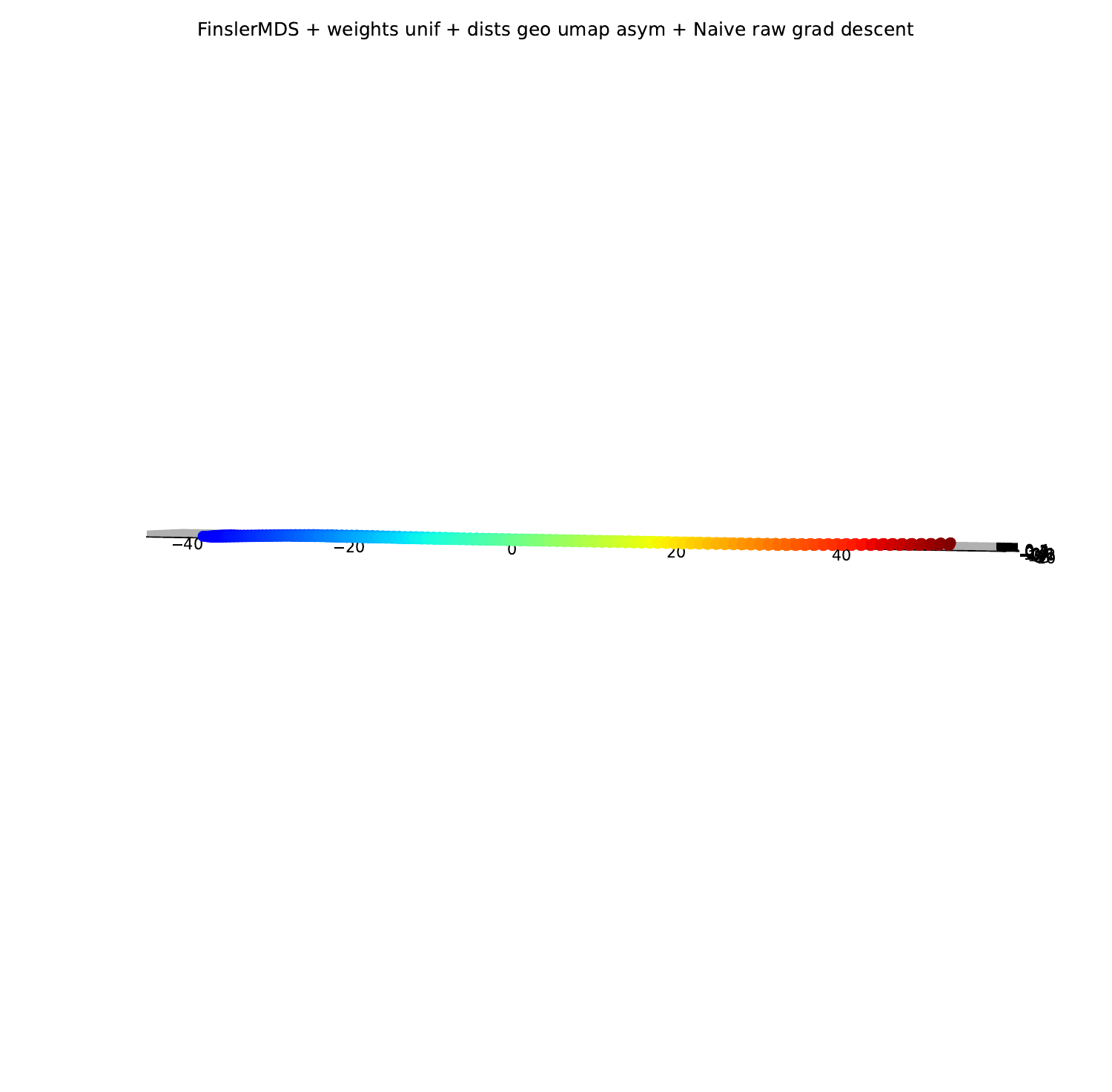} \\[-0.3em]
          \adjincludegraphics[width=0.09\textwidth,
            trim={{0.2\width} {0.23\width} {0.15\width} {0.27\width}},clip,
            ]{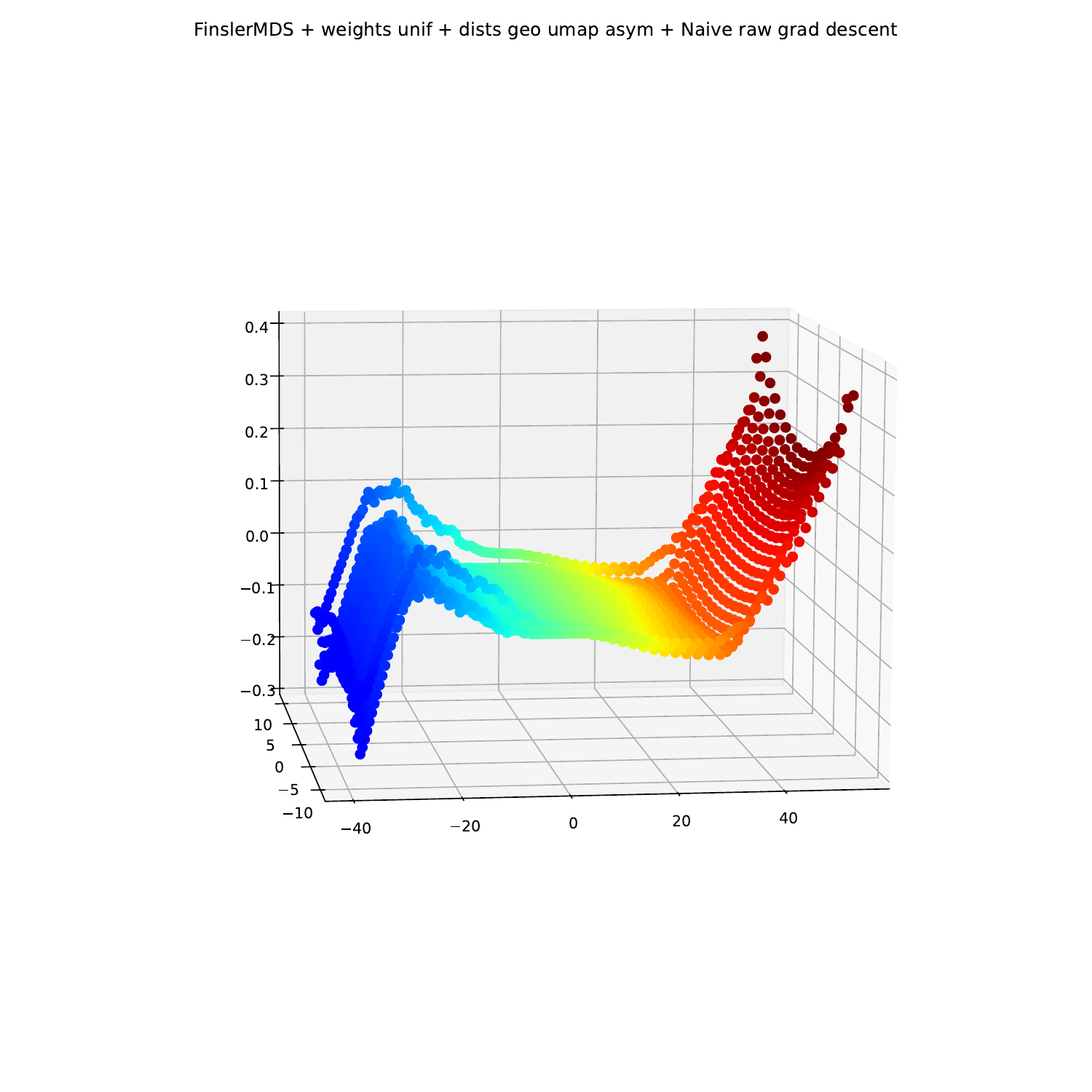}
        \end{tabular} &
    
        \begin{tabular}[t]{@{}c@{}}
          \scriptsize Finsler t-SNE\\[0.25em]
          \adjincludegraphics[width=0.07\textwidth,
            trim={{0.14\width} {0.25\width} {0.08\width} {0.28\width}},clip,
            ]{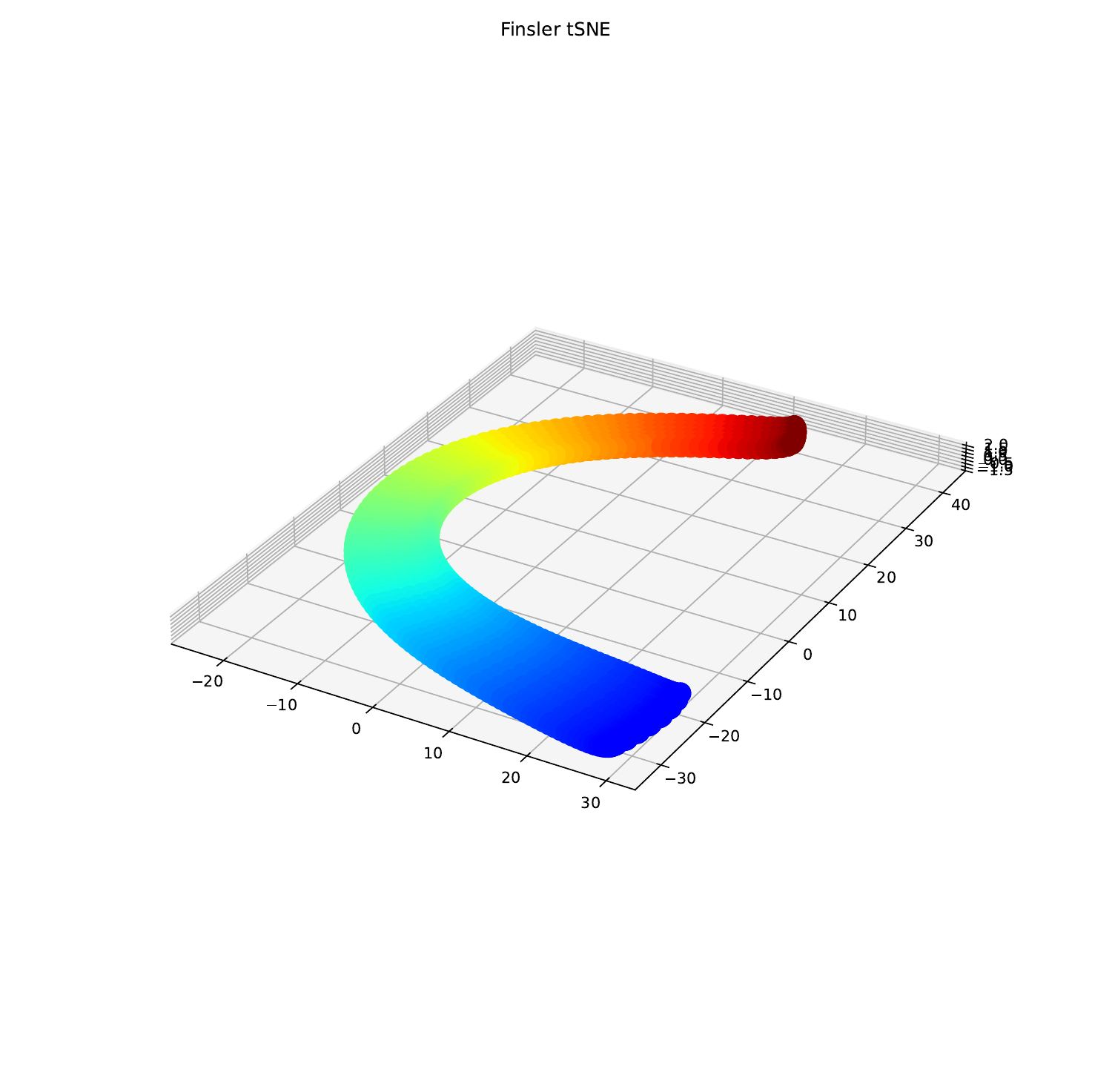} \\[-0.4em]
          \adjincludegraphics[width=0.12\textwidth,
            trim={{0.2\width} {0.45\width} {0.15\width} {0.45\width}},clip,
            ]{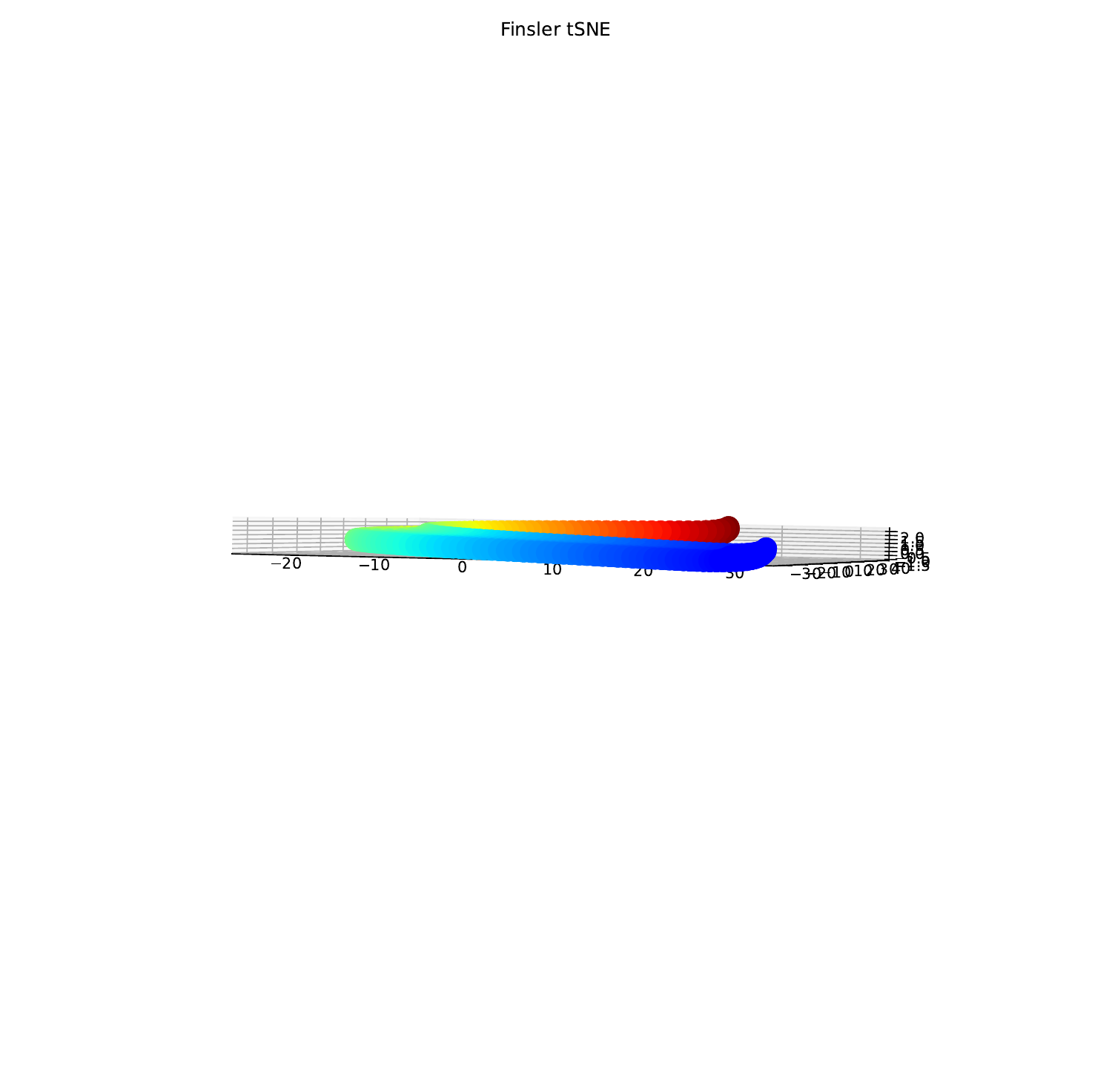} \\[-0.3em]
          \adjincludegraphics[width=0.09\textwidth,
            trim={{0.17\width} {0.23\width} {0.08\width} {0.27\width}},clip,
            ]{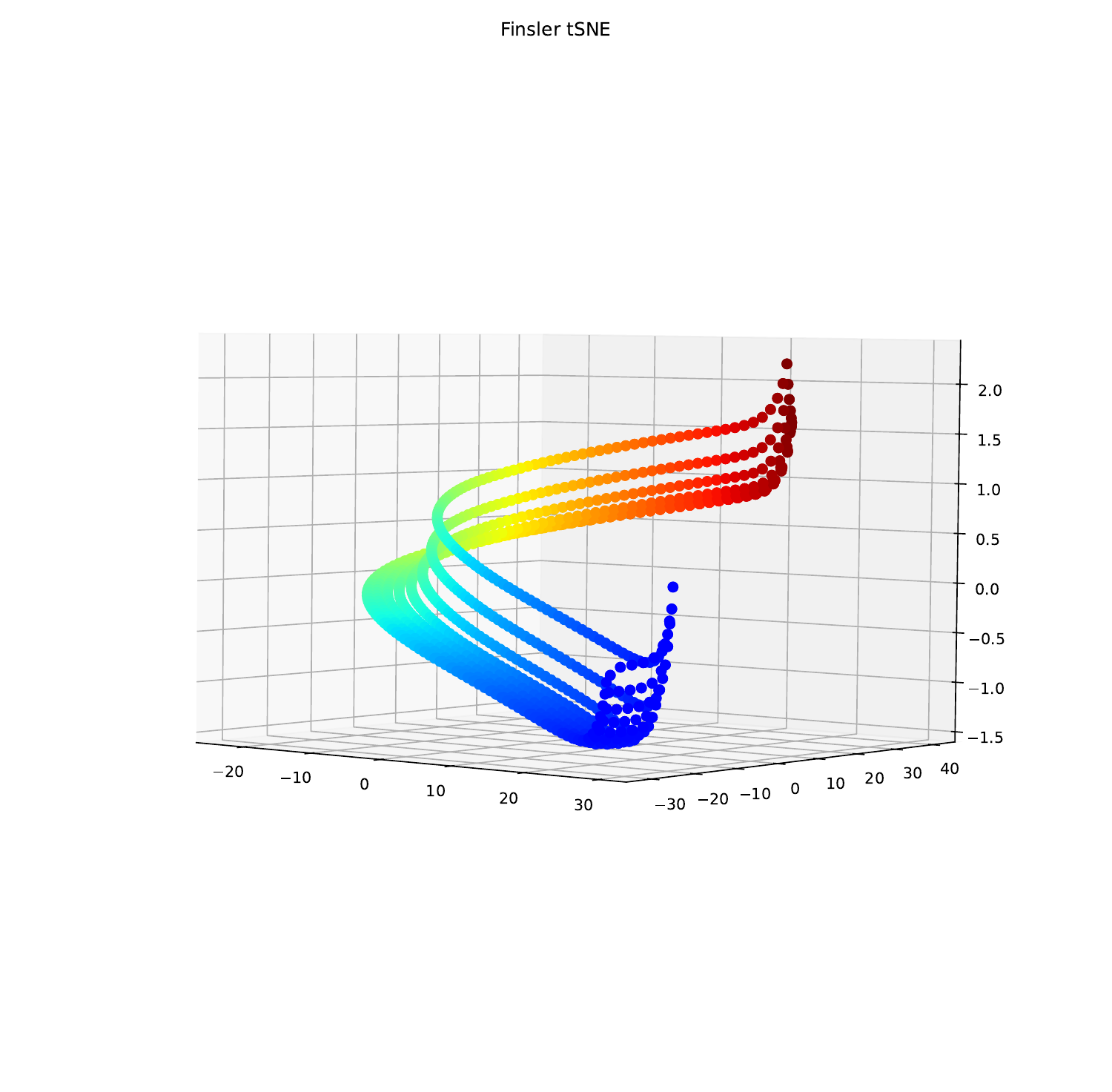}
        \end{tabular} &
    
        \begin{tabular}[t]{@{}c@{}}
          \scriptsize Finsler Umap\\[0.25em]
          \adjincludegraphics[width=0.055\textwidth,
            trim={{0.35\width} {0.08\width} {0.3\width} {0.1\width}},clip,
            ]{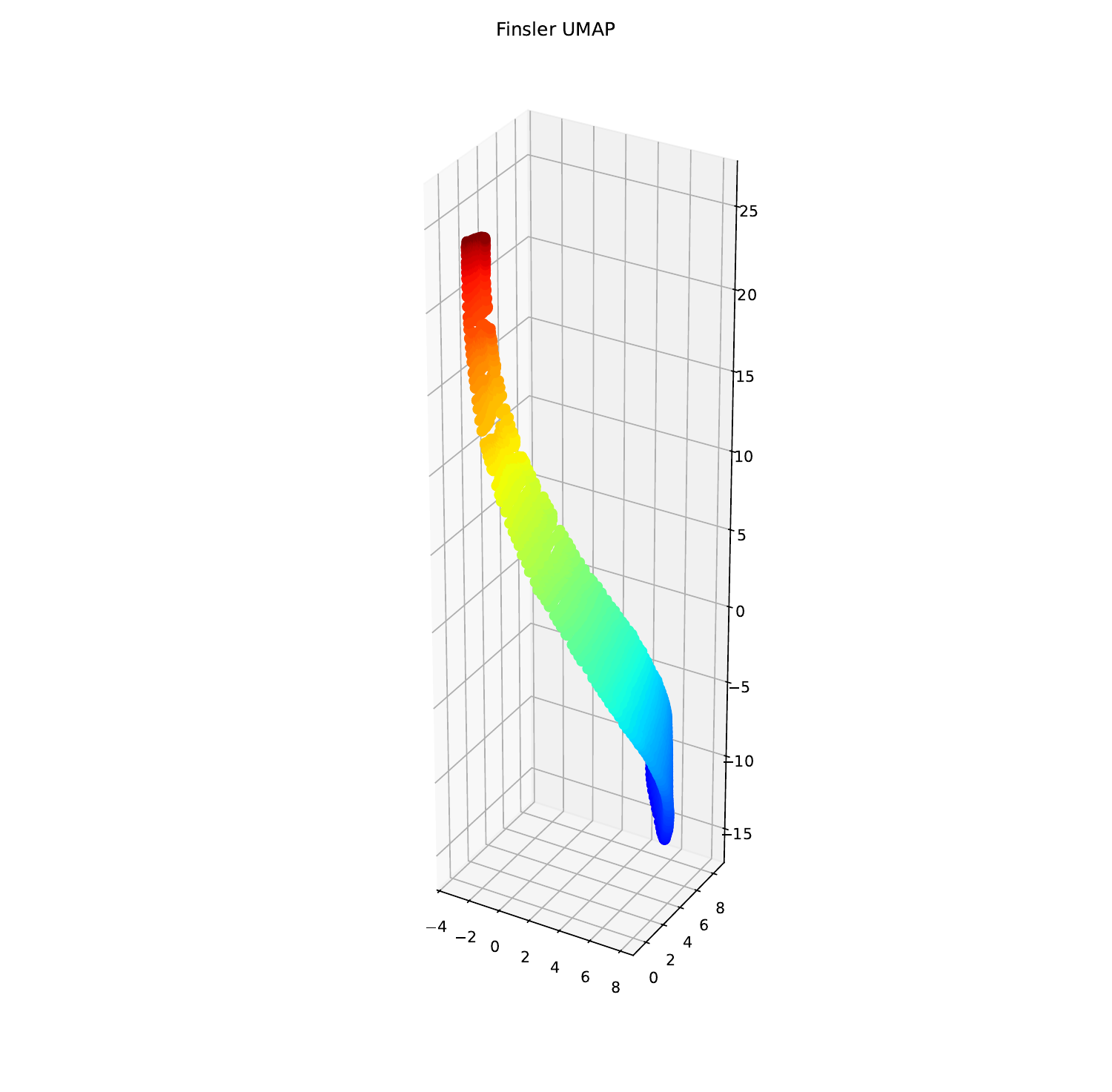}
        \end{tabular}
    
      \end{tabular}
  }%
  \caption{
  Swiss roll embeddings. Euclidean baselines aim only to preserve the manifold, with debatable success. Our asymmetric Finsler embeddings additionally reveal density asymmetry: dense points (blue) are embedded lower than sparse ones (red). For Finsler MDS and t-SNE we show, top to bottom, the embedding with equal axes, a rotated side view, and an unequal-axes view that magnifies height differences.
  }
  \label{fig: toy data swiss roll}
\end{figure}

\begin{figure}[t]
    \centering
    \includegraphics[width=\textwidth]{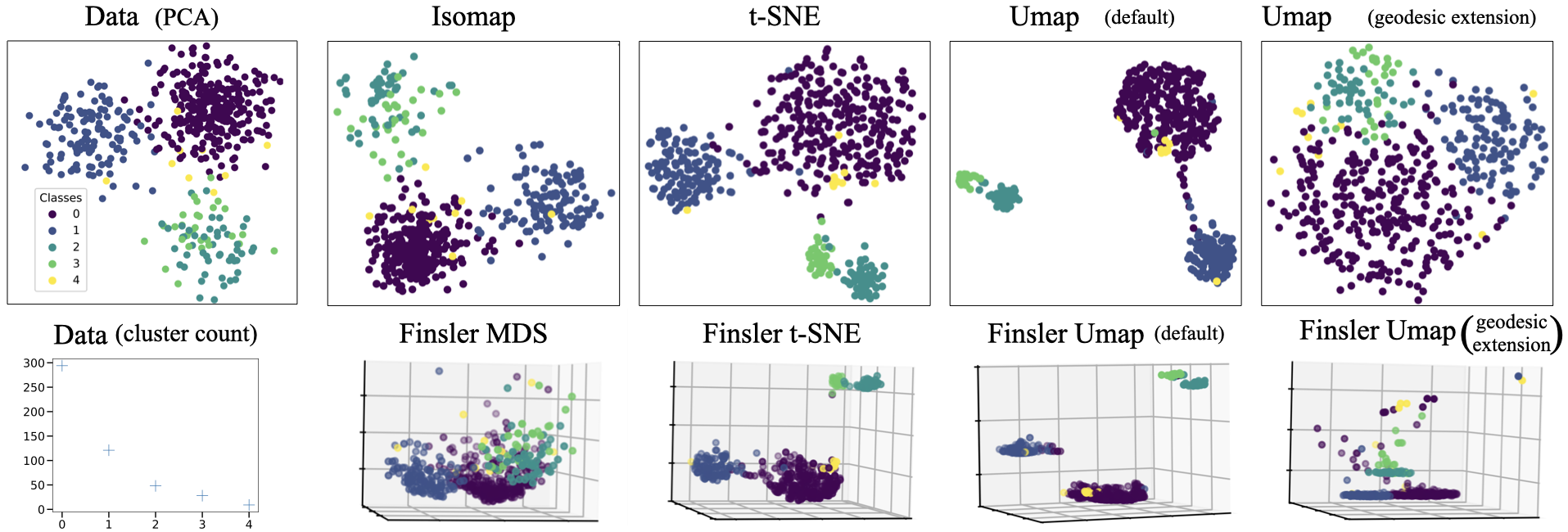}%
    \caption{
    Our Finsler approach clusters and reveals cluster hierarchy, as sparser clusters are embedded higher. 
    This is shown either locally (Finsler t-SNE, default Finsler Umap) or globally  (Finsler MDS, extended Finsler Umap), 
    and
    is absent in Euclidean methods.
    }
    \label{fig: persistence summary}
\end{figure}

\hfill\break\textit{Clustered manifold.}
In \cref{fig: persistence summary}, we demonstrate that our method can not only
accurately cluster high-dimensional data, it also provides a semantic hierarchy
between clusters based on their density.
See \cref{sec: clustered manifold toy exp} for full details.

\subsection{Real datasets}

\subsubsection{US Cities.}
As a warm-up, we embed $N=2000$ mainland US cities (lat--lon) from US Zip codes \cite{simplemaps2026uszips} (\cref{fig: US cities}).  
Although altitude is absent from input coordinates, it biases city density and is thus encoded in asymmetric relationships in city densities.
Symmetrising discards this information. As expected, symmetric Isomap, t-SNE (Euclidean), Poincar\'e maps (Hyperbolic) \cite{klimovskaia2020poincare} miss it, whereas asymmetric models -- radius-distance, slide-vector, Finsler MDS, our Finsler t-SNE --, enabled here thanks to our \textit{data construction}, preserve and reveal it.

\subsubsection{Classification datasets.}
The standard manifold learning test evaluates label alignment in unsupervised embeddings of classification datasets. Methods favouring clustering (t-SNE, Umap) over manifold preservation (classical MDS) tend to produce label-aligned clusters. Quality scores are estimated by clustering the embedding, e.g.\ kMeans \cite{steinhaus1956division,lloyd1982least,mcqueen1967some,forgy1965cluster}, and measuring overlap with groundtruth labels. Label-agnostic scores also exist to see if clusters are well-behaved, but nice-looking cluster shapes are unrelated to the data manifold, making them unreliable as embedding quality measurements.

\hfill \break
\noindent \textit{Data.}
We evaluate 16 
reference
benchmarks: 1 tabular dataset (Iris) and 15 image datasets spanning
various
resolution, colour, size, and complexity (MNIST, Fashion-MNIST, Kuzushiji-MNIST, EMNIST, EMNIST-Balanced, CIFAR10, CIFAR100, DTD, Caltech101, Caltech256, OxfordFlowers102, Oxford-IIIT Pet, GTSRB, Imagenette, ImageNet). We do not subsample, e.g.\ we embed over one million images for ImageNet. As vanilla t-SNE \cite{van2008visualizing} is slow, we ran it and its Finsler variant only on the smaller datasets. See \cref{sec: data reference classification datasets} for details.

\hfill \break
\noindent \textit{Scores.}
We use the Adjusted Mutual Information (AMI) \cite{vinh2009information}, Adjusted Rank Index (ARI) \cite{hubert1985comparing}, Normalized Mutual Information (NMI) \cite{vinh2009information}, Homogeneity (HOM) \cite{rosenberg2007v}, Completeness (COM) \cite{rosenberg2007v}, V-Measure (V-M) \cite{rosenberg2007v}, and Fowlkes-Mallows Index (FMI) \cite{fowlkes1983method} to measure
the alignment of kMeans embedding clusters with groundtruth labels 
(with same number of clusters and labels).
We also study label-unrelated scores
and supervised classifier accuracy in \cref{sec: appendix scores}.

\begin{figure*}[t]
    \centering
    \adjincludegraphics[
        width=0.9\textwidth, 
        trim={{0.\width} {0.\height} {0.5\width} {0.\height}},
        clip,
    ]{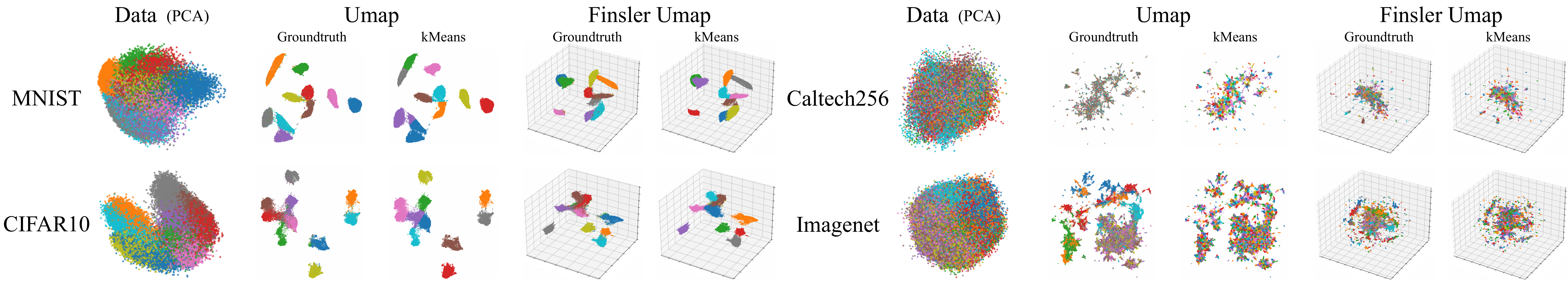}%
    \\[1em]
    \adjincludegraphics[
        width=0.9\textwidth, 
        trim={{0.5\width} {0.\height} {0.\width} {0.15\height}},
        clip,
    ]{Pictures/classif_datasets/qualitative_classif_datasets_summary_short_umap_gt_vs_kmeans_oneline.pdf}%
    \caption{
    Embedding results on large classification datasets using the traditional Euclidean or our Finsler Umap. Using kMeans clustering on our Finsler embeddings more accurately matches groundtruth labels, implying that our Finsler pipeline not only reveals additional asymmetry information, but it also better preserves the data.
    }
    \label{fig: qualitative classif datasets summary short umap gt vs kmeans}
\end{figure*}

\hfill \break
\noindent \textit{Methods.}
To measure the gain from accounting for sampling asymmetry and Finsler space embeddings, we compare the reference methods favouring clustering, t-SNE and Umap, with their Finsler counterparts. Default deterministic initialisation makes (Finsler) t-SNE deterministic
-- one run needed,
unlike (Finsler) Umap as it relies on random negative sampling
-- we report the mean over 10 runs. 
Euclidean (resp.\ Finsler) methods embed in $\mathbb{R}^m$ (resp.\ $\mathbb{R}^{m+1}$), with $m=2$.
We also report Euclidean (resp.\ Finsler) results in $\mathbb{R}^3$ (resp.\ $\mathbb{R}^2$) in \cref{sec: full visualisations classification datasets}, and include ablations on the embedding dimensionality $m$ (\cref{sec: embedding dim classification datasets}) and on the levels of emphasis on asymmetry $\lVert\omega\rVert_2$ (\cref{sec: levels of emphasis on asymmetry}).

\hfill\break
\noindent\textit{Results.}
Across datasets, our asymmetric pipeline with Finsler embeddings consistently surpasses Euclidean baselines on label-related metrics (\cref{fig: classif kmeans on umap finsler umap + tsne finsler tsne label-related only}), 
including supervised ones in \cref{sec: raw values and label-unrelated scores},
implying higher quality embeddings. Finsler Umap (resp.\ t-SNE) outperforms Umap (resp.\ t-SNE) on every dataset and label-related score\footnote{Except OxfordFlowers102 where Finsler t-SNE is only on par ($\le\!2\%$) for some scores by a change in the third decimal (see \cref{tab: classif kmeans on tsne finsler tsne}).}. Finsler embeddings thus preserve the data more faithfully, though clusters may look less ``nice'' (see label-unrelated scores in \cref{sec: raw values and label-unrelated scores}). They also make the asymmetry explicit, which is lost information in Euclidean methods.
We provide qualitative results on large datasets (\cref{fig: qualitative classif datasets summary short umap gt vs kmeans}) showcasing better cluster-label alignment with our Finsler pipeline. 
For example in Caltech256, Umap sometimes formed clusters decorrelated with labels, but Finsler Umap did not. Full 
results
for all datasets appear in
\cref{sec: reference classification datasets}
due to length constraints.
Also our Finsler methods are robust to the choice of asymmetry emphasis $\lVert \omega\rVert_2$ (\cref{sec: levels of emphasis on asymmetry}), with strong performance 
on
a range of values.

\begin{figure}[t]
    \centering
    \begin{subfigure}{0.8\textwidth}
        \adjincludegraphics[
            width=\textwidth, 
            clip,
        ]{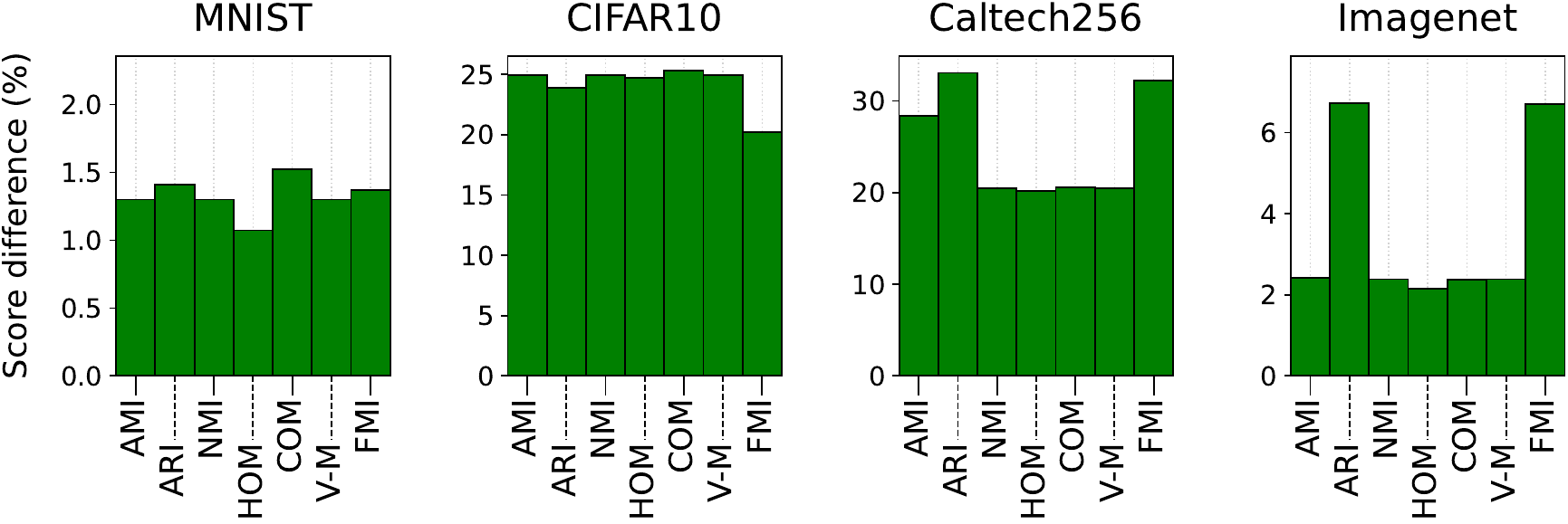}%
        \caption*{Finsler Umap vs.\ Umap}
    \end{subfigure}%
    \\[1em]
    \begin{subfigure}{0.8\textwidth}
        \adjincludegraphics[
            width=\textwidth, 
            clip,
        ]   {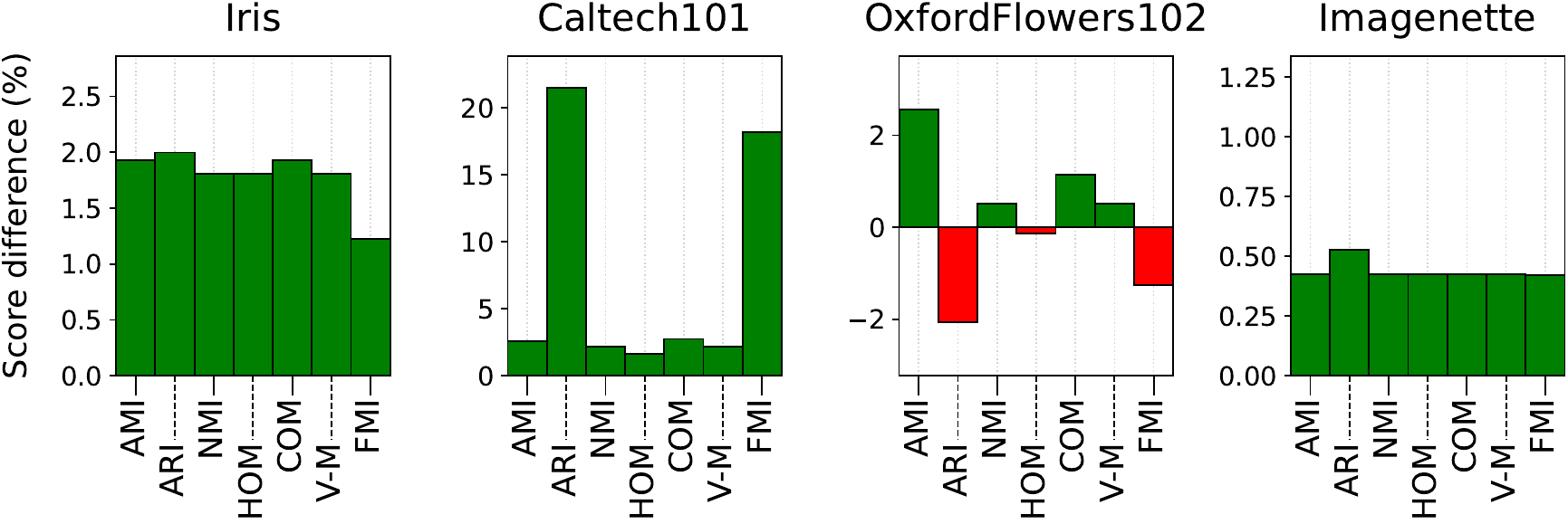}%
        \caption*{Finsler t-SNE vs.\ t-SNE}
    \end{subfigure}
    \caption{Percentage difference in mean performance between our Finsler methods and their traditional Euclidean baselines. Positive (resp.\ negative) differences, in green (resp.\ red), means we get better (resp.\ worse) scores comparing kMeans clusters with groundtruth class labels. 
    See \cref{sec: raw values and label-unrelated scores} for full results on all tested datasets, 
    with
    visual summaries (\cref{fig: classif kmeans on umap finsler umap + tsne finsler tsne label-related only zoom in}), raw performance values, and standard deviations (\cref{tab: classif kmeans on umap finsler umap,tab: classif kmeans on tsne finsler tsne}).
    }
    \label{fig: classif kmeans on umap finsler umap + tsne finsler tsne label-related only}
\end{figure}

\section{Conclusion}

We propose a novel general manifold learning pipeline, 
which captures, harnesses, and reveals the asymmetry arising from discrete samples, e.g.\ 
from
non-uniform densities.
It is further motivated when observing that the traditional symmetric pipeline constructs asymmetric dissimilarities, requiring symmetrisation,
discarding valuable information in the process.
We deliberately construct asymmetric dissimilarities and embed in the canonical Finsler, rather than Euclidean, space, which supports asymmetry.
Our approach greatly broadens the applicability of the rare existing asymmetric embedders, that were all limited to traditionally asymmetric data, to any data. 
We also generalise modern reference symmetric methods to handle 
asymmetric data, especially our Finsler t-SNE and Finsler Umap generalisation of t-SNE and Umap, providing asymmetric embedders 
for 
large scale data. 
Through extensive evaluations, from controlled well-understood small-scale settings to uncontrolled large-scale ones, we demonstrate that our asymmetric Finsler pipeline provides novel insights through visualisations and superior quality embeddings to their Euclidean counterparts.

\hfill \break
\noindent \textbf{Limitations and future work.
}
We fix a specific Finsler metric for the embedding space, yielding increasing distortions the more the data's structure deviates from 
it,
as in the 
Euclidean case.
We also provide
a single asymmetry direction, yet 
applications might prefer disentangling more asymmetry directions, thus requiring other Finsler metrics.
Without further information, our asymmetry is inferred from 
sampling densities. Yet, if points had features, we could derive from them other non-uniform local concepts, implying differently semantic asymmetries -- an exciting avenue we plan to explore.
Finally, Finsler geometry is not limited to manifold learning, yet remains largely uncharted in computer vision.

\bibliographystyle{splncs04}
\bibliography{main}

\clearpage
\setcounter{page}{1}

\begin{center}
    {\LARGE \bfseries Harnessing Data Asymmetry: \\
    Manifold Learning in the Finsler World}\\[0.5em]
    {\large Supplementary Material}
\end{center}

\appendix

\begin{sidewaystable}
    \centering
    \captionof{table}{
    Summary and comparison of manifold learning pipelines, using either the traditional Euclidean perspective or our novel Finsler one. As standard approaches naturally lead to data asymmetry, thus requiring artificially symmetrising the data to remain within symmetric Riemannian geometry, our pipeline embraces this asymmetry and preserves it by embedding into a Finsler space rather than a Riemannian one. With this new perspective, we generalise the reference modern manifold learning methods to handle asymmetric data.
    }
    \label{tab: summary comparison euclidean and finsler pipelines}
    \resizebox{\textwidth}{!}{%
    \begin{tabular}{c l lll}
        \toprule[2pt]
        \textbf{Geometry} & \textbf{Pipeline} & \multicolumn{3}{c}{\textbf{Method}}
        \\
        \cmidrule(lr){3-5}
        \textbf{\shortstack{Metric\\ (Canonical space)}}
        & 
        & \textbf{Multi-dimensional scaling}
        & \textbf{Student t-distributed stochastic neighbour embedding}
        & \textbf{Uniform manifold approximation}
        \\
        \midrule[2pt]
        \multirow{19}{*}{\textbf{\shortstack{Riemann\\[1em] \big(Euclidean\big)}}}
        &
        & \textbf{Classical MDS \cite{schwartz1989numerical}  
        }
        & \textbf{t-SNE \cite{van2008visualizing,van2009learning}}
        & \textbf{Umap \cite{mcinnes2018umap}}
        \\
        \cmidrule(lr){2-5}
        & \textbf{Data dissimilarities}
        & $p_{ij} = \lVert x_j - x_i\rVert_2$ \textit{(to be extended geodesically)}
        & $p_{ij} = \frac{e^{- \frac{\lVert x_j - x_i\rVert_2^2}{2\sigma_i^2}}}{\sum\limits_k e^{- \frac{\lVert x_k - x_i\rVert_2^2}{2\sigma_i^2}}}$
        & $p_{ij} = e^{-\frac{\lVert x_j - x_i\rVert_2 - \rho_i}{\sigma_i}}$ 
        \\
        & \textbf{Symmetrisation}
        & $p_{ij} \leftarrow \frac{p_{ij} + p_{ji}}{2} \text{ OR } \max(p_{ij}, p_{ji})$ 
        & $p_{ij} \leftarrow \frac{p_{ij} + p_{ji}}{2N}$
        & $p_{ij} \leftarrow p_{ij} + p_{ji} - p_{ij}p_{ji}$ 
        \\
        \cmidrule(lr){2-5}
        & \textbf{Embedding dissimilarities}
        & $q_{ij} = \lVert y_j-y_i\rVert_2$
        & $q_{ij} = \frac{(1 + \frac{\lVert y_j - y_i\rVert_2^2)}{\nu})^{-\frac{\nu+1}{2}}}{\sum\limits_{k\neq l} (1 + \frac{\lVert y_l - y_k\rVert_2^2)}{\nu})^{-\frac{\nu+1}{2}}}$
        & $q_{ij} = \frac{1}{1+a\lVert y_j - y_i\rVert_2^{2b}}$ 
        \\
        \cmidrule(lr){2-5}
        & \textbf{Optimisation objective}
        & $\mathcal{L} = \sum w_{ij} (q_{ij} - p_{ij})^2$
        & $\mathcal{L} = -\sum p_{ij}\ln q_{ij}$
        & $\mathcal{L} = -\sum p_{ij}\ln q_{ij} + (1-p_{ij})\ln(1-q_{ij})$ \\
        \cmidrule(lr){2-5}
        & \multirow{9}{*}{\textbf{Update rule}}
        & SMACOF
        & Gradient descent
        & Gradient descent with negative sampling
        \\
        & 
        & $\boldsymbol{Y}^{(k+1)} = V^\dagger B\big(\boldsymbol{Y}^{(k)}\big)\boldsymbol{Y}^{(k)}$
        & $\frac{\partial \mathcal{L}}{\partial y_i} = 
        2\frac{\nu+1}{\nu} \sum\limits_j (p_{ij} - q_{ij}) \left(1+\tfrac{\lVert y_j - y_i\rVert_2^2}{\nu}\right)^{-1}(y_i - y_j)$
        & $\frac{\partial \mathcal{L}}{\partial y_i} = 2 \sum\limits_{j} p_{ij} \cdot \frac{\partial c_{ij}^a}{\partial y_i}+ (1 - p_{ij}) \cdot \frac{\partial c_{ij}^r}{\partial y_i}$ 
        \\
        &
        & $
            \begin{cases}
            V_{ij} =
                \begin{cases}
                    -w_{ij} & \text{ if } i \neq j, \\
                    \sum_{k\neq i} w_{ik} & \text{ if } i = j.
                \end{cases}
                \\
            B_{ij}(\boldsymbol{Y}) =     
            \begin{cases}
                 - w_{ij}\frac{\boldsymbol{D}_{i,j}}{\lVert y_j - y_i\rVert_2}& \text{if } i \neq j
                 \\
                 -\sum_{k\neq i}B_{ik}(\boldsymbol{Y})& \text{if } i = j.
            \end{cases}
            \end{cases}
          $
        &
        & $
            \begin{dcases}
                \frac{\partial c_{ij}^a}{\partial y_i} = \frac{2ab \lVert y_j-y_i\rVert_2^{2(b-1)}}{1+a\lVert y_j - y_i\rVert_2^{2b}} (y_i - y_j)  
                \\
                \frac{\partial c_{ij}^r}{\partial y_i} = \frac{-2b }{\lVert y_j-y_i\rVert_2^2 \left(1+a\lVert y_j - y_i\rVert_2^{2b}\right)} (y_i - y_j) \\
                \frac{\partial c_{ij}^a}{\partial y_j} = - \frac{\partial c_{ij}^a}{\partial y_i} 
                \quad\text{and}\quad
                \frac{\partial c_{ij}^r}{\partial y_j} = - \frac{\partial c_{ij}^r}{\partial y_i}
            \end{dcases}
        $
        \\
        \midrule[2pt]
        \multirow{24}{*}{\textbf{\shortstack{Finsler\\[1em] (Canonical\\ Randers)}}}
        & 
        & \textbf{Finsler MDS (Ours, extends \cite{dages2025finsler})}
        & \textbf{Finsler t-SNE (Ours)}
        & \textbf{Finsler Umap (Ours)}
        \\
        \cmidrule(lr){2-5}
        & \textbf{Data dissimilarities}
        & $p_{ij} = \lVert x_j - x_i\rVert_2$ \textit{(to be extended geodesically)}
        & $p_{ij} = \frac{e^{- \frac{\lVert x_j - x_i\rVert_2^2}{2\sigma_i^2}}}{\sum\limits_k e^{- \frac{\lVert x_k - x_i\rVert_2^2}{2\sigma_i^2}}}$
        & $p_{ij} = e^{-\frac{\lVert x_j - x_i\rVert_2 - \rho_i}{\sigma_i}}$ 
        \\
        \cmidrule(lr){2-5}
        & \textbf{Embedding dissimilarities}
        & $q_{ij}^F = d_{F^C}(y_i, y_j) = \lVert y_j - y_i\rVert_2 + \omega^\top (y_j - y_i)$
        & $q_{ij}^F = \frac{(1 + \frac{d_{F^C}(y_i, y_j)^2)}{\nu})^{-\frac{\nu+1}{2}}}{\sum\limits_{k\neq l} (1 + \frac{d_{F^C}(y_l, y_k)^2)}{\nu})^{-\frac{\nu+1}{2}}}$
        & $q_{ij}^F = \frac{1}{1+a d_{F^C}(y_i, y_j)^{2b}}$
        \\ 
        \cmidrule(lr){2-5}
        & \textbf{Optimisation objective}
        & $\mathcal{L} = \sum w_{ij} (q_{ij}^F - p_{ij})^2$
        & $\mathcal{L} = -\sum p_{ij}\ln q_{ij}^F$
        & $\mathcal{L} = -\sum p_{ij}\ln q_{ij}^F + (1-p_{ij})\ln(1-q_{ij}^F)$
        \\
        \cmidrule(lr){2-5}
        & \multirow{13}{*}{\textbf{Update rule}}
        & Finsler SMACOF
        & Gradient descent
        & Gradient descent with negative sampling
        \\
        &
        & $ \mathrm{vec}\big(\boldsymbol{Y}^{(k+1)}\big) = K^{\dagger} \mathrm{vec}\Big(B\big(\boldsymbol{Y}^{(k)}\big)\boldsymbol{Y}^{(k)} - C \Big)$
        & 
        $\frac{\partial \mathcal{L}}{\partial y_i} = \frac{\nu+1}{\nu} \sum\limits_j \Bigg[ (p_{ij} - q_{ij}^F) \bigg(1+\frac{d_{F^C}(y_i, y_j)^2}{\nu}\bigg)^{-1} \frac{d_{F^C}(y_i, y_j)}{\lVert y_j - y_i\rVert_2}$ 
        & 
        $
        \frac{\partial \mathcal{L}}{\partial y_i} = \sum\limits_{j} p_{ij} \cdot \frac{\partial c_{ij}^a}{\partial y_i} 
        + p_{ji}\frac{\partial c_{ji}^a}{\partial y_i} 
        + (1 - p_{ij}) \cdot \frac{\partial c_{ij}^r}{\partial y_i}
        + (1 - p_{ji}) \cdot \frac{\partial c_{ji}^r}{\partial y_i}.
        $
        \\
        &
        & 
        \multirow{3}{*}{
        $
            \begin{cases}
            V_{ij} =
                \begin{cases}
                    -w_{ij} & \text{ if } i \neq j, \\
                    \sum_{k\neq i} w_{ik} & \text{ if } i = j.
                \end{cases}
            \\
            B_{ij}(\boldsymbol{Y}) =     
            \begin{cases}
                 - w_{ij}\frac{\boldsymbol{D}_{i,j}}{d_{F^C}(y_i, y_j)}& \text{if } i \neq j
                 \\
                 -\sum_{k\neq i}B_{ik}(\boldsymbol{Y})& \text{if } i = j.
            \end{cases}
            \\
            K = (I_m + \omega\omega^\top) \otimes V \\
            C = (W\odot \boldsymbol{D}-W^\top \odot \boldsymbol{D}^\top)\mathbbm{1}_{m}\omega^{\top}
            \end{cases}
        $
        }
        & 
        $\hspace{6em}+ (p_{ji} - q_{ji}^F)\bigg(1+\frac{d_{F^C}(y_j, y_i)^2}{\nu}\bigg)^{-1} \frac{d_{F^C}(y_j, y_i)}{\lVert y_i - y_j\rVert_2} \Bigg](y_i - y_j)$ 
        &
        \multirow{4}{*}{
        $
            \begin{dcases}
                \frac{\partial c_{ij}^a}{\partial y_i} = \frac{2ab \frac{(d_{F^C}(y_i, y_j))^{2b-1}}{\lVert y_j-y_i\rVert_2}}{1+a (d_{F^C}(y_i, y_j))^{2b}} (y_i - y_j) 
                - \frac{2ab (d_{F^C}(y_i, y_j))^{2b-1}}{1+a (d_{F^C}(y_i, y_j))^{2b}}\omega \\
                \frac{\partial c_{ij}^r}{\partial y_i} = \frac{-2b}{(1+a(d_{F^C}(y_i, y_j))^{2b})\lVert y_j-y_i\rVert_2 d_{F^C}(y_i, y_j)} \\
                \hspace{4em}+ \frac{2b}{(1+a(d_{F^C}(y_i, y_j))^{2b})d_{F^C}(y_i. y_j)}\omega \\
                \frac{\partial c_{ij}^a}{\partial y_j} = - \frac{\partial c_{ij}^a}{\partial y_i} 
                \quad\text{and}\quad
                \frac{\partial c_{ij}^r}{\partial y_j} = - \frac{\partial c_{ij}^r}{\partial y_i}
            \end{dcases}
        $
        }
        \\
        &
        &
        & 
        $\hspace{2em}+\frac{\nu+1}{\nu} \sum\limits_j \Bigg[ -(p_{ij} - q_{ij}^F) \bigg(1+\frac{d_{F^C}(y_i, y_j)^2}{\nu}\bigg)^{-1} d_{F^C}(y_i, y_j)$ 
        \\
        &
        &
        & 
        $\hspace{6em}+ (p_{ji} - q_{ji}^F)\bigg(1+\frac{d_{F^C}(y_j, y_i)^2}{\nu}\bigg)^{-1} d_{F^C}(y_j, y_i) \Bigg]\omega$ 
        \\
        &
        &
        &
        & 
        \\
        &
        &
        &
        &
        \\
        &
        &
        &
        &
        \\
        \bottomrule[2pt]
    \end{tabular}%
    }
\end{sidewaystable}

\noindent
This supplementary is organised as follows.
\begin{itemize}
    \setlength\itemsep{1em}
    
    \item 
    \textbf{\Cref{sec: sup mat More details on existing manifold learning methods}} further presents existing methods. In particular, it provides full details on the t-SNE and Umap methods and corrects a bug in the update rule of t-SNE that is present not only in the original paper \cite{van2009learning} but also in common reference libraries like Scikit-learn \cite{pedregosa2011scikit}.

    \item 
    \textbf{\Cref{sec: fixing symmetric pipeline}} provides our well-justified remedy to the symmetric manifold learning pipeline that avoids asymmetry altogether by proving \cref{th: symmetric geodesic dist interp metric}.

    \item 
    \textbf{\Cref{sec: proofs finsler tsne umap}} comprises in the proof of \cref{th: finsler tsne update rule,th: finsler umap update rule}, which are the full derivations of the update rules for our Finsler t-SNE and Finsler Umap.

    \item 
    \textbf{\Cref{sec: Asymmetric Finsler generalisation of spectral methods in manifold learning}} presents how to generalise traditional spectral methods to asymmetry using Finsler geometry.

    \item
    \textbf{\Cref{sec: implementation details}} includes extensive implementation details for all experiments, including those in the supplementary material, allowing full independent reproducibility based on the paper alone.

    \item
    \textbf{\Cref{sec: further experiments}} contains further experiments, such as ablation studies, full visualisations, or raw performance tables.

\end{itemize}

\hfill \break 
For conciseness, we denote in row-stacked format $\boldsymbol{X}\in\mathbb{R}^{N\times n}$ as the data points $x_1,\ldots,x_N$ and $\boldsymbol{Y}\in\mathbb{R}^{N\times m}$ as the embedded points $y_1,\ldots,y_N$.

\clearpage

\section{More details on existing manifold learning methods}
\label{sec: sup mat More details on existing manifold learning methods}

\subsection{Update rules of classical MDS}

The minimisation objective in classical MDS is the weighted stress function
\begin{equation}
    \mathcal{L} = \sum w_{ij}(q_{ij} - p_{ij})^2 = \sum w_{ij}(\lVert y_j -y_i\rVert_{ij} - \boldsymbol{D}_{ij})^2.
\end{equation}

\hfill\break
\noindent\textbf{SMACOF.} There is no closed-form solution to the stress function of Vanilla MDS \cite{schwartz1989numerical}, even for uniform weights $w_{ij}=1$.
For minimisation, the SMACOF algorithm \cite{leeuw1977application,borg2007modern} is widely used. This essentially iteratively minimises a simple majorising function of the stress, where finding the minimisation of the majorising function is easy. In practice, starting from some arbitrary initialisation, the SMACOF algorithm iteratively decreases the stress until reaching a local minimum using the following update rule
\begin{equation}
    \boldsymbol{Y}^{(k+1)} = V^\dagger B\big(\boldsymbol{Y}^{(k)}\big)\boldsymbol{Y}^{(k)},
\end{equation}
where
\begin{align}
    V_{ij} &=
        \begin{cases}
            -w_{ij} & \text{ if } i \neq j, \\
            \sum_{k\neq i} w_{ik} & \text{ if } i = j.
        \end{cases}
        \\
    \label{eq:B}
    B_{ij}(\boldsymbol{Y}) &=     
    \begin{cases}
         - w_{ij}\frac{\boldsymbol{D}_{ij}}{\lVert y_j - y_i\rVert}& \text{if } i \neq j
         \\
         -\sum_{k\neq i}B_{ik}(\boldsymbol{Y})& \text{if } i = j.
    \end{cases}
\end{align}
and $V^\dagger$ is the pseudo-inverse of $V$. It can be proven that this strategy is equivalent to scaled gradient descent with constant step size \cite{bronstein2006multigrid}.

It is well-known that the SMACOF algorithm is highly sensitive to initialisation. Thus, in the field, it is considered good practice to initialise with the Isomap embedding (corresponding to uniform weights $w_{ij} = 1$ and a relaxed optimisation objective).

\hfill\break
\noindent\textbf{Isomap.} To achieve a simple closed-form solution, the stress function in classical MDS is often relaxed to the strain function, with uniform weights $w_{ij}=1$. Applying a squaring of the dissimilarities $p_{ij}\leftarrow p_{ij}^2$ and $q_{ij} \leftarrow q_{ij}^2$, the strain objective is given by
\begin{equation}
    \mathcal{L} = \sum(q_{ij} - p_{ij})^2 = \sum(\lVert y_j - y_i\rVert_2^2 - \boldsymbol{D}_{ij}^2)^2
\end{equation}
Switching from the stress to strain makes the objective quadratic and thus simpler to optimise. Indeed, the strain can be rewritten using the Gram matrix of the centred data, and is given by double centring of the squared-distance matrix $\boldsymbol{G} = -\tfrac{1}{2} \boldsymbol{J} \boldsymbol{D^{(2)}}\boldsymbol{J}^\top$, where $\boldsymbol{J} = \boldsymbol{I} - \tfrac{1}{N} \boldsymbol{1}_N\boldsymbol{1}_N^\top$ is the centring projector and $(\boldsymbol{D^{(2)}})_{ij} = \boldsymbol{D}_{ij}^2$ is the matrix of squared distances (see \cref{sec: spectral methods appendix} for more details). This makes this relaxed version of classical MDS a kernel problem, with kernel $\boldsymbol{G}$. The solution is then provided by eigendecomposition of $\boldsymbol{G}$. When using geodesic dissimilarities $\boldsymbol{D}$, this approach is called Isomap \cite{tenenbaum2000global}. Although slightly different from solving the initial raw stress function, Isomap is nevertheless accepted as the reference method in the field for solving the classical MDS problem when weights are uniform $w_{ij} = 1$: it provides almost identical (good) solutions in practice, while being much faster as it does not require iterative minimisation and avoiding getting stuck in frequent local minima of the stress function. In prior works, the MDS objective often interchangeably means the stress or the strain objective, leading to much confusion, as in \cite{dages2025finsler}.

\subsection{Update rules of t-SNE and Umap}
\label{sec: update rules of t-SNE and Umap}

From the optimisation perspective, both t-SNE and Umap simply perform gradient descent on their objective. For fast implementations even on CPU hardware and to fully use the sparsity of the data, analytically computed gradients are directly implemented, rather than relying on more costly algorithms such as backpropagation via the chain-rule. We here present the analytical gradients of both objectives.

\subsubsection{t-SNE.}
\label{sec: tSNE detailed update rule}

The optimisation objective in t-SNE is the KL-divergence between sparse data dissimilarities $p_{ij}$ and the embedded dissimilarities $q_{ij}$ given by a globally normalised Student distribution with $\nu$ degrees of freedom: $q_{ij} = \tfrac{t_{ij}}{\sum_{k\neq l} t_{kl}}$ where $t_{ij} = \big(1+\tfrac{\lVert x_j - x_i\rVert_2^2}{\nu}\big)^{-\tfrac{\nu+1}{2}}$. Recall that the KL-divergence between $p_{ij}$ and $q_{ij}$ is given by
\begin{equation}
    \mathcal{L} =  \sum p_{ij} \ln \left( \frac{p_{ij}}{q_{ij}} \right).
\end{equation}
In the original t-SNE work \cite{van2008visualizing}, degrees of freedom were not considered, i.e.\ $\nu=1$, these only came with the follow-up work \cite{van2009learning} and is the standard in modern libraries.

However, there are several mistakes in the proofs of the works of Van der Maaten and Hinton \cite{van2008visualizing,van2009learning}. While in the original work, double mistakes end up cancelling each other out, leading to a correct result, they perform another mistake in the follow-up work which is not cancelled out, leading to an incorrect update rule in the paper. Unfortunately, this implies that the analytical gradient used in most libraries, even reference ones like Scikit-learn \cite{pedregosa2011scikit}, are incorrect.

As such, we propose to rederive in detail the calculation of the gradient and fix the t-SNE update rule.

\begin{theorem}[Fixed t-SNE gradient]
    \label{th: correct tSNE update rule}
    The correct update rule for t-SNE with $\nu$ degrees of freedom is
    $$
    \frac{\partial \mathcal{L}}{\partial y_i} = 
        2\frac{\nu+1}{\nu} \sum\limits_j (p_{ij} - q_{ij}) \left(1+\tfrac{\lVert y_j - y_i\rVert_2^2}{\nu}\right)^{-1}
        (y_i - y_j)
    $$
\end{theorem}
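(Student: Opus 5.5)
We compute the gradient directly, working with the unnormalised kernel $t_{kl} = (1+\lVert y_l-y_k\rVert_2^2/\nu)^{-\frac{\nu+1}{2}}$ and the normaliser $Z=\sum_{k\neq l} t_{kl}$. We assume the $p_{ij}$ have already been symmetrised and normalised, so that $p_{ij}=p_{ji}$ and $\sum_{k\neq l}p_{kl}=1$. Since the $p_{ij}$ do not depend on $\boldsymbol{Y}$, the loss reduces up to a constant to
$$\mathcal{L} = -\sum_{k\neq l} p_{kl}\ln t_{kl} + \ln Z.$$
The key point is that the normaliser contributes through $\ln Z$ with total weight $\sum p_{kl}=1$. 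This gives
$$\frac{\partial\mathcal{L}}{\partial y_i} = \sum_{k\neq l}\,(q_{kl}-p_{kl})\,\frac{\partial \ln t_{kl}}{\partial y_i}.$$

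Next we compute $\partial \ln t_{kl}/\partial y_i$. The quantity $\ln t_{kl} = -\frac{\nu+1}{2}\ln(1+\lVert y_l-y_k\rVert_2^2/\nu)$ depends on $y_i$ only when $k=i$ or $l=i$. In either case
$$\frac{\partial \ln t_{ij}}{\partial y_i} = -\frac{\nu+1}{2}\cdot\frac{2(y_i-y_j)/\nu}{1+\lVert y_j-y_i\rVert_2^2/\nu} = -\frac{\nu+1}{\nu}\left(1+\tfrac{\lVert y_j-y_i\rVert_2^2}{\nu}\right)^{-1}(y_i-y_j),$$
and the same expression holds for $\ln t_{ji}$ because $t$ is symmetric.

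We then collect the terms with $k=i$ and those with $l=i$. Using the symmetry of both $p$ and $q$, the two sums coincide, which produces the factor $2$. The sign flips to $(p_{ij}-q_{ij})$, and we obtain exactly
$$2\frac{\nu+1}{\nu}\sum_j (p_{ij}-q_{ij})\left(1+\tfrac{\lVert y_j-y_i\rVert_2^2}{\nu}\right)^{-1}(y_i-y_j).$$

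The main obstacle is bookkeeping rather than any deep argument: differentiating $\ln Z$ carefully, including both orientations $(k,l)$ and $(l,k)$, and tracking the exponent $\frac{\nu+1}{2}$ through the chain rule. The claimed error in \cite{van2009learning} most plausibly comes from mishandling exactly this exponent. A naive derivation gives a factor $\frac{2\nu+2}{\nu}$ in one place and drops the $\frac12$ in another, so we will verify the prefactor $2\frac{\nu+1}{\nu}$ explicitly. As a sanity check, at $\nu=1$ it reduces to the known factor $4$ of \cite{van2008visualizing}.
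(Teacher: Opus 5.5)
Your proof is correct and is essentially the paper's argument: both split $\ln q_{kl}=\ln t_{kl}-\ln Z$, use $\sum_{k\neq l}p_{kl}=1$ to turn the normaliser's contribution into $q$-weighted terms, and use the symmetry of $p$ and $q$ to merge the $(i,j)$ and $(j,i)$ contributions into the factor $2$. The differences are only in bookkeeping and explicitness. The paper chains through the distances $d_{ij}$ and computes $\partial Z/\partial d_{ij}$, with its $-\tfrac{\nu+3}{2}$ exponent, explicitly, whereas your identity $\nabla_{y_i}\ln Z=\sum_{k\neq l}q_{kl}\nabla_{y_i}\ln t_{kl}$ sidesteps that; you also state the needed assumption $p_{ij}=p_{ji}$ openly, where the paper uses it only tacitly.
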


\begin{proof}
    Recall that $p_{ij}$ are calculated from the data and are thus constants from the perspective of the optimisation. The quantities varying with respect to the embedding $\boldsymbol{Y}$ are $q_{ij}$. Each $q_{ij}$ depends on all $t_{kl}$, and thus on the embedding of all other points, which explains why the gradient in $y_i$ depends on all other $y_j$ and not just local ones in the original space (i.e.\ local $x_j$ around $x_i$).

    We can rewrite the KL-divergence as
    \begin{equation}
        \mathcal{L} = \sum p_{ij} \ln p_{ij} - \sum p_{ij} \ln q_{ij}.
    \end{equation}
    The first term depends only on the data $p_{ij}$ and is independent from the embedding. As such, minimising the KL-divergence is the same as minimising the simplifed loss
    \begin{equation}
        \mathcal{L} = -\sum p_{ij} \ln q_{ij}.
    \end{equation}

    Denote $d_{ij} = \lVert y_j - y_i\rVert_2$ for conciseness. Recalling\footnote{An incorrect formula for the differentiation of the distance was the reason for the first set of mistakes in the original work \cite{van2008visualizing}.} that $\tfrac{\partial d_{ij}}{\partial y_i} = \tfrac{y_i - y_j}{\lVert y_i - y_j\rVert_2}$, we have
    \begin{align}
        \frac{\partial \mathcal{L}}{\partial y_i} 
        &= \sum_j \frac{\partial \mathcal{L}}{\partial d_{ij}} \frac{\partial d_{ij}}{\partial y_i} + \frac{\partial \mathcal{L}}{\partial d_{ji}}\frac{\partial d_{ji}}{\partial y_i} \\
        &= \sum_j \left(\frac{\partial \mathcal{L}}{\partial d_{ij}} + \frac{\partial \mathcal{L}}{\partial d_{ji}}\right) \frac{y_i - y_j}{\lVert y_i - y_j\rVert_2} \\
        &= 2 \sum_j \frac{\partial \mathcal{L}}{\partial d_{ij}} \frac{y_i - y_j}{\lVert y_i - y_j\rVert_2}.
    \end{align}

    We now focus on the term $\tfrac{\partial \mathcal{L}}{\partial d_{ij}}$. Note that $q_{ij}$ explicitly depends on every $d_{kl}$ for all pairs $(k,l)$, due to the denominator $Z = \sum_{k\neq l} t_{kl}$ in $q_{ij}$. We then have
    \begin{align}
        \frac{\partial \mathcal{L}}{\partial d_{ij}} 
        &= - \sum\limits_{k\neq l} p_{kl} \Bigg[ \frac{\partial}{\partial d_{ij}}\left(-\frac{\nu +1}{2}\ln\left(1+\frac{d_{kl}^2}{\nu}\right)\right) 
        - \frac{\partial}{\partial d_{ij}}(\ln(Z))\Bigg] \\
        &= -\sum\limits_{k\neq l} p_{kl} \Bigg[-\frac{\nu+1}{2}\frac{1}{1+\frac{d_{kl}^2}{\nu}}\frac{2d_{kl}}{\nu}\frac{\partial d_{kl}}{\partial d_{ij}} - \frac{1}{Z}\frac{\partial Z}{\partial d_{ij}}\Bigg]. 
    \end{align}
    Since we have $\tfrac{\partial d_{kl}}{\partial d_{ij}} = 1$ if $(i,j)=(k,l)$ and $0$ otherwise, and since we also have $\tfrac{\partial Z}{\partial d_{ij}} = -\tfrac{\nu+1}{2}\big(1+\tfrac{d_{ij}^2}{\nu}\big)^{-\frac{\nu+3}{2}}\frac{2d_{ij}}{\nu}$, and since $\sum_{k\neq l}p_{kl} =1$, we have
     \begin{align}
        \frac{\partial \mathcal{L}}{\partial d_{ij}} 
        &= \frac{\nu+1}{\nu}\frac{p_{ij}d_{ij}}{1+\frac{d_{ij}^2}{\nu}} - \sum\limits_{k\neq l} p_{kl} \frac{\nu+1}{\nu}\frac{d_{ij}\left(1+\frac{d_{ij}^2}{\nu}\right)^{-\frac{\nu+3}{2}}}{Z} \\
        &= \frac{\nu+1}{\nu}(p_{ij} - q_{ij}) d_{ij} \left(1+\frac{d_{ij}^2}{\nu}\right)^{-1}.
    \end{align}
    Returning to $\tfrac{\partial \mathcal{L}}{\partial y_i}$, and since $d_{ij} = \lVert y_i - y_j\rVert_2$ we get the desired result
    \begin{equation}
        \frac{\partial \mathcal{L}}{\partial y_i} = 2 \frac{\nu+1}{\nu}\sum\limits_j (p_{ij} - q_{ij})\left(1+\frac{d_{ij}^2}{\nu}\right)^{-1} (y_i - y_j).
    \end{equation}
    
\end{proof}

The correct derivation of the t-SNE update rule in \cref{th: correct tSNE update rule} contrasts with the formula derived by Van der Maaten and Hinton \cite{van2009learning}, which is 
\begin{gather}
    \begin{bmatrix}
        \textbf{Incorrect} \\
        \textbf{\fontencoding{U}\fontfamily{futs}\selectfont\char 49\relax} 
    \end{bmatrix} \nonumber
    \\
    \frac{\partial \mathcal{L}}{\partial y_i} = 2\frac{\nu+1}{\nu}\sum\limits_j (p_{ij}-q_{ij}) \left(1+\frac{d_{ij}^2}{\nu}\right)^{-\frac{\nu+1}{2}}(y_i - y_j).
    \\
    \begin{bmatrix}
        \textbf{Incorrect} \\
        \textbf{\fontencoding{U}\fontfamily{futs}\selectfont\char 49\relax} 
    \end{bmatrix} \nonumber
\end{gather}

The issue is the incorrect exponent of the term $\big(1+\tfrac{d_{ij}^2}{\nu}\big)$, which should be $-1$ but was written $-\tfrac{\nu+1}{2}$ in \cite{van2009learning}, which is incorrect as soon as $\nu \neq 1$. This happens when the embedding dimension is $m\ge3$ as the usual recipe is $\nu = \max(m-1, 1)$\footnote{In \cite{van2009learning}, the recipeis $\nu = m-1$ so that planar embeddings have one degree of freedom, matching the original work \cite{van2008visualizing}. In Scikit-learn, the max with 1 is taken avoid crashes when embedding to a one-dimensional space.}. Common implementations of t-SNE, such as in the reference Scikit-learn library unfortunately carry on this mistake.

\subsubsection{Umap.}

The optimisation objective in Umap is the cross-entropy between the data dissimilarities $p_{ij}$ and the embedded dissimilarities $q_{ij}$ given by a (optionally tweaked) Student distribution with $1$ degree of freedom but without global normalisation: $q_{ij} = (1+a\lVert y_j - y_i\rVert_2^{2b})^{-1}$, where $a$ and $b$ can tweak the distribution away from a rigorous Student distribution. Recall that the cross-entropy  loss between $p_{ij}$ and $q_{ij}$ is given by
\begin{equation}
    \mathcal{L} = \sum p_{ij} \ln \left(\frac{p_{ij}}{q_{ij}}\right) + (1 - p_{ij}) \ln \left(\frac{1 - p_{ij}}{1 - q_{ij}}\right).
\end{equation}

Denote $c_{ij}^a = -\ln q_{ij}$ and $c_{ij}^r = -\ln(1-q_{ij})$ the symmetric attractive and repulsive forces respectively. Due to the absence of global normalisation, unlike in t-SNE, $q_{ij}$ only explicitly depends on $y_i$ and $y_j$. The gradient of the objective is then
\begin{equation}
    \frac{\partial \mathcal{L}}{\partial y_i} = 2 \sum\limits_{j} p_{ij} \cdot \frac{\partial c_{ij}^a}{\partial y_i}+ (1 - p_{ij}) \cdot \frac{\partial c_{ij}^r}{\partial y_i},
\end{equation}
with
\begin{equation}
    \begin{dcases}
        \frac{\partial c_{ij}^a}{\partial y_i} = \frac{2ab \lVert y_j-y_i\rVert_2^{2(b-1)}}{1+a\lVert y_j - y_i\rVert_2^{2b}} (y_i - y_j)  
        \\
        \frac{\partial c_{ij}^a}{\partial y_j} = - \frac{\partial c_{ij}^a}{\partial y_i} \\
        \frac{\partial c_{ij}^r}{\partial y_i} = \frac{-2b }{\lVert y_j-y_i\rVert_2^2 \left(1+a\lVert y_j - y_i\rVert_2^{2b}\right)} (y_i - y_j)
        \\
        \frac{\partial c_{ij}^r}{\partial y_j} = - \frac{\partial c_{ij}^r}{\partial y_i}
    \end{dcases}
\end{equation}

The attractive forces contribute sparsely to the loss due to weighting with the sparse $p_{ij}$. On the other hand, the repulsive forces are dense. To speed up computation, Umap uses negative sampling to only compute repulsive forces on a small number of random samples. This makes Umap a random algorithm beyond initialisation. Additionally, Umap is efficiently implemented with multi-threading, which adds another level of randomness.

\subsection{Spectral methods}
\label{sec: spectral methods appendix}

Spectral methods in manifold learning are a special type of Euclidean methods where the objective function was carefully designed such that it can be rewritten as a kernel problem and thus be solved directly via (generalised) eigendecomposition of a matrix constructed from the data dissimilarities $\boldsymbol{D}$. Spectral methods can be designed using various paradigms, such as distance preservation in Isomap \cite{tenenbaum2000global}, local linear preservation, e.g.\ LLE-based methods \cite{roweis2000nonlinear,donoho2003hessian,zhang2006mlle,zhang2004principal}, and the Laplacian in Laplacian eigenmaps \cite{belkin2003laplacian} and Diffusion maps \cite{coifman2006diffusion}.
From the perspective of this paper, it is important to understand that spectral methods can systematically be given by solving optimisation problems involving (squared) pairwise Euclidean distances between embedded points $y_i$ (analogues to our embedding dissimilarities) and fitting to some pairwise data terms (analogues to our data dissimilarities).

\paragraph{Isomap.} 
The secret in relaxing the stress to the strain is the use of quadratic distances. Stacking up the squared distances into a matrix 
$\boldsymbol{D^{(2)}_Y}$, i.e.\ $(\boldsymbol{D^{(2)}_Y})_{ij} = \lVert y_j - y_i\rVert_2^2$, and assuming that the embedding $\boldsymbol{Y}$ is centred, expanding the squared distances leads to $\boldsymbol{J}\boldsymbol{D^{(2)}_Y}\boldsymbol{J} = -2\boldsymbol{Y}\boldsymbol{Y}^\top$. In other words the Gram matrix $\boldsymbol{G_Y} = \boldsymbol{Y}\boldsymbol{Y}^\top = -\tfrac{1}{2}\boldsymbol{J}\boldsymbol{D^{(2)}_Y}\boldsymbol{J}$. As the strain objective is given by $\mathcal{L} = \lVert \boldsymbol{J}(\boldsymbol{D^{(2)}_Y} - \boldsymbol{D^{(2)}})\boldsymbol{J}\rVert_F^2$, where here $\lVert \cdot\rVert_F$ is the Frobenius norm, Isomap minimises the objective $\mathcal{L} = 4\lVert \boldsymbol{G_Y} - \boldsymbol{G}\rVert_F^2$, where $\boldsymbol{G} = -\tfrac{1}{2}\boldsymbol{J}\boldsymbol{D^{(2)}}\boldsymbol{J}$. This is a typical kernel formulation. As $\boldsymbol{G_Y}$ has rank at most $m$, since $\boldsymbol{Y}\in\mathbb{R}^m$, a closed-form solution is provided by taking the top $m$ eigenvectors of $\boldsymbol{G}$. 
The main advantage of spectral methods is that they are solved by eigendecomposition of a matrix, rather than by more costly iterative solvers that also risk getting stuck in local minima. This can often come at the cost of less robustness, e.g.\ to noise, as eigenvectors tend to be unstable (unlike eigenvalues).

\paragraph{Laplacian eigenmaps.}
Denote $\boldsymbol{\Delta}$ an estimated graph Laplacian matrix on the data, with mass matrix $\boldsymbol{A}$ and stiffness $\boldsymbol{W}$ on the neighbourhood graph equipped with edge weights $W_{ij} = e^{-\frac{\lVert x_j - x_i\rVert_2^2}{t}}$ for some small time scale $t$. The connectivity weights $\boldsymbol{W}$ are analogues of the data dissimilarities $p$. Laplacian eigenmaps \cite{belkin2003laplacian} proposes to embed the data by taking the eigenvectors associated to the smallest eigenvalues of the generalised eigendecomposition $\boldsymbol{\Delta}f = \lambda \boldsymbol{A}f$. This spectral approach is the solution to an alternative optimisation problem. 
Laplacian eigenmaps solves the constrained optimisation $\mathrm{tr}(\boldsymbol{Y}^\top \boldsymbol{\Delta} \boldsymbol{Y})$ subject to the constraint $\boldsymbol{Y}^\top \boldsymbol{A} \boldsymbol{Y} = \boldsymbol{I}$. 
For simplicity, assume that  $\boldsymbol{\Delta} = \boldsymbol{A} - \boldsymbol{W}$ is the unnormalised graph Laplacian, with $A_{i} = \sum_j W_{ij}$, where $W_{ij}$ (resp.\ $A_{i}$) populate the matrix $\boldsymbol{W}$ (resp.\ the diagonal matrix $\boldsymbol{A}$). 
The constrained objective can then be rewritten as $\mathcal{L} = \tfrac{1}{2}\sum W_{ij} \lVert y_j - y_i\rVert_2^2 = \mathrm{tr}(\boldsymbol{Y}^\top \boldsymbol{\Delta}\boldsymbol{Y})$. Using different Laplacians will modify the $y_i$ terms. For instance, for the symmetric graph Laplacian $\boldsymbol{\Delta} = \boldsymbol{I} - \boldsymbol{A}^{-\frac{1}{2}}\boldsymbol{W}\boldsymbol{A}^{-\frac{1}{2}}$, this choice rescales the data terms in the objective as the constrained objective becomes $\mathcal{L} = \tfrac{1}{2}\sum W_{ij} \Big\lVert \tfrac{y_j}{\sqrt{A_{j}}} - \tfrac{y_i}{\sqrt{A_{i}}}\Big\rVert_2^2 = \mathrm{tr}(\boldsymbol{Y}^\top \boldsymbol{\Delta} \boldsymbol{Y})$.

\paragraph{Diffusion maps.}
Diffusion maps \cite{coifman2006diffusion} are tightly connected to Laplacian eigenmaps \cite{belkin2003laplacian}. Denote again $\boldsymbol{\Delta}$, $\boldsymbol{A}$, and $\boldsymbol{W}$ to be some Laplacian, its mass matrix, and its stiffness matrix. Again, $W_{ij} = e^{-\frac{\lVert x_j - x_i\rVert_2^2}{t}}$. The idea behind diffusion maps is to normalise the Laplacian and then view it as a Markov process transition matrix $\boldsymbol{P}$ simulating heat diffusion. With this perspective, the goal is to preserve diffusion distances, rather than raw distances. Denoting $\lambda_k$ (in decreasing order) and $\psi_k$ the eigenvalues and eigenvectors of $\boldsymbol{P}$, the squared diffusion distance at time step $t$ is given by $D_t^2(x_i, x_j) = \sum_k \lambda_k^{2t} \big((\psi_k)_j - (\psi_k)_i\big)^2$. It can be rewritten, using the Euclidean norm as $D_t^2(x_i, x_j) = \lVert \phi_t(x_i) - \phi_t(x_j)\rVert_2^2$, where $\phi_t(x_i) = (\lambda_1^t (\psi_1)_i, \lambda_2^t (\psi_2)_i, \ldots)$. To preserve the diffusion distance in $\mathbb{R}^m$ embeddings, diffusion maps simply suggests to take as embedding $\phi_t(x_i)$ truncated to its first $m$ columns.
Like other spectral approaches, this embedding can be seen as the solution to an alternative optimisation objective involving Euclidean distances between embedded points $y_i$.
For simplicity, assume that the transition matrix $\boldsymbol{P}$ is symmetric, otherwise we could involve the $\boldsymbol{\Pi}$-weights of the stationary distribution for reversibility of the Markov process, i.e.\ $\boldsymbol{\Pi} \boldsymbol{P} = \boldsymbol{P}^\top \boldsymbol{\Pi}$ for the diagonal matrix $\boldsymbol{\Pi}$ with diagonal entries $\pi_i$, and symmetrise by $\boldsymbol{P}^t \leftarrow \boldsymbol{\Pi}\boldsymbol{P}^t$.
Similar to Laplacian eigenmaps, with $\boldsymbol{I} - \boldsymbol{P}^t$ instead of $\boldsymbol{\Delta}$, minimising the quadratic form $\mathcal{L} = \tfrac{1}{2}\sum \boldsymbol{P}_{ij}^t\lVert y_j - y_i\rVert_2^2 = \mathrm{tr}(\boldsymbol{Y}^\top (\boldsymbol{I} - \boldsymbol{P}^t)\boldsymbol{Y})$ subject to $\boldsymbol{\boldsymbol{Y}^\top \boldsymbol{Y}} = \boldsymbol{\Lambda}^{2t}$ yields the diffusion maps solution $\phi_t$ (note that the constraint was scaled by the eigenvalues in order to get them to multiply the eigenfunctions). 
As such, diffusion maps is simply Laplacian eigenmaps using the Markov transition matrix as weights in the optimisation and a scaled constraint.
An alternate view of diffusion maps is to see it as a special case of spectral MDS, which applies MDS to diffusion distances, as discussed in the original paper of spectral MDS \cite{aflalo2013spectral}.

\paragraph{Locally linear embeddings.} 
In locally linear embeddings (LLE) and its variations, the goal is to preserve the locally linear structure of the data. Thus, rather than raw distance dissimilarities, LLE-based methods focus on local reconstruction weights, which are inherently connected to distances, expressing each point as a weighted average of its neighbours. In vanilla LLE \cite{roweis2000nonlinear}, the objective to minimise is
\begin{equation}
    \mathcal{L} = \sum\Big\lVert y_i - \sum\limits_j W_{ij}y_j\Big\rVert_2^2,
\end{equation}
where the local reconstruction weights $W_{ij}$, acting as analogues to $p_{ij}$, are given by solving the least squares problem for vertices $j$ within the neighbourhood $\mathcal{N}(i)$ of $i$
\begin{equation}
    \min\limits_{\substack{w\\\sum\limits_j W_{ij} = 1}} \sum\Big\lVert x_i - \sum\limits_{j\in\mathcal{N}(i)} W_{ij} x_j\Big\rVert_2^2,
\end{equation}
with closed-form solution provided by Lagrangian optimisation. Expanding the quadratic objective of LLE, it can be rewritten in matrix form as $\mathcal{L} = \lVert (\boldsymbol{I}-\boldsymbol{W})\boldsymbol{Y}\rVert_F^2 = \mathrm{tr}(\boldsymbol{Y}^\top \boldsymbol{G_W} \boldsymbol{Y})$, where $\boldsymbol{W}$ has entries $W_{ij}$, $\boldsymbol{G_W} = (\boldsymbol{I}-\boldsymbol{W})^\top( \boldsymbol{I}-\boldsymbol{W})$, and $\mathrm{tr}$ is the trace operator. This is a kernel problem, with kernel $\boldsymbol{G_W}$. The solution to LLE is thus to take the eigenvectors of $\boldsymbol{G_W}$ with lowest eigenvalues.

\section{Fixing the symmetric pipeline}
\label{sec: fixing symmetric pipeline}

We here provide a principled remedy to the traditional manifold learning pipeline that is theoretically justified by metric geometry and guarantees symmetric dissimilarities.

The goal is to find a better symmetric estimate of the manifold distance $d_R(x_i, x_j)$ between close points. 
We still approximate the shortest geodesic path $\gamma(t): x_i \rightsquigarrow x_j$ as the Euclidean line $(1-t)x_i + tx_j$ between 
close points.
For approximating its length $d_R(x_i, x_j) = \lvert\gamma\rvert = \int_t\lVert\gamma'(t)\rVert_{M(\gamma(t))}dt$, instead of a single metric estimate $M(\gamma(0)) = c(x_i)I$, we propose to interpolate the metric $M(\gamma(t)) = (1-t)M(x_i) + tM(x_j)$. Integrating along the segment provides the following symmetric approximation.

\begin{theorem}
    \label{th: symmetric geodesic dist interp metric}
    Given an isotropic non-uniform Riemannian metric $R$, with $M(x) = c(x)I$, then the geodesic distance between close points $x_i$ and $x_j$ is approximately given by
    $$d_R(x_i,x_j) = \frac{c(x_i) + c(x_j)}{2}\lVert x_j - x_i\rVert_2 = d_R(x_j, x_i).$$
\end{theorem}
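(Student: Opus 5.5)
The plan is a direct computation of the length of the straight segment joining $x_i$ and $x_j$ under the interpolated metric introduced just before the statement. The statement is only an approximation in two places: we replace the true geodesic by the segment $\gamma(t)=(1-t)x_i+tx_j$, $t\in[0,1]$, and we replace the unknown metric along it by the linear interpolation of its endpoint values. Within this model the claim becomes an exact identity, and that identity is what I would prove.

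\textbf{Convention.} I would follow the convention of the main text, where the isotropic metric acts on tangent vectors as $R_{x}(u)=c(x)\lVert u\rVert_2$. Under this convention $c$ is the local scaling factor, e.g.\ $c(x_i)=\tfrac{1}{\sigma_i}$. The interpolation $M(\gamma(t))=(1-t)M(x_i)+tM(x_j)$ then amounts to interpolating the scaling linearly:
\[
c(\gamma(t))=(1-t)c(x_i)+tc(x_j).
\]

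\textbf{Key steps.}
\begin{enumerate}
\item Compute the velocity $\gamma'(t)=x_j-x_i$, which is constant in $t$.
\item Write the length element as $R_{\gamma(t)}(\gamma'(t))=\big((1-t)c(x_i)+tc(x_j)\big)\lVert x_j-x_i\rVert_2$.
\item Pull the constant factor $\lVert x_j-x_i\rVert_2$ out of the integral, and use $\int_0^1(1-t)\,dt=\int_0^1 t\,dt=\tfrac12$. This gives
\[
\lvert\gamma\rvert=\frac{c(x_i)+c(x_j)}{2}\lVert x_j-x_i\rVert_2 .
\]
\item For the reverse direction, traverse the reversed segment $\tilde\gamma(t)=\gamma(1-t)$. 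It has the same speed, and the interpolated scaling along it is $(1-t)c(x_j)+tc(x_i)$. The same integral gives the same value, because the expression is symmetric under $i\leftrightarrow j$. Hence $d_R(x_i,x_j)=d_R(x_j,x_i)$ within the approximation.
\end{enumerate}

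\textbf{Main obstacle.} The only delicate point is interpretive rather than computational. If one instead reads $M$ literally as the quadratic tensor, with $R_x(u)=\sqrt{c(x)}\lVert u\rVert_2$, and interpolates $M$ linearly, the integrand becomes $\sqrt{(1-t)c_i+tc_j}$. That yields $\tfrac{2}{3}\tfrac{c_j^{3/2}-c_i^{3/2}}{c_j-c_i}$ rather than the arithmetic mean. I would therefore state explicitly that $c$ is the norm scaling consistent with the earlier text, and so interpolated linearly.

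I would also remark that, to first order in $\lVert x_j-x_i\rVert_2$, both readings agree with the arithmetic mean. Since the theorem only claims an approximation for close points, the result is robust to this choice either way. Finally, I would note the consequence used in the main text: with $c=1/\sigma$, we get $\tfrac{c_i+c_j}{2}=1/\tilde\sigma_{ij}$, where $\tilde\sigma_{ij}$ is the harmonic mean of $\sigma_i$ and $\sigma_j$.
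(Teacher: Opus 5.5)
Your proposal is correct and follows essentially the same route as the paper's proof: parametrise the segment, interpolate the metric linearly, integrate to get $\tfrac{c(x_i)+c(x_j)}{2}\lVert x_j-x_i\rVert_2$, and observe invariance under reversal. Your explicit convention matches what the paper does implicitly, since it writes $\lVert x_j-x_i\rVert_{(1-t)M(x_i)+tM(x_j)} = \big((1-t)c(x_i)+tc(x_j)\big)\lVert x_j-x_i\rVert_2$ even though it defines $\lVert u\rVert_M=\sqrt{u^\top M u}$; note, however, that under the literal tensor reading the first-order agreement is with $\tfrac{\sqrt{c(x_i)}+\sqrt{c(x_j)}}{2}$ rather than the stated formula, so your closing ``robust either way'' remark is overstated.
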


\begin{proof}
    Denote $\gamma(t) = (1-t) x_i + t x_j$ with $t\in[0,1]$ a parametrisation of the Euclidean segment $x_j -x_i$, and $\gamma(1-t) = t x_i + (1-t)x_j$ the segment in the opposite direction. The estimated linearly interpolated metric $M\big(\gamma(t)\big)$ is given by 
    \begin{equation}
        M\big(\gamma(t)\big) = (1-t) M(x_i) + t M(x_j).
    \end{equation}
    
    The length of the curve $\gamma(t)$, denoted $|\gamma(t)|$, is given by
    \begin{align}
        |\gamma(t)| 
        &= \int_0^1 \lVert \gamma'(t) \rVert_{M\big(\gamma(t)\big)} dt \\
        &= \int_0^1 \lVert x_j - x_i \rVert_{(1-t) M(x_i) + t M(x_j)} dt \\
        &= \int_0^1 \big((1-t)c(x_i) + t c(x_j)\big) \lVert x_j - x_i \rVert_2 dt \\
        &= \frac{c(x_i) + c(x_j)}{2} \lVert x_j - x_i \rVert_2.
    \end{align}
    
    As the interpolated metric is isotropic, it is symmetric, leading to symmetric distances, meaning that the estimated geodesic distance is the same in both directions of traversal $|\gamma(t)| = |\gamma(1-t)|$.
\end{proof}

As such, linearly interpolating the metric leads to averaging out start and end-point metric scale. Since commonly $c(x_i) = \tfrac{1}{\sigma_i}$, our Riemannian fix suggests to replace $\sigma_i$ in the calculation of $p_{ij}$ by the harmonic mean $\sigma_{ij}$ of $\sigma_i$ and $\sigma_j$.

\begin{corollary}
    \label{th: theoretical justified symmetric geodesic neighbour distances}
    Given an isotropic non-uniform Riemannian metric $R$, with $M(x) = c(x)I$ and $c(x_i) = \tfrac{1}{\sigma_i}$, then the geodesic distance between close points $x_i$ and $x_j$ is approximately given by
    $$d_{R}(x_i, x_j) = \frac{1}{\sigma_{ij}}\lVert x_j-x_i\rVert_2 \quad\text{with}\quad \frac{1}{\sigma_{ij}} = \frac{1}{2}\bigg(\frac{1}{\sigma_i} + \frac{1}{\sigma_j}\bigg).$$
\end{corollary}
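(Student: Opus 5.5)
This corollary is a direct specialisation of \cref{th: symmetric geodesic dist interp metric}, so the plan is to invoke that theorem with the particular choice $c(x) = 1/\sigma(x)$, where $\sigma_i \triangleq \sigma(x_i)$ is the local scale estimate at sample $x_i$. Concretely, I would keep the same setup: close points $x_i$ and $x_j$, the geodesic approximated by the Euclidean segment $\gamma(t) = (1-t)x_i + t x_j$, and the metric tensor linearly interpolated along it, $M(\gamma(t)) = (1-t)M(x_i) + tM(x_j)$. The theorem then already yields
\begin{equation*}
    d_R(x_i,x_j) \approx \frac{c(x_i)+c(x_j)}{2}\,\lVert x_j - x_i\rVert_2 .
\end{equation*}

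The second step is purely algebraic. Substitute $c(x_i) = 1/\sigma_i$ and $c(x_j) = 1/\sigma_j$ to get the prefactor $\tfrac{1}{2}\big(\tfrac{1}{\sigma_i}+\tfrac{1}{\sigma_j}\big)$. Define $\sigma_{ij}$ as the reciprocal of this quantity. Since $\sigma_i,\sigma_j>0$, this is exactly the harmonic mean $\sigma_{ij} = \tfrac{2\sigma_i\sigma_j}{\sigma_i+\sigma_j}$, which is positive and well defined. This gives the stated formula $d_R(x_i,x_j) \approx \tfrac{1}{\sigma_{ij}}\lVert x_j - x_i\rVert_2$.

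Finally, I would note that $\sigma_{ij} = \sigma_{ji}$ by the symmetry of the harmonic mean, so the estimate is symmetric, consistent with the Riemannian assumption. This recovers the symmetry already stated in \cref{th: symmetric geodesic dist interp metric}.

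There is no genuine obstacle here. The only point needing care is consistency of conventions. In \cref{th: symmetric geodesic dist interp metric} the metric $M(x) = c(x)I$ is evaluated as $\lVert u\rVert_{M} = c(x)\lVert u\rVert_2$, as used in its proof, rather than $\sqrt{c(x)}\lVert u\rVert_2$. This is the convention under which $R_{x_i}(x_j-x_i) = \tfrac{1}{\sigma_i}\lVert x_j-x_i\rVert_2$ in the pipeline. I would state this explicitly so the substitution $c = 1/\sigma$ matches the data-construction scaling.
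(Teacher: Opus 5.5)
Your proposal is correct and follows the paper's route: the corollary is \cref{th: symmetric geodesic dist interp metric} with $c(x_i)=1/\sigma_i$ substituted, which makes $\sigma_{ij}$ the harmonic mean of $\sigma_i$ and $\sigma_j$. Your remark on conventions is also right and worth keeping: the paper defines $R_x(u)=\sqrt{u^\top M(x)u}$, which for $M=cI$ gives $\sqrt{c}\,\lVert u\rVert_2$, and the interpolated integral reduces to the arithmetic mean of the $c$'s only under the convention $\lVert u\rVert_{cI}=c\lVert u\rVert_2$ that the theorem's proof actually uses.
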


Dissimilarities $p_{ij}$ constructed from the symmetric estimate $d_R(x_i, x_j)$ using $p_{ij} = h^p(d_R(x_i, x_j))$ are guaranteed to by symmetric, unlike the ones from the traditional pipeline $p_{ij} = h^p(R_{x_i}(x_j - x_i))$ that incorrectly rely on tangent rather than geodesic estimates of the distance between close points.

Our rigorous estimation significantly differs from the omnipresent approach in the literature of approximating $|\gamma(t)|$ by $M(x_i) \lVert x_j -x_i\rVert_2$ and $|\gamma(1-t)|$ by $M(x_j) \lVert x_j -x_i\rVert_2$\footnote{And often, as in t-SNE, $M(x_i) = \tfrac{1}{\sigma_i}$ so the asymmetric distance estimates $\tfrac{\lVert x_j - x_i\rVert_2}{\sigma_i}$ and $\tfrac{\lVert x_j - x_i\rVert_2}{\sigma_j}$ lead to asymmetric dissimilarities.}, as this leads for non-uniform metrics to asymmetric distance estimates, which violates the Riemannian assumption. We have thus provided a remedy that is theoretically grounded in Riemannian geometry to compute symmetric dissimilarities.

\section{Update rules of Finsler t-SNE and Finsler Umap}
\label{sec: proofs finsler tsne umap}

We here prove \cref{th: finsler tsne update rule,th: finsler umap update rule}.
An important preliminary result to carry on the calculations is the following.

\begin{theorem}[Gradient canonical Finsler distances]
    \label{th: gradient canonical Finsler distances}
    The gradients of the canonical Finsler distance are given by $\frac{\partial d_{F^C}}{\partial x} (x,y) = \frac{x-y}{\lVert y-x\rVert_2} - \omega$ and $\frac{\partial d_{F^C}}{\partial y} (x,y) = \frac{y-x}{\lVert y-x\rVert_2} + \omega$. In particular, $\frac{\partial d_{F^C}}{\partial x} = - \frac{\partial d_{F^C}}{\partial y}$.
\end{theorem}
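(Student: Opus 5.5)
The plan is to compute the gradients directly from the closed-form canonical Randers distance $d_{F^C}(x,y) = \lVert y-x\rVert_2 + \omega^\top(y-x)$ recalled in the Finsler geometry paragraph. Since this expression is a sum of two terms, linearity of differentiation lets us treat the Euclidean part and the linear drift part separately. We work on the open set $x\neq y$, where $\lVert y-x\rVert_2$ is smooth. This is the same standing assumption used implicitly for the Euclidean gradient in the proof of \cref{th: correct tSNE update rule}.

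For the Euclidean part, write $\lVert y-x\rVert_2 = \big((y-x)^\top(y-x)\big)^{1/2}$ and apply the chain rule. Differentiating in $y$ gives $\tfrac{1}{2}\lVert y-x\rVert_2^{-1}\cdot 2(y-x) = \tfrac{y-x}{\lVert y-x\rVert_2}$. Differentiating in $x$ picks up a sign from the inner derivative $\partial(y-x)/\partial x = -I$, giving $\tfrac{x-y}{\lVert y-x\rVert_2}$. This is exactly the identity $\partial d_{ij}/\partial y_i = (y_i-y_j)/\lVert y_i-y_j\rVert_2$ already used in the proof of \cref{th: correct tSNE update rule}.

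For the drift part, $\omega^\top(y-x)$ is linear because $\omega$ is constant in the canonical Randers space. Its gradient in $y$ is therefore $\omega$, and its gradient in $x$ is $-\omega$.

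Summing the two parts gives both stated formulas. The antisymmetry $\partial_x d_{F^C} = -\partial_y d_{F^C}$ then follows immediately, since each summand is antisymmetric. Equivalently, $d_{F^C}(x,y)$ depends only on $y-x$: it equals $g(y-x)$ with $g(u) = \lVert u\rVert_2 + \omega^\top u$, so $\partial_x = -\nabla g(y-x)$ and $\partial_y = \nabla g(y-x)$. I would state this translation-invariance argument as the conceptual reason for the antisymmetry.

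No step is a real obstacle. The only delicate point is the non-differentiability at $x=y$, which I would dispose of by restricting to distinct points, as is standard for Euclidean distances. It is also worth remarking that the asymmetry of $d_{F^C}$ does not spoil the antisymmetry of its gradient; this is the property exploited afterwards in \cref{th: finsler tsne update rule,th: finsler umap update rule}.
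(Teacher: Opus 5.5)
Your proposal is correct and takes the same approach as the paper, which gives no written proof and calls the result immediate: both rest on differentiating the closed form $d_{F^C}(x,y)=\lVert y-x\rVert_2+\omega^\top(y-x)$ term by term. Your extra remarks, restricting to $x\neq y$ and writing $d_{F^C}(x,y)=g(y-x)$ to explain the antisymmetry, are sound and make explicit what the paper leaves implicit.
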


Although the result might look trivial as the proof is immediate, the fact that the gradient of the asymmetric canonical Randers distance shares the same antisymmetry as the gradient of the Euclidean distance is remarkable. It is another particularity of this Finsler metric making it an asymmetric analogue to the Euclidean space that was not observed in \cite{dages2025finsler}. We can then explicitly compute the gradients for the Finsler t-SNE and Finsler Umap update rules.

We first derive the update rule for Finsler t-SNE.

\begin{proof}[\Cref{th: finsler tsne update rule} -- Finsler t-SNE]
    For compactness, denote $d_{ij}^F = d_{F^C}(y_i, y_j)$ to be the asymmetric Finsler distance in the canonical Randers space between the embedded points $y_i$ and $y_j$. Following the proof of standard Euclidean t-SNE, we write
    \begin{equation}
        \frac{\partial \mathcal{L}}{\partial y_i} 
            = \sum_j \frac{\partial \mathcal{L}}{\partial d_{ij}^F} \frac{\partial d_{ij}^F}{\partial y_i} + \frac{\partial \mathcal{L}}{\partial d_{ji}^F}\frac{\partial d_{ji}^F}{\partial y_i}.
    \end{equation}
    Using \cref{th: gradient canonical Finsler distances}, we have $\tfrac{\partial d_{ij}^F}{\partial y_i} = \tfrac{y_i - y_j}{\lVert y_i - y_j\rVert_2} - \omega$ and $\tfrac{\partial d_{ji}^F}{\partial y_i} = \tfrac{y_i - y_j}{\lVert y_i - y_j\rVert_2} + \omega$. We then have
    \begin{align}
        \frac{\partial \mathcal{L}}{\partial y_i} 
        = &\sum\limits_j \left(\frac{\partial \mathcal{L}}{\partial d_{ij}^F} + \frac{\partial \mathcal{L}}{\partial d_{ji}^F}\right) \frac{y_i - y_j}{\lVert y_i-y_j \rVert_2} 
        + \sum\limits_j \left(-\frac{\partial \mathcal{L}}{\partial d_{ij}^F} + \frac{\partial \mathcal{L}}{\partial d_{ji}^F}\right) \omega.
    \end{align}
    We now focus on the term $\tfrac{\partial \mathcal{L}}{\partial d_{ij}^F}$, which behaves exactly like its counterpart in standard Euclidean t-SNE. Nevertheless, we redo the derivations for completeness. Denoting the denominator $Z = \sum_{k\neq l} t_{kl}^F$ in $q_{ij}^F$, we have
    \begin{align}
        \frac{\partial \mathcal{L}}{\partial d_{ij}^F} 
        &= - \sum\limits_{k\neq l} p_{kl} \Bigg[ \frac{\partial}{\partial d_{ij}^F}\left(-\frac{\nu +1}{2}\ln\left(1+\frac{(d_{kl}^F)^2}{\nu}\right)\right) 
        - \frac{\partial}{\partial d_{ij}^F}(\ln(Z))\Bigg] \\
        &= -\sum\limits_{k\neq l} p_{kl} \Bigg[-\frac{\nu+1}{2}\frac{1}{1+\frac{(d_{kl}^F)^2}{\nu}}\frac{2d_{kl}^F}{\nu}\frac{\partial d_{kl}^F}{\partial d_{ij}^F} - \frac{1}{Z}\frac{\partial Z}{\partial d_{ij}^F}\Bigg].
    \end{align}
    Since $\tfrac{\partial d_{kl}^F}{\partial d_{ij}^F} = 1$ if $(i,j)=(k,l)$ and $0$ otherwise, $\tfrac{\partial Z}{\partial d_{ij}^F} = -\tfrac{\nu+1}{2}\big(1+\tfrac{(d_{ij}^F)^2}{\nu}\big)^{-\frac{\nu+3}{2}}\frac{2d_{ij}^F}{\nu}$, and $\sum_{k\neq l}p_{kl} =1$, we have
    \begin{align}
        \frac{\partial \mathcal{L}}{\partial d_{ij}^F} 
        &= \frac{\nu+1}{\nu}\frac{p_{ij}d_{ij}^F}{1+\frac{(d_{ij}^F)^2}{\nu}} 
        - \sum\limits_{k\neq l} p_{kl} \frac{\nu+1}{\nu}\frac{d_{ij}^F\left(1+\frac{(d_{ij}^F)^2}{\nu}\right)^{-\frac{\nu+3}{2}}}{Z} \\
        &= \frac{\nu+1}{\nu}(p_{ij} - q_{ij}^F) d_{ij}^F \left(1+\frac{(d_{ij}^F)^2}{\nu}\right)^{-1}.
    \end{align}
    Returning to $\tfrac{\partial \mathcal{L}}{\partial y_i}$, we get the desired result
    \begin{align*}
        \frac{\partial \mathcal{L}}{\partial y_i} &= 
        \frac{\nu+1}{\nu} \sum\limits_j \Bigg[ (p_{ij} - q_{ij}^F) t_{ij}^F \frac{d_{F^C}(y_i, y_j)}{\lVert y_j - y_i\rVert_2} 
        + (p_{ji} - q_{ji}^F)t_{ji}^F\frac{d_{F^C}(y_j, y_i)}{\lVert y_i - y_j\rVert_2} \Bigg](y_i - y_j)\\
        &+
        \frac{\nu+1}{\nu} \sum\limits_j \Bigg[ -(p_{ij} - q_{ij}^F) t_{ij}^F d_{F^C}(y_i, y_j)
        + (p_{ji} - q_{ji}^F)t_{ji}^F d_{F^C}(y_j, y_i) \Bigg]\omega.
    \end{align*}
    \qed
    
\end{proof}

We now derive the update rule for Finsler Umap.

\begin{proof}[\Cref{th: finsler umap update rule} -- Finsler Umap]
    The gradient of the objective is given by
    \begin{equation}
        \frac{\partial \mathcal{L}}{\partial y_i} = \sum\limits_{j} p_{ij} \cdot \frac{\partial c_{ij}^a}{\partial y_i} 
        + p_{ji}\frac{\partial c_{ji}^a}{\partial y_i} 
        + (1 - p_{ij}) \cdot \frac{\partial c_{ij}^r}{\partial y_i}
        + (1 - p_{ji}) \cdot \frac{\partial c_{ji}^r}{\partial y_i}.
    \end{equation}

    Denote $d_{ij}^F = d_{F^C}(y_i, y_j)$ the canonical Randers distance between.
    We first focus on the attractive forces $c_{ij}^a = -\ln(q_{ij}^F)$. We have
    \begin{align}
        \frac{\partial c_{ij}^a}{\partial y_i} 
        &= -\frac{1}{q_{ij}^F}\frac{\partial q_{ij}^F}{\partial d_{ij}^F}\frac{\partial d_{ij}^F}{\partial y_i} \\
        &= \frac{1+a(d_{ij}^F)^{2b}}{\big(1+a(d_{ij}^F)^{2b}\big)^2}2ab (d_{ij}^F)^{2b-1} \frac{\partial d_{ij}^F}{\partial y_i}.
    \end{align}
    Using \cref{th: gradient canonical Finsler distances}, we have
    \begin{equation}
        \frac{\partial c_{ij}^a}{\partial y_i} 
        = \frac{2ab\frac{(d_{ij}^F)^{2b-1}}{\lVert y_i - y_j\rVert_2}}{1+a(d_{ij}^F)^{2b}}(y_i - y_j) - \frac{2ab(d_{ij}^F)^{2b-1}}{1+a(d_{ij}^F)^{2b}}\omega.
    \end{equation}
    Likewise, a similar calculation provides
    \begin{align}
        \frac{\partial c_{ij}^a}{\partial y_j} 
        &= \frac{2ab\frac{(d_{ij}^F)^{2b-1}}{\lVert y_i - y_j\rVert_2}}{1+a(d_{ij}^F)^{2b}}(y_j - y_i) + \frac{2ab(d_{ij}^F)^{2b-1}}{1+a(d_{ij}^F)^{2b}}\omega\\
        &= - \frac{\partial c_{ij}^a}{\partial y_i}.
    \end{align}

    We now focus on the repulsive forces $c_{ij}^r = -\ln (1-q_{ij}^F)$. We have
    \begin{align}
        \frac{\partial c_{ij}^r}{\partial y_i} 
        &= \frac{1}{1-q_{ij}^F} \frac{\partial q_{ij}^F}{\partial d_{ij}^F}\frac{\partial d_{ij}^F}{\partial y_i}\\
        &= \frac{1}{1-q_{ij}^F} \frac{-1}{\big(1+a(d_{ij}^F)^{2b}\big)^2}2ab(d_{ij}^F)^{2b-1}\Bigg(\frac{y_i - y_j}{\lVert y_i-y_j\rVert_2} - \omega\Bigg) \\
        &= \frac{-2b (q_{ij}^F)^2}{(1-q_{ij}^F)}\Big((q_{ij}^F)^{-1}-1\Big)(d_{ij}^F)^{-1}\Bigg(\frac{y_i - y_j}{\lVert y_i - y_j\rVert_2} - \omega\Bigg) \\
        &= -2b q_{ij}^F (d_{ij}^F)^{-1}\Bigg(\frac{y_i - y_j}{\lVert y_i - y_j\rVert_2} - \omega\Bigg) \\
        &= \frac{-2b}{\Big(1+a(d_{ij}^F)^{2b}\Big) d_{ij}^F \lVert y_i - y_j \rVert_2} (y_i - y_j) + \frac{2b}{\Big(1+a(d_{ij}^F)^{2b}\Big) d_{ij}^F}\omega.
    \end{align}
    Likewise, a similar calculation provides
    \begin{align}
        \frac{\partial c_{ij}^r}{\partial y_j} 
        &= \frac{-2b}{\Big(1+a(d_{ij}^F)^{2b}\Big) d_{ij}^F \lVert y_i - y_j \rVert_2} (y_j - y_i) - \frac{2b}{\Big(1+a(d_{ij}^F)^{2b}\Big) d_{ij}^F}\omega \\
        &= -\frac{\partial c_{ij}^r}{\partial y_i}.
    \end{align}
    \qed

\end{proof}

\section{Asymmetric Finsler generalisation of spectral methods in manifold learning}
\label{sec: Asymmetric Finsler generalisation of spectral methods in manifold learning}

As mentioned in the main manuscript, we can generalise spectral methods to our Finsler setting. 
As discussed in \cref{sec: spectral methods appendix}, spectral methods, e.g.\ Isomap \cite{tenenbaum2000global}, Laplacian eigenmaps \cite{belkin2003laplacian}, diffusion maps \cite{coifman2006diffusion}, and LLE \cite{roweis2000nonlinear}, can be reformulated as quadratic optimisation problems involving a minimisation on pairwise Euclidean distances between embedded points $y_i$. We propose to naively generalise such methods by replacing the Euclidean distance $\lVert \cdot\rVert_2$ involving embedded points $y_i$ by the Finsler distance $d_F$ of the embedding space, e.g.\ the canonical Randers space distance. While this simple switch generalises these methods to embed asymmetric data, it comes at a major downside: the solution of the new Finsler objectives are no longer given by a simple eigendecomposition. Indeed, the asymmetry breaks the kernel formulation: the Finsler generalisation of spectral methods is no longer given by spectral optimisation. Rather than an elegant closed-form solution, we must instead perform iterative minimisation, e.g.\ via gradient descent, which is slower and more susceptible to get stuck in local minima.
For this reason, we did not implement these generalisations and focused instead on those that do not strongly distort the solvers, like Finsler t-SNE and Finsler Umap.

As examples, here are some generalised Finsler objectives $\mathcal{L}$ that can be minimised by gradient descent. By doing so we generalise spectral methods to asymmetry.

\paragraph{Finsler Isomap.}
Our proposed Finsler objective generalising Isomap to asymmetry is $\mathcal{L} = \sum\limits w_{ij}(d_F^2(y_i, y_j) - \boldsymbol{D}_{ij}^2)^2$. Note that unlike in the Euclidean case, we can include non-uniform weights $w_{ij}$, as this does not impact the solver\footnote{In the Euclidean case, non-uniform weights make the problem no longer spectral and iterative optimisation must be performed.}. This method is similar to Finsler MDS, by minimising the Finsler strain rather than the Finsler stress. As in the Euclidean case, both methods should lead to similar results, but this time Finsler Isomap is not solved with a closed-form solution. Instead, gradient descent should be performed. In contrast, Finsler MDS speeds up the descent via the Finsler SMACOF algorithm, which produces faster descent than raw gradient descent. As such, we recommend Finsler MDS rather than Finsler Isomap.

\paragraph{Finsler Laplacian eigenmaps.}
Generalising Laplacian eigenmaps to asymmetry, we aim to minimise the Finsler objective $\mathcal{L} = \tfrac{1}{2}\sum W_{ij} d_F^2(s_iy_i, s_jy_j)$, where $s_i$ is an optional scaling factor to account for different choices of Laplacians: $s_i = 1$ for the unnormalised graph Laplacian and $s_i = \tfrac{1}{\sqrt{A_i}}$ for the symmetric graph Laplacian. The optimisation should be constrained to $\boldsymbol{Y}^\top \boldsymbol{A}\boldsymbol{Y} = \boldsymbol{I}$, which can be enforced by Lagrangian optimisation. The final proposed objective is thus $\mathcal{L} = \tfrac{1}{2}\sum W_{ij} d_F^2(s_iy_i, s_jy_j)^2 + \lambda \lVert \boldsymbol{Y}^\top \boldsymbol{A}\boldsymbol{Y} - \boldsymbol{I}\rVert_F^2$ for some scalar $\lambda$\footnote{\label{footnote: lagrangian multiplier as hyperparameter} Which can be treated as a hyperparameter.}. Gradient descent would be performed on the Lagrangian formulation. Note that in our Finsler case and unlike in the Euclidean case, the weights $W_{ij}$ may be asymmetric.

\paragraph{Finsler diffusion maps.}
Generalising diffusion maps to asymmetry, we aim to minimise the Finsler objective  $\mathcal{L} = \tfrac{1}{2}\sum \boldsymbol{P}_{ij}^t d_F^2(y_i, y_j)$, subject to the constraint $\boldsymbol{\boldsymbol{Y}^\top \boldsymbol{Y}} = \boldsymbol{\Lambda}^{2t}$. To enforce this, we propose a Lagrangian formulation 
objective $\mathcal{L} = \tfrac{1}{2}\sum \boldsymbol{P}_{ij}^t d_F^2(y_i, y_j) + \lambda \lVert \boldsymbol{\boldsymbol{Y}^\top \boldsymbol{Y}} - \boldsymbol{\Lambda}^{2t}\rVert_F^2$ for some scalar $\lambda$, as in Finsler Laplacian eigenmaps, to be minimised with gradient descent.
Note that in our Finsler and unlike in Euclidean case, the weights $W_{ij}$ may be asymmetric.

\paragraph{Finsler locally linear embeddings.}
Our proposed Finsler objective generalising the LLE method to asymmetry is
$\mathcal{L} = \sum d_F^2(y_i, \sum_j W_{ij}y_j)$, where local weights $W_{ij}$ are solution to the same constrained least-squares as in the Euclidean case. However, should the ambient space of the data be itself asymmetric, which would greatly justify the use of Finsler LLE over Euclidean LLE, for instance with a Finsler distance $d_F^{(X)}$ instead of the Euclidean distance, then the weights $W_{ij}$ should be the solution to the
following
constrained optimisation
problem:
$\min \big(d_F^{(X)}(x_i, \sum_{j\in\mathcal{N}(i)} W_{ij} x_j)\big)^2$ with $\sum_j W_{ij} = 1$ for all $i$. Both the Finsler LLE  main objective and the optional Finsler variant defining the local weights $W_{ij}$ can be solved via gradient descent.

\section{Implementation details}
\label{sec: implementation details}

\subsection{Data}
\label{sec: implementation details data}

\subsubsection{Disk data.}
\label{sec: data disk}

The points are grid-sampled using uniform polar coordinates $(\rho_i,\theta_i)$, with $20$ angles $\theta_i$ and $\tfrac{n}{20} = 15$ radii $\rho_i$.

\subsubsection{Swiss roll.}
\label{sec: data swiss roll}

The Swiss roll manifold is parametrised in the unit square by $(u,v)\in[0,1]^2$. The Swiss roll transformation used in this paper is given by $x = \big(\tilde{u}\cos(\tilde{u}), \tilde{v}, \tilde{u}\sin(\tilde{u})\big)^\top\in\mathbb{R}^3$, where the stretched parametrisation is $\tilde{u} = 3\pi \big(u + \tfrac{1}{2}\big)$ and $\tilde{v} = 20 v$.
The points were sampled by uniformly grid sampling $(u,v)$ in the unit square $[0,1]^2$ using $n = \left\lfloor \tfrac{5}{2}\sqrt{\tilde{n}} \right\rfloor \times \left\lfloor \tfrac{2}{5}\sqrt{\tilde{n}}\right\rfloor$ samples. For $\tilde{N} = 2000$ this leads to $N = 1887$ effectively sampled points on the Swiss roll.

\subsubsection{Persistence data.}
\label{sec: data persistence}

This corresponds to the clustered manifold experiment in \cref{sec: clustered manifold toy exp}.
For $N$ samples, we first generate the labels of the datapoints. To do so, we generate non-integer and unbounded soft labels $\tilde{c}_i$ using an exponential distribution $\tilde{c}_i\sim \mathrm{Exp(\lambda_{\mathrm{exp}})}$ with rate parameter $\lambda_{\mathrm{exp}}$. We convert the unbounded soft labels to the bounded integer labels $c_i\in\{0,\cdots, C-1\}$ for $C$ number of Gaussians using $c_i = \mathrm{clip}\left(\Big\lfloor \tfrac{\tilde{c_i}}{C q_{\mathrm{exp}}(p)}\Big\rfloor, 0, C-1\right)$, where $\mathrm{clip}(\cdot, v_{\min}, v_{\max})$ is the clip function to min and maximum values $v_{\min}$ and $v_{\max}$\footnote{The purpose of the clip function is the bound the obtained integer values to at most $C-1$ by merging the largest unbounded integer labels to the last Gaussian with smallest number of datapoints.} and $q_{\mathrm{exp}}(p)$ is the $p_{\mathrm{exp}}$-th quantile of the exponential distribution and is given by $q_{\mathrm{exp}}(p) = \tfrac{-\ln(1-p_{\mathrm{exp}})}{\lambda_{\mathrm{exp}}}$. In our experiments, $N=500$, $\lambda_{\mathrm{exp}} = 1$, $p_{\mathrm{exp}}=0.99$, and $C=5$, which lead with our random seed to the following number of datapoints per Gaussian: $294, 121, 48, 28, 9$.

We then generate the $C$ random Gaussian distribution $\mathcal{N}(\mu_j, \Sigma_j)$ in the input data feature space $\mathbb{R}^n$ used for sampling the random datapoints. We randomly sample the means $\mu_j\sim\mathcal{U}([0,1]^n)$ uniformly in the unit cube of $\mathbb{R}^n$. We also randomly sample the symmetric positive definite covariance matrices $\Sigma_j$ but we squash the eigenvalues using a rescaled sigmoid function in order to bound their scale. In practice, we uniformly generate $A_j\sim \mathcal{U}([0,1]^{n\times n})$ a random matrix, from which we construct a random symmetric positive matrix $\tilde{\Sigma}_j = A A^\top$, albeit with unbounded eigenvalues and not necessarily definite. Denote $B_j$ and $\tilde\Lambda_j$ as the unitary basis of eigenvectors and the eigenvalues of $\tilde\Sigma_j = B_j \tilde\Lambda_j B_j^\top$. We bound the scale of the eigenvalues by applying the scaled sigmoid function $\sigma_s(x) = \tfrac{s}{1+e^{-x}}$ to the eigenvalues of $\tilde\Sigma_j$, giving the new diagonal matrix $\sigma_s(\tilde\Lambda_j)$. To ensure that the resulting matrix is numerically definite (no close to zero eigenvalue), we perturb the eigenvalues by $\varepsilon>0$. In summary, the random covariance matrices are given by $\Sigma_j = B_j \sigma_s(\Lambda_j)B_j^\top + \varepsilon I$, and we took $\varepsilon=10^{-5}$ and $s=0.1$ as the maximum eigenvalue scale in our experiments.

We then generate each input datapoint $x_i$ by sampling the $c_i$-th random Gaussian $\mathcal{N}(\mu_{c_i}, \Sigma_{c_i})$ according to its randomly sampled label $c_i$.

\subsubsection{Reference classification datasets.}
\label{sec: data reference classification datasets}

The sixteen reference benchmarks we use are reference datasets in modern data learning. They form an extensive and particularly wide variety of classification datasets. 
Iris (150 tabular samples, 3 classes) \cite{fisher1936use} is the only non-image task. Stepping up from the smallest images, we first consider MNIST \cite{lecun2002gradient}, Fashion-MNIST \cite{xiao2017fashion}, Kuzushiji-MNIST \cite{clanuwat2018deep}, EMNIST (train split, 60000 images, 10 classes each) and EMNIST-Balanced (train split, 112 800 images, 47 classes) \cite{cohen2017emnist} of $28\times 28$-pixel grayscale characters. At the next resolution tier, CIFAR10 and CIFAR100 (train split) \cite{krizhevsky2009learning} each provide 50000 colour images of $32\times 32$ pixels with 10 and 100 classes respectively; each picture is transformed to a 768-dimensional feature vector using a frozen CLIP \cite{radford2021learning} ViT-B32 pretrained on Imagenet as raw images are poorly embedded with traditional manifold learning techniques. Finally, we move to variable-resolution natural images represented by 2048-dimensional global-average-pooled features from a frozen Imagenet-pretrained ResNet50 \cite{he2016deep}: DTD (train split, 1880 images, 47 classes) \cite{cimpoi2014describing}, Caltech101 (8677 images, 101 classes) \cite{fei2004learning}, Caltech256 (30607 images, 257 classes) \cite{griffin2007caltech}, OxfordFlowers102 (train split, 1020 images, 102 classes) \cite{nilsback2008automated}, OxfordIIITPet (trainval split, 3680 images, 37 classes) \cite{parkhi2012cats}, GTSRB (train split, 26640 images, 43 classes) \cite{stallkamp2012man}, Imagenette (train split, 9469 images, 10 classes) \cite{howard2019imagenette} and the large-scale ImageNet (train split, 1281167 images, 1000 classes) \cite{deng2009imagenet}. No further processing is performed beyond the fixed CLIP or ResNet50 embedding step. Recall that as the original t-SNE algorithm \cite{van2008visualizing} is slow, we only ran it and its Finsler variant on the smaller datasets.

\subsection{Scores}
\label{sec: appendix scores}

\noindent \textbf{Reference classification datasets.}
Our primary quantitative scores measure the fitting quality of kMeans clustering to the class labels of the data. They thus reflect how well we preserve the data manifold after embedding.
However, the clustering community often uses other scores to measure the quality of their clustering, such as the Silhouette (SIL) \cite{rousseeuw1987silhouettes}, Davies-Bouldin Index (DBI) \cite{davies2009cluster}, and Calinski-Harabasz Index (CHI) \cite{calinski1974dendrite}. However, these measures are unrelated to the data labels and solely evaluate if the clusters ``look nice''\footnote{Clusters are well-separated, condensed, spherically shaped...}. As such, they do not reflect how well we preserve the data manifold after embedding. This makes the label-unrelated scores -- SIL, DBI, and CHI -- secondary compared to our primary label-related measures --  AMI, ARI, NMI, HOM, COM V-M, and FMI. It is of no importance that a method provides clusters that ``look nicer'' if the clusters fit worse to the data labels and correspond less well to the reality of the data manifold. In the supplementary, we include to results the label-unrelated scores as an indication of the shape of our embedded clusters, but the primary and most interesting comparison quantities are the label-related ones. 

In addition to the unsupervised kMeans clustering scores, whether related or not to the data labels, we also provide supervised scores, which are by definition label-related. They consist in the accuracy of trained simple classifiers on the embeddings. In our experiments, we use both a linear classifier and a 5 nearest neighbour classifier. As test accuracy depends on the train-test splits, we report the mean results for 2-repeated stratified 5-fold cross-validation (10 runs). While the unsupervised label-related scores are the primary scores in manifold learning, supervised scores bring additional insights to the performance of the methods.

\subsection{Methods}
\label{sec: implementation details methods}

All methods share the same initial precomputed kNN graph for computing data dissimilarities. Following what is done in Umap \cite{mcinnes2018umap}, we compute approximate nearest neighbours via the nearest neighbour descent algorithm \cite{dong2011efficient} for $k$ neighbours, with $k=15$.

We use the default hyperparameters of t-SNE and Umap, and take the same for our Finsler t-SNE and Finsler Umap. In particular, the min distance hyperparameter is set to $0.1$ (Umap and Finsler Umap) and the perplexity to $30$ (t-SNE and Finsler t-SNE).

Optionally, in cases where manifold preservation is the main objective, we propose a refinement of the traditional Umap pipeline, to be incorporated as well in Finsler Umap, to geodesically extend the distances of the kNN graph and truncate the extended graph to keep only the $k^+$ smallest edges per node, with $k^+ > k$.

For both t-SNE and Umap, either the traditional Euclidean or our Finsler pipeline, dissimilarities are computed directly on the sparse kNN graph. For methods requiring geodesic extensions of the distances in the kNN graph, such as traditional MDS, like Isomap, and Finsler MDS, we need to connect the distance graph.
To do so, we follow the methodology in Scikit-learn \cite{pedregosa2011scikit} for its Isomap implementation to greedily connect the closest pair of points in the Euclidean sense between any two connected components.

For initialisation, we followed the traditional recipes in the Euclidean case and generalised them to the Finsler case for our Finsler pipelines. Vanilla and Finsler t-SNE (resp.\ Umap) are initialised with the default deterministic PCA (resp.\ spectral) embedding. Vanilla and Finsler SMACOF, along with Finsler MDS using gradient descent, are initialised using Isomap. Such initialisation strategies are considered standard for the standard Euclidean methods (SMACOF, t-SNE and Umap). 
Following the methodology of \cite{dages2025finsler}, as the canonical Finsler space is the Euclidean hyperplane augmented with an axis of asymmetry, the initialisations in the Finsler methods are performed with one less dimension, to which is then added an extra dimension with 0 entry, and we then rotate the initialisation so that the new axis is along $\omega$\footnote{When taking $\omega$ along the last axis, no rotation is needed.}.

For Finsler MDS with our novel gradient descent implementation instead of the unstable and unscalable Finsler Smacof algorithm \cite{dages2025finsler}, we use an Adam optimiser \cite{kingma2014adam} with default hyperparameters for $100$ epochs, weight decay of $10^{-5}$, and learning rate $\eta$. We also use cosine annealing as a learning rate scheduler with maximum temperature parameter of $100$.

\subsubsection{Disk data.}

In this case, we run all pipelines with our default hyperparameters. The learning rate of Finsler MDS with gradient descent is set to $\eta = 0.001$.

\subsubsection{Swiss roll.}

To better preserve the manifold, we truncate the geodesic extension of the kNN graph with $k^+ = 50$. The learning rate of Finsler MDS with gradient descent is set to $\eta = 0.1$.

\subsubsection{Persistence data.}

To better preserve the manifold, we truncate the geodesic extension of the kNN graph with $k^+ = 300$. The learning rate of Finsler MDS with gradient descent is set to $\eta = 0.1$.

\subsubsection{Classification datasets.}

We took our default hyperparameters.
On Imagenette, we additionally evaluated Poincar\'e maps \cite{klimovskaia2020poincare} with default hyperparameters. It is a symmetric hyperbolic method, providing an alternative method for comparison that is non-Euclidean yet symmetric. We are not aware of non- or pseudo-Riemannian manifold learning methods and thus did not evaluate any.

\subsection{Resources}

All methods, whether the traditional Euclidean or our Finsler ones, do not require intensive resources: non-high end commercial CPUs are enough and no GPU is needed. To achieve this performance, advanced methods, like t-SNE and Umap and their Finsler variants, must be implemented efficiently. That is why we derive explicitly the calculation of gradients and use them in fast code,  e.g.\ Numba's \cite{lam2015numba} \verb|njit| decorator in Umap.
In terms of time, Euclidean and Finsler Umap are of same order of magnitude,
whereas Finsler t-SNE is slower than its Euclidean counterpart.
On Imagenette, using a Intel(R) Xeon(R) Gold 6348 CPU @ 2.60GHz, a single embedding (\textit{Optimization stage}), for a default 200 iterations, takes 10s for Umap with random initialisation and 16s with the slower yet default spectral initialisation, whereas Finsler Umap takes 12s. For vanilla t-SNE, it takes around 2500s compared to 9700s for Finsler t-SNE.

\section{Further experiments}
\label{sec: further experiments}

\begin{figure*}[htbp]
    \captionsetup[sub]{labelformat=empty,justification=centering,singlelinecheck=false}
    \centering
    \savebox{\largestimage}{\includegraphics[width=0.12\textwidth]{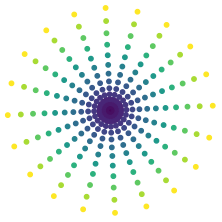}}%
    \begin{tabular}{@{}c@{}c@{}c@{}c@{}c@{}c@{}c@{}c@{}}
        \multirow[t]{2}{*}{
            \begin{tabular}{@{}cc@{}}
                \raisebox{+2em}[0pt][0pt]{%
                    \rotatebox{90}{\scriptsize $n=300$}
                }
                &
                \begin{subfigure}[b]{0.12\textwidth}
                    \centering
                    \subcaption{\scriptsize Data}
                    \includegraphics[width=\textwidth]{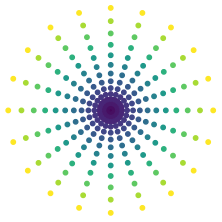}
                \end{subfigure}
            \end{tabular}
        }
        &
        
        \begin{subfigure}[b]{0.12\textwidth}
            \centering
            \subcaption{\scriptsize Isomap}
            \includegraphics[width=\textwidth]{Pictures/toy_disk_single_peak/n_300/single_peak_n_300_isomap.png}
        \end{subfigure}
        &
        \begin{subfigure}[b]{0.07\textwidth}
            \centering
            \subcaption{\scriptsize t-SNE}
            \includegraphics[width=\textwidth]{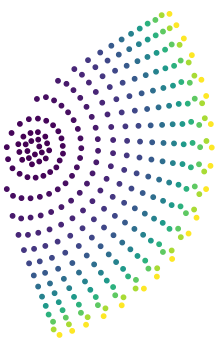}
        \end{subfigure}
        &
        \begin{subfigure}[b]{0.12\textwidth}
            \centering
            \subcaption{\scriptsize Umap}
            \raisebox{\dimexpr.5\ht\largestimage-.5\height}{%
                \includegraphics[width=\textwidth]{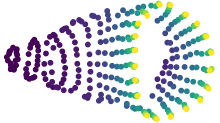}
            }
        \end{subfigure}
        &
        \begin{subfigure}[b]{0.12\textwidth}
            \centering
            \subcaption{\scriptsize Finsler MDS \\ (SMACOF)}
            \includegraphics[width=\textwidth]{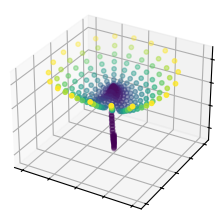}
        \end{subfigure}
        &
        \begin{subfigure}[b]{0.12\textwidth}
            \centering
            \subcaption{\scriptsize Finsler MDS \\ (GD)}
            \includegraphics[width=\textwidth]{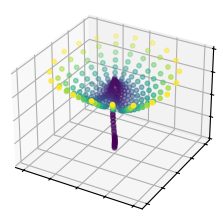}
        \end{subfigure}
        &
        \begin{subfigure}[b]{0.12\textwidth}
            \centering
            \subcaption{\scriptsize Finsler t-SNE}
            \includegraphics[width=\textwidth]{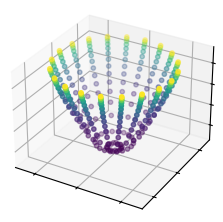}
        \end{subfigure}
        &
        \begin{subfigure}[b]{0.12\textwidth}
            \centering
            \subcaption{\scriptsize Finsler Umap}
            \includegraphics[width=\textwidth]{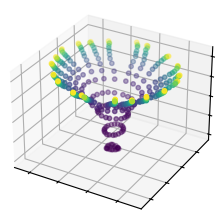}
        \end{subfigure}
        \\[0.6em]
        &
        \begin{subfigure}[b]{0.12\textwidth}
            \centering
            \includegraphics[width=\textwidth]{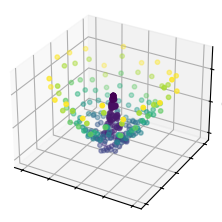}
        \end{subfigure}
        &
        \begin{subfigure}[b]{0.12\textwidth}
            \centering
            \includegraphics[width=\textwidth]{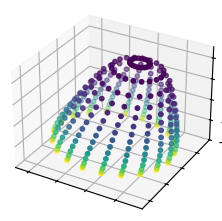}
        \end{subfigure}
        &
        \begin{subfigure}[b]{0.12\textwidth}
            \centering
            \includegraphics[width=\textwidth]{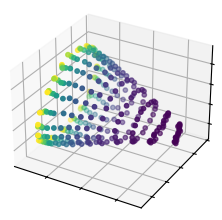}
        \end{subfigure}
        &
        \begin{subfigure}[b]{0.12\textwidth}
            \centering
            \includegraphics[width=\textwidth]{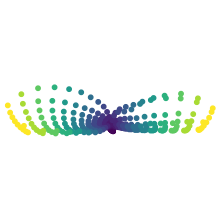}
        \end{subfigure}
        &
        \begin{subfigure}[b]{0.12\textwidth}
            \centering
            \includegraphics[width=\textwidth]{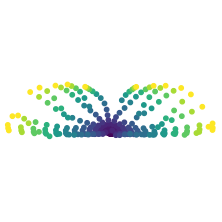}
        \end{subfigure}
        &
        \begin{subfigure}[b]{0.10\textwidth}
            \centering
            \includegraphics[width=\textwidth]{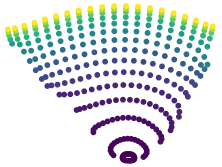}
        \end{subfigure}
        &
        \begin{subfigure}[b]{0.10\textwidth}
            \centering
            \includegraphics[width=\textwidth]{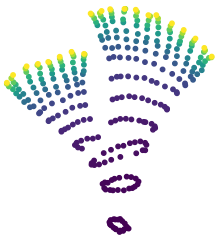}
        \end{subfigure}
        \\
        \multirow[t]{2}{*}{
            \begin{tabular}{@{}cc@{}}
                \raisebox{+2em}[0pt][0pt]{%
                    \rotatebox{90}{\scriptsize $n=1000$}
                }
                &
                \begin{subfigure}[b]{0.12\textwidth}
                    \centering
                    \subcaption{\scriptsize Data}
                    \includegraphics[width=\textwidth]{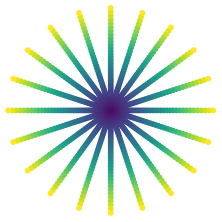}
                \end{subfigure}
            \end{tabular}
        }
        &
        \begin{subfigure}[b]{0.12\textwidth}
            \centering
            \subcaption{\scriptsize Isomap}
            \includegraphics[width=\textwidth]{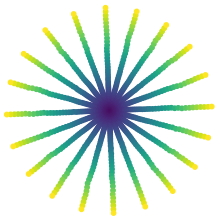}
        \end{subfigure}
        &
        \begin{subfigure}[b]{0.12\textwidth}
            \centering
            \subcaption{\scriptsize t-SNE}
            \includegraphics[width=\textwidth]{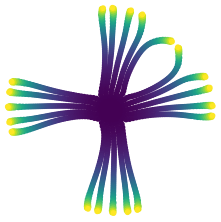}
        \end{subfigure}
        &
        \begin{subfigure}[b]{0.07\textwidth}
            \centering
            \subcaption{\scriptsize Umap}
            \includegraphics[width=\textwidth]{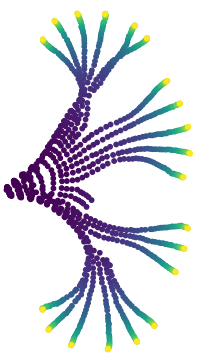}
        \end{subfigure}
        &
        \begin{subfigure}[b]{0.12\textwidth}
            \centering
            \subcaption{\scriptsize Finsler MDS \\ (SMACOF)}
            \includegraphics[width=\textwidth]{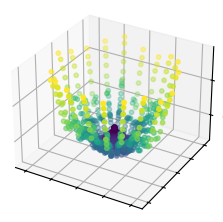}
        \end{subfigure}
        &
        \begin{subfigure}[b]{0.12\textwidth}
            \centering
            \subcaption{\scriptsize Finsler MDS \\ (GD)}
            \includegraphics[width=\textwidth]{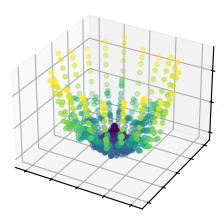}
        \end{subfigure}
        &
        \begin{subfigure}[b]{0.12\textwidth}
            \centering
            \subcaption{\scriptsize Finsler t-SNE}
            \includegraphics[width=\textwidth]{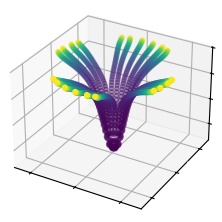}
        \end{subfigure}
        &
        \begin{subfigure}[b]{0.12\textwidth}
            \centering
            \subcaption{\scriptsize Finsler Umap}
            \includegraphics[width=\textwidth]{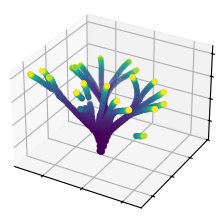}
        \end{subfigure}
        \\[0.6em]
        &
        \begin{subfigure}[b]{0.12\textwidth}
            \centering
            \includegraphics[width=\textwidth]{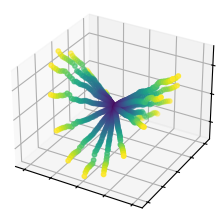}
        \end{subfigure}
        &
        \begin{subfigure}[b]{0.12\textwidth}
            \centering
            \includegraphics[width=\textwidth]{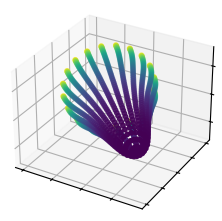}
        \end{subfigure}
        &
        \begin{subfigure}[b]{0.12\textwidth}
            \centering
            \includegraphics[width=\textwidth]{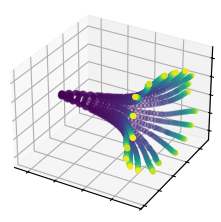}
        \end{subfigure}
        &
        \begin{subfigure}[b]{0.12\textwidth}
            \centering
            \includegraphics[width=\textwidth]{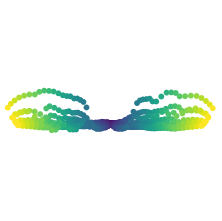}
        \end{subfigure}
        &
        \begin{subfigure}[b]{0.12\textwidth}
            \centering
            \includegraphics[width=\textwidth]{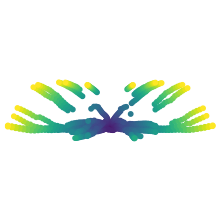}
        \end{subfigure}
        &
        \begin{subfigure}[b]{0.10\textwidth}
            \centering
            \includegraphics[width=\textwidth]{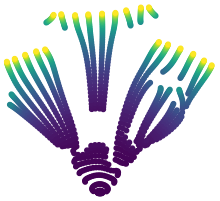}
        \end{subfigure}
        &
        \begin{subfigure}[b]{0.10\textwidth}
            \centering
            \includegraphics[width=\textwidth]{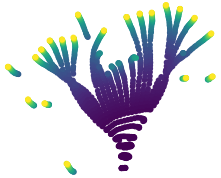}
        \end{subfigure}
        \\
        \multirow[t]{2}{*}{
            \begin{tabular}{@{}cc@{}}
                \raisebox{+2em}[0pt][0pt]{%
                    \rotatebox{90}{\scriptsize $n=3000$}
                }
                &
                \begin{subfigure}[b]{0.12\textwidth}
                    \centering
                    \subcaption{\scriptsize Data}
                    \includegraphics[width=\textwidth]{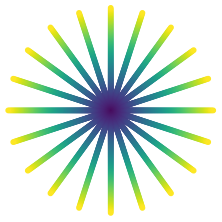}
                \end{subfigure}
            \end{tabular}
        }
        &
        \begin{subfigure}[b]{0.12\textwidth}
            \centering
            \subcaption{\scriptsize Isomap}
            \includegraphics[width=\textwidth]{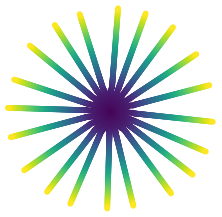}
        \end{subfigure}
        &
        \begin{subfigure}[b]{0.12\textwidth}
            \centering
            \subcaption{\scriptsize t-SNE}
            \includegraphics[width=\textwidth]{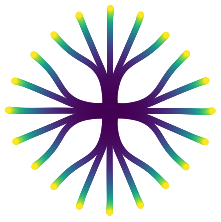}
        \end{subfigure}
        &
        \begin{subfigure}[b]{0.12\textwidth}
            \centering
            \subcaption{\scriptsize Umap}
            \includegraphics[width=\textwidth]{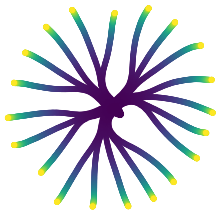}
        \end{subfigure}
        &
        \begin{subfigure}[b]{0.12\textwidth}
            \centering
            \subcaption{\scriptsize Finsler MDS \\ (SMACOF)}
            \includegraphics[width=\textwidth]{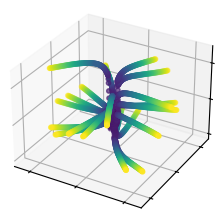}
        \end{subfigure}
        &
        \begin{subfigure}[b]{0.12\textwidth}
            \centering
            \subcaption{\scriptsize Finsler MDS \\ (GD)}
            \includegraphics[width=\textwidth]{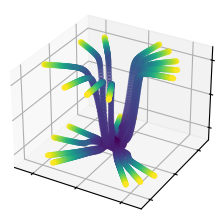}
        \end{subfigure}
        &
        \begin{subfigure}[b]{0.12\textwidth}
            \centering
            \subcaption{\scriptsize Finsler t-SNE}
            \includegraphics[width=\textwidth]{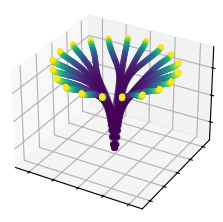}
        \end{subfigure}
        &
        \begin{subfigure}[b]{0.12\textwidth}
            \centering
            \subcaption{\scriptsize Finsler Umap}
            \includegraphics[width=\textwidth]{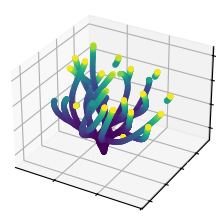}
        \end{subfigure}
        \\[0.6em]
        &
        \begin{subfigure}[b]{0.12\textwidth}
            \centering
            \includegraphics[width=\textwidth]{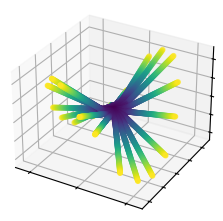}
        \end{subfigure}
        &
        \begin{subfigure}[b]{0.12\textwidth}
            \centering
            \includegraphics[width=\textwidth]{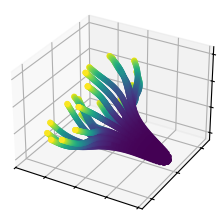}
        \end{subfigure}
        &
        \begin{subfigure}[b]{0.12\textwidth}
            \centering
            \includegraphics[width=\textwidth]{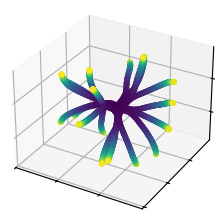}
        \end{subfigure}
        &
        \begin{subfigure}[b]{0.12\textwidth}
            \centering
            \includegraphics[width=\textwidth]{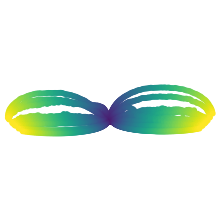}
        \end{subfigure}
        &
        \begin{subfigure}[b]{0.12\textwidth}
            \centering
            \includegraphics[width=\textwidth]{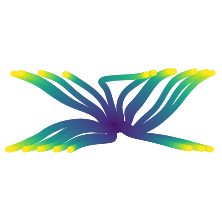}
        \end{subfigure}
        &
        \begin{subfigure}[b]{0.10\textwidth}
            \centering
            \includegraphics[width=\textwidth]{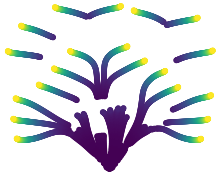}
        \end{subfigure}
        &
        \begin{subfigure}[b]{0.10\textwidth}
            \centering
            \includegraphics[width=\textwidth]{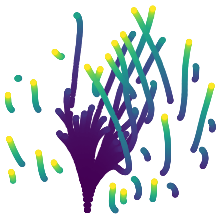}
        \end{subfigure}
    \end{tabular}

    \caption{%
    Embeddings of the toy planar data with non-uniform density. Each pair of rows corresponds to a different number of datapoints $n$, with $n=300$ (top), $n=1000$ (middle), and $n=3000$ (bottom). Within each pair of plots, the top (resp.\ bottom) row corresponds to embedding results in the recommended (resp.\ unrecommended) embedding dimension, namely $\mathbb{R}^2$ (resp.\ $\mathbb{R}^3$) for symmetric methods and $\mathbb{R}^3$ (resp.\ $\mathbb{R}^2$) for asymmetric Finsler embeddings to the canonical Randers space. The last dimension of Finsler embeddings, i.e.\ the $z$-axis (resp.\ $y$-axis) in three-dimensional (resp.\ two-dimensional) embeddings fully encodes the asymmetry, providing more information on the data than vanilla methods like Isomap. On the other hand, traditional symmetric embedding methods do not reveal the density difference between points, unlike the Finsler embedding methods, even those that use fewer dimensions than recommended for a two-dimensional data manifold.
    }
    \label{fig: toy data single peak n 300 1000 3000 2D 3D all}
\end{figure*}

\subsection{Toy planar data}

We here perform two additional sets of experiments, regarding the number of datapoints and the embedding dimensionality.
For both experiments, we present the results in \cref{fig: toy data single peak n 300 1000 3000 2D 3D all}.

\subsubsection{Number of datapoints.}
\label{sec: toy planar data number of datapoints}

In this experiment, we increase the number of points sampled on the unit disk from $N=300$ to $N\in\{1000,3000\}$, using all other parameters the same such as the number of neighbours in the kNN graph. We did not push beyond this limit 
as Finsler MDS greatly struggles when the number of data points scales. The first issue is the computation of a dense dissimilarity matrix $\boldsymbol{D}$ due to geodesic distance calculations. As it is dense, this makes minimisation updates particular slow, for instance in the gradient descent method (GD), compared to more efficient methods like t-SNE or Umap that heavily rely on sparse dissimilarities $p_{ij}$ to speed-up their updates. The Finsler SMACOF algorithm also heavily suffers from having many points. Unlike the original SMACOF algorithm, pseudo-inversion is required on a huge matrix of size $N^2\times N^2$ instead of $N\times N$, which leads to many issues noticed by the original authors \cite{dages2025finsler} such as instability and non-convergence. Instead, they had to rely on approximate pseudo-inverse solvers such as gmres \cite{saad1986gmres} to bring down the computation time. Nevertheless, their method still cannot scale beyond a number of points of this order of magnitude, and we were not able to get any results beyond $3000$ points with Finsler MDS.

By increasing the number of points, and keeping the same number of neighbours, we are artificially focusing on a more local graph. This leads to better preservation of individual branches\footnote{One branch corresponds to a fixed angle in polar coordinates.} in traditional methods. However, t-SNE and Umap still poorly preserve the data manifold. This is because t-SNE and Umap are methods favouring clustering, with $4$ main branch clusters in t-SNE and $3$ main ones in Umap for $N=3000$, rather than manifold preservation like Isomap. Isomap accurately scales with the number of points, although some deformation is visually apparent for $N=3000$ as branches are no longer uniformly embedded angularly. Nevertheless, classical MDS is not able to provide any additional information, such as density-related information. Additionally, embedding using Isomap in a higher dimension leads to non flat embeddings, even though the original data can be perfectly embedded in a flat hyperplane of $\mathbb{R}^3$, showing another weakness of this method.

At $N=3000$, Finsler MDS becomes unstable, whether it be via the Finsler SMACOF \cite{dages2025finsler} or our gradient descent implementation. This shows that Finsler MDS, while of high interest, should only be used on datasets with a small number of points, roughly around $1000$ at most. For higher number of points, we must then switch to our Finsler t-SNE or Finsler Umap methods, which are based on the methods t-SNE and Umap that were designed to be used when greatly scaling up the number of datapoints. Though this will inevitably lead to a degradation in the preservation of the manifold, as these methods tend to artificially cluster, as previously discussed. That being said, both Finsler t-SNE and Finsler Umap both scale well with the number of points. As both methods are based on matching embedding dissimilarities to a sparse dissimilarity matrix, the embedding should be analysed locally rather than globally. Note though that t-SNE provides significantly better global behaviour than Umap thanks to its global normalisation of $q_{ij}$ which comes at the cost of speed\footnote{For a few orders of magnitude greater of $N$, t-SNE will be too slow for practical use and only Umap will be able to run in reasonable time among the presented method. While it is fast, we should keep in mind that it very poorly preserves the data manifold for high number of points.}. Keeping in mind the local analysis of the embeddings, lower density regions are locally embedded higher than higher density regions along the last axis ($z$ in $\mathbb{R}^3$ and $y$ in $\mathbb{R}^2$), providing the valuable extra information with respect to density differences that the traditional symmetric algorithms are not able to provide.

\subsubsection{Embedding dimensionality.} 
\label{sec: toy planar data embedding dim extra minus}

In the seminal work \cite{dages2025finsler} that introduced the canonical Randers space and how to embed asymmetric data into it, it was argued that the appropriate embedding space for Finsler embeddings is $\mathbb{R}^{m+1}$ if comparing with symmetric methods embedding into the Euclidean space $\mathbb{R}^m$. This is because the canonical Randers space should be viewed as hyperplanes formed by the Euclidean space $\mathbb{R}^m$ that are orthogonal to an extra dimension $\omega$ that fully encodes asymmetry. Thus, asymmetric datapoints lying along $m$-dimensional manifolds should be encoded into the canonical Randers space $\mathbb{R}^{m+1}$. As such, in the main paper we followed the same methodology and chose $\mathbb{R}^3$ as the embedding space for Finsler methods compared to $\mathbb{R}^2$. 

Nevertheless, we here also present, for various number of datapoints $N\in\{300, 1000, 3000\}$, the results obtained by embedding the traditional symmetric methods into the Euclidean space $\mathbb{R}^3$ and by embedding the Finsler methods into $\mathbb{R}^2$, which is now viewed as a one-dimensional space $\mathbb{R}$ formed by the $x$-axis and the $y$-axis encoding asymmetry. As the data manifold is two-dimensional, high deformations will necessarily occur in the canonical Randers space $\mathbb{R}^2$ as the one-dimensional hyperplane (a line) does not have a high enough dimension to represent accurately the two-dimensional data manifold (a disk).

Increasing the embedding dimension of traditional methods -- Isomap, t-SNE, Umap -- leads to incorrectly non-flat embeddings, as the original data viewed from a Riemannian perspective is perfectly flat, even for Isomap which seemed to almost perfectly embed the data in $\mathbb{R}^2$ for this toy problem. These new embeddings might somewhat resemble their Finsler counterparts embedded into $\mathbb{R}^3$, which is to be expected as a similar type of objective is minimised. However, the Finsler versions are canonically aligned along the $z$-axis providing explicit information on the asymmetry in the data. On the other hand, the symmetric embeddings to $\mathbb{R}^3$ do not provide this information and instead the embedding is randomly rotated in $\mathbb{R}^3$. Likewise, when decreasing the embedding dimensionality of Finsler methods, the original disk manifold is heavily deformed due to a one dimensional Euclidean hyperplane ($x$-axis) to encode a two-dimensiona manifold (the disk), yet the asymmetry information is still explicitly presented through the positioning on the $y$-axis. In contrast, recommended embeddings to $\mathbb{R}^2$ of traditional methods often fails to accurately present the data, as in t-SNE or Umap, or manage to present the data accurately but without any further information.

These experiments show that the strengths of our Finsler embeddings do not result from the additional embedding dimension. They not only manage to accurately preserve the original data manifold similarly to their traditional symmmetric counterparts, but they also provide additional quantitative and qualitative information regarding density differences (along geodesics) in the original data.

\begin{figure}[t]
    \centering
    \includegraphics[width=\textwidth]{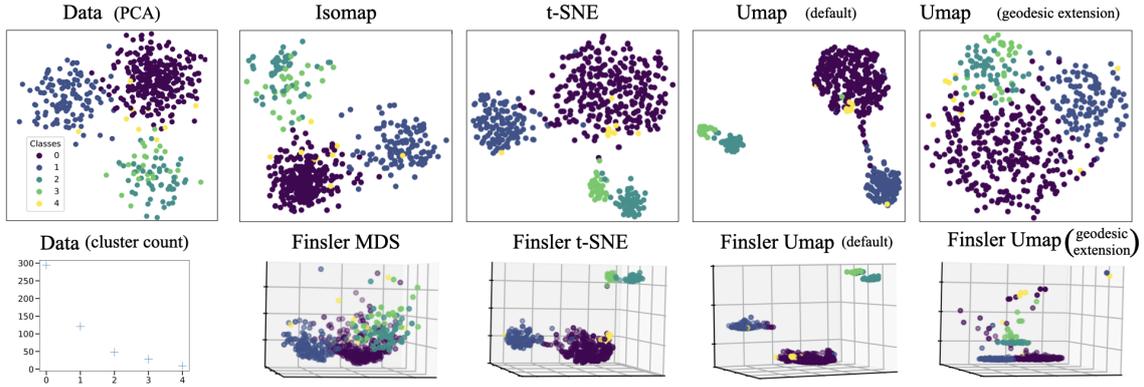}%
    \caption{
    Embeddings of simulated mutation data with exponentially decreasing samples per groundtruth Gaussian. While traditional methods cluster, with more or less success, our Finsler approach both clusters correctly and encodes a cluster hierarchy via height, as sparser low-count clusters are embedded higher. 
    This is shown either locally, when handling sparse dissimilarities (Finsler t-SNE, default Finsler Umap), or globally when using dense dissimilarities due to geodesic extensions of graph distances (Finsler MDS, extended Finsler Umap).
    This semantic hierarchy is absent in Euclidean methods.
    This figure is an enlarged version of \cref{fig: persistence summary}.
    }
    \label{fig: persistence summary supmat}
\end{figure}

\subsection{Toy clustered manifold}
\label{sec: clustered manifold toy exp}

\subsubsection{Data.}
We draw $N\!=\!500$ high-dimensional points in $\mathbb{R}^{10}$ from $5$ random Gaussians with exponentially decreasing sample counts. This simulates persistence data in which mixture components serve as sequential 
mutation states. Each agent mutates to the next state with small probability, independently of history, and mutations alter features. See \cref{sec: data persistence} for
details.

\hfill\break
\noindent\textbf{Methods.}
We view the input data with PCA as it is high-dimensional. Euclidean (resp.\ Finsler) methods -- Isomap, t-SNE, and Umap (resp.\ Finsler t-SNE and Umap) -- embed in $\mathbb{R}^2$ (resp.\ $\mathbb{R}^3$). The data has hidden semantic labels, so good embeddings must form clusters to reveal them. We run Umap with and without geodesically extending the kNN graph (15 neighbours, 300 smallest edges retained) to promote global interactions and cluster hierarchy.

\hfill\break
\noindent\textbf{Results.}
Results appear in \cref{fig: persistence summary,fig: persistence summary supmat}. Although the original manifold has no intrinsic asymmetry a priori, our asymmetric Finsler pipeline exposes a semantic hierarchy, either locally or globally: rare datapoints with more mutations (higher labels) are embedded higher than common ones with fewer mutations (lower labels). Euclidean baselines cluster but convey no hierarchy. The hierarchy is revealed thanks to asymmetric density sensitivity: as samples per Gaussian decrease, neighbourhood radii grow and cross-cluster neighbours become more likely. Dense Gaussians yield many close, same-cluster neighbours, whereas sparse Gaussians yield fewer and more distant ones, often from other clusters. Thus, our methods supplement clustering with the desired semantic ordering.
These results demonstrate the potential of our approach to provide meaningful semantic information that can be useful in practice.

\subsection{Reference classification datasets}
\label{sec: reference classification datasets}

\subsubsection{Raw values, label-unrelated scores, and supervised scores.}
\label{sec: raw values and label-unrelated scores}

In \cref{fig: classif kmeans on umap finsler umap + tsne finsler tsne label-related only}, we summarised due to page-length constraints the results by presenting the percentage difference in mean performance\footnote{The performance of our Finsler method minus that of the standard Euclidean one divided by the standard Euclidean one.} on a few of evaluated datasets. See \cref{fig: classif kmeans on umap finsler umap + tsne finsler tsne label-related only zoom in} for the same plot on all evaluated datasets. Also, in \cref{tab: classif kmeans on umap finsler umap,tab: classif kmeans on tsne finsler tsne}, we present the raw obtained (mean and standard deviation) performance on all datasets.

Additionally, we also provide the label-unrelated scores, either in visual summary form in \cref{fig: classif kmeans on umap finsler umap + tsne finsler tsne all labels,fig: classif kmeans on umap finsler umap + tsne finsler tsne all labels only zoom in} and in raw numbers in \cref{tab: classif kmeans on umap finsler umap,tab: classif kmeans on tsne finsler tsne}. As mentioned in the main part, our Finsler pipelines and embedding methods are systematically superior to their Euclidean counterparts on scores related to the groundtruth labels. However, on the secondary label-unrelated scores, either can be better. Recall though that it matters not to have inferior label-unrelated scores if our label-related ones are superior. When the Euclidean methods provide better label-unrelated scores, they systematically underperform compared to our Finsler ones on label-related ones. 
They thus
tend to 
create ``nicer looking'' clusters that do not correspond to the original data manifold: points were artificially clustered together. Our Finsler approach thus provides significantly better quality embeddings that do not artificially distort the structure of the data manifold, unlike the Euclidean methods.

We also provide supervised accuracy scores in \cref{tab: supervised evaluation}. Both Finsler t-SNE and Finsler Umap consistently outperform their Euclidean baselines by large magnitudes\footnote{Or they are on par with the baselines.}. The supervised evaluation thus supports the findings from the unsupervised scores, namely that our asymmetric Finsler pipeline is beneficial compared to the traditional symmetric one.

\begin{figure}[t]
    \centering
    \includegraphics[width=0.7\columnwidth]{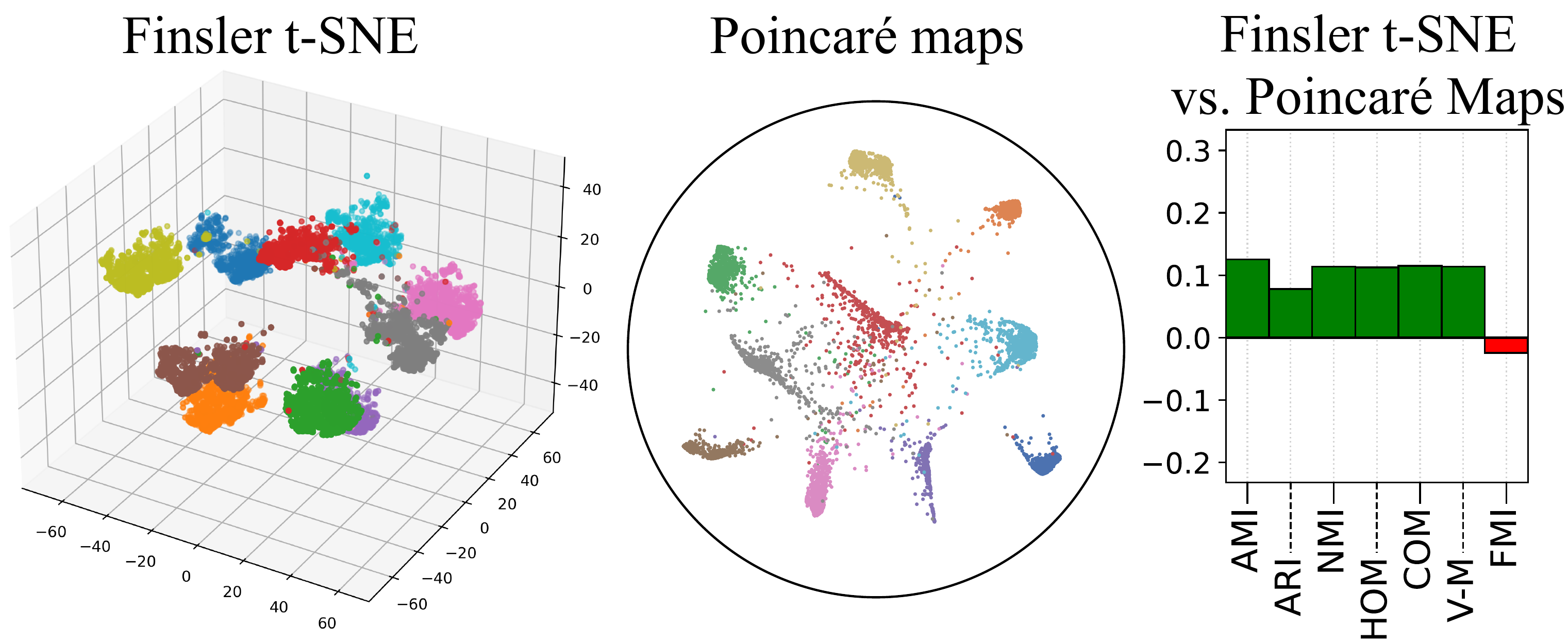}
    \caption{
    Imagenette: Finsler t-SNE vs.\ Poincaré maps,
    a 
    non-Euclidean
    hyperbolic baseline.
    Right: percentage difference in kMeans scores (positive/green means Finsler t-SNE is better). The gains reveal that modelling asymmetry with Finsler geometry may capture structure better than 
    other
    non-Euclidean embeddings. 
    }
    \label{fig: poincare maps imagenette}
\end{figure}

\begin{table*}[t]
\caption{Mean clustering performance using kMeans on Umap and Finsler Umap data embeddings. The mean and standard deviation over 10 runs is presented. (+)/(-) signifies that higher/lower is better.}
\label{tab: classif kmeans on umap finsler umap}
\centering
\resizebox{\textwidth}{!}{%
\begin{tabular}{llccccccccccc}
\toprule
 &  & \multicolumn{7}{c}{Label-related scores} &  & \multicolumn{3}{c}{Label-unrelated scores} \\
\cmidrule(lr){3-9}\cmidrule(lr){11-13}
Dataset & Method        & AMI (+)            & ARI (+)            & NMI (+)            & HOM (+)            & COM (+)            & V-M (+)           & FMI (+)            &  & SIL (+)            & DBI (–)            & CHI (+)             \\
\midrule
\multirow{2}{*}{Iris}           & Umap         & 0,724 ($\pm$0,248)   & 0,688 ($\pm$0,241)   & 0,728 ($\pm$0,245)   & 0,719 ($\pm$0,243)   & 0,737 ($\pm$0,247)   & 0,728 ($\pm$0,245)   & 0,794 ($\pm$0,159)   &  & \textbf{0,721} ($\pm$0,019)   & \textbf{0,389} ($\pm$0,025)   & \textbf{2526} ($\pm$1202)    \\
                                & Finsler Umap (Ours) & \textbf{0,794} ($\pm$0,024)   & \textbf{0,746} ($\pm$0,032)   & \textbf{0,796} ($\pm$0,023)   & \textbf{0,785} ($\pm$0,026)   & \textbf{0,808} ($\pm$0,020)   & \textbf{0,796} ($\pm$0,023)   & \textbf{0,832} ($\pm$0,020)   &  & 0,705 ($\pm$0,036)   & 0,412 ($\pm$0,047)   & 1471 ($\pm$331)     \\
\midrule
\multirow{2}{*}{MNIST}          & Umap         & 0,847 ($\pm$0,017)   & 0,782 ($\pm$0,040)   & 0,847 ($\pm$0,017)   & 0,839 ($\pm$0,019)   & 0,855 ($\pm$0,015)   & 0,847 ($\pm$0,017)   & 0,805 ($\pm$0,035)   &  & 0,580 ($\pm$0,018)   & 0,606 ($\pm$0,033)   & \textbf{194060} ($\pm$19078) \\
                                & Finsler Umap (Ours) & \textbf{0,858} ($\pm$0,012)   & \textbf{0,793} ($\pm$0,032)   & \textbf{0,858} ($\pm$0,012)   & \textbf{0,848} ($\pm$0,014)   & \textbf{0,868} ($\pm$0,011)   & \textbf{0,858} ($\pm$0,012)   & \textbf{0,816} ($\pm$0,028)   &  & \textbf{0,611} ($\pm$0,029)   & \textbf{0,585} ($\pm$0,060)   & 119299 ($\pm$10201) \\
\midrule
\multirow{2}{*}{Fashion-MNIST}         & Umap         & 0,616 ($\pm$0,009)   & 0,445 ($\pm$0,028)   & 0,617 ($\pm$0,009)   & 0,610 ($\pm$0,011)   & 0,623 ($\pm$0,008)   & 0,617 ($\pm$0,009)   & 0,503 ($\pm$0,024)   &  & 0,491 ($\pm$0,017)   & 0,738 ($\pm$0,039)   & \textbf{173803} ($\pm$3963)  \\
                                & Finsler Umap (Ours) & \textbf{0,640} ($\pm$0,012)   & \textbf{0,474} ($\pm$0,030)   & \textbf{0,640} ($\pm$0,012)   & \textbf{0,627} ($\pm$0,014)   & \textbf{0,653} ($\pm$0,014)   & \textbf{0,640} ($\pm$0,012)   & \textbf{0,533} ($\pm$0,024)   &  & \textbf{0,504} ($\pm$0,014)   & \textbf{0,736} ($\pm$0,030)   & 97213 ($\pm$13424)  \\
\midrule
\multirow{2}{*}{Kuzushiji-MNIST}         & Umap         & 0,644 ($\pm$0,019)   & 0,497 ($\pm$0,025)   & 0,644 ($\pm$0,019)   & 0,636 ($\pm$0,019)   & 0,653 ($\pm$0,018)   & 0,644 ($\pm$0,019)   & 0,551 ($\pm$0,022)   &  & 0,556 ($\pm$0,029)   & \textbf{0,627} ($\pm$0,068)   & \textbf{117390} ($\pm$10225) \\
                                & Finsler Umap (Ours) & \textbf{0,653} ($\pm$0,019)   & \textbf{0,504} ($\pm$0,020)   & \textbf{0,653} ($\pm$0,019)   & \textbf{0,643} ($\pm$0,018)   & \textbf{0,663} ($\pm$0,021)   & \textbf{0,653} ($\pm$0,019)   & \textbf{0,558} ($\pm$0,018)   &  & \textbf{0,571} ($\pm$0,012)   & 0,646 ($\pm$0,033)   & 60490 ($\pm$6508)   \\
\midrule
\multirow{2}{*}{EMNIST}         & Umap         & 0,848 ($\pm$0,003)   & 0,790 ($\pm$0,019)   & 0,848 ($\pm$0,003)   & 0,842 ($\pm$0,007)   & 0,854 ($\pm$0,003)   & 0,848 ($\pm$0,003)   & 0,812 ($\pm$0,016)   &  & 0,608 ($\pm$0,005)   & 0,569 ($\pm$0,011)   & \textbf{175759} ($\pm$5755)  \\
                                & Finsler Umap (Ours) & \textbf{0,887} ($\pm$0,024)   & \textbf{0,865} ($\pm$0,056)   & \textbf{0,887} ($\pm$0,024)   & \textbf{0,884} ($\pm$0,028)   & \textbf{0,891} ($\pm$0,020)   & \textbf{0,887} ($\pm$0,024)   & \textbf{0,879} ($\pm$0,050)   &  & \textbf{0,633} ($\pm$0,017)   & \textbf{0,563} ($\pm$0,037)   & 114745 ($\pm$8031)  \\
\midrule
\multirow{2}{*}{EMNIST-Balanced}       & Umap         & 0,641 ($\pm$0,004)   & 0,396 ($\pm$0,008)   & 0,642 ($\pm$0,004)   & 0,639 ($\pm$0,004)   & 0,644 ($\pm$0,004)   & 0,642 ($\pm$0,004)   & 0,409 ($\pm$0,008)   &  & 0,419 ($\pm$0,004)   & \textbf{0,764} ($\pm$0,014)   & \textbf{167572} ($\pm$3708)  \\
                                & Finsler Umap (Ours) & \textbf{0,674} ($\pm$0,007)   & \textbf{0,447} ($\pm$0,012)   & \textbf{0,674} ($\pm$0,007)   & \textbf{0,671} ($\pm$0,007)   & \textbf{0,678} ($\pm$0,007)   & \textbf{0,674} ($\pm$0,007)   & \textbf{0,459} ($\pm$0,012)   &  & \textbf{0,433} ($\pm$0,006)   & 0,824 ($\pm$0,012)   & 92918 ($\pm$5118)   \\
\midrule
\multirow{2}{*}{CIFAR10}   & Umap         & 0,654 ($\pm$0,327)   & 0,619 ($\pm$0,310)   & 0,654 ($\pm$0,327)   & 0,651 ($\pm$0,325)   & 0,656 ($\pm$0,328)   & 0,654 ($\pm$0,327)   & 0,658 ($\pm$0,279)   &  & 0,585 ($\pm$0,115)   & 0,570 ($\pm$0,128)   & \textbf{239743} ($\pm$98339) \\
                                & Finsler Umap (Ours) & \textbf{0,817} ($\pm$0,009)   & \textbf{0,767} ($\pm$0,029)   & \textbf{0,817} ($\pm$0,009)   & \textbf{0,812} ($\pm$0,013)   & \textbf{0,822} ($\pm$0,005)   & \textbf{0,817} ($\pm$0,009)   & \textbf{0,791} ($\pm$0,024)   &  & \textbf{0,647} ($\pm$0,016)   & \textbf{0,516} ($\pm$0,033)   & 187999 ($\pm$11072) \\
\midrule
\multirow{2}{*}{CIFAR100}  & Umap         & 0,549 ($\pm$0,183)   & 0,315 ($\pm$0,105)   & 0,560 ($\pm$0,179)   & 0,556 ($\pm$0,178)   & 0,563 ($\pm$0,180)   & 0,560 ($\pm$0,179)   & 0,322 ($\pm$0,104)   &  & \textbf{0,444} ($\pm$0,036)   & \textbf{0,715} ($\pm$0,030)   & \textbf{123871} ($\pm$23089) \\
                                & Finsler Umap (Ours) & \textbf{0,616} ($\pm$0,002)   & \textbf{0,362} ($\pm$0,005)   & \textbf{0,624} ($\pm$0,002)   & \textbf{0,620} ($\pm$0,003)   & \textbf{0,629} ($\pm$0,002)   & \textbf{0,624} ($\pm$0,002)   & \textbf{0,369} ($\pm$0,005)   &  & 0,426 ($\pm$0,002)   & 0,822 ($\pm$0,013)   & 61756 ($\pm$2073)   \\
\midrule
\multirow{2}{*}{DTD}   & Umap         & 0,402 ($\pm$0,202)   & 0,237 ($\pm$0,119)   & 0,501 ($\pm$0,169)   & 0,498 ($\pm$0,168)   & 0,504 ($\pm$0,170)   & 0,501 ($\pm$0,169)   & 0,254 ($\pm$0,116)   &  & \textbf{0,480} ($\pm$0,006)   & \textbf{0,686} ($\pm$0,011)   & \textbf{3690} ($\pm$211)     \\
                                & Finsler Umap (Ours) & \textbf{0,463} ($\pm$0,154)   & \textbf{0,274} ($\pm$0,092)   & \textbf{0,551} ($\pm$0,129)   & \textbf{0,546} ($\pm$0,127)   & \textbf{0,555} ($\pm$0,130)   & \textbf{0,551} ($\pm$0,129)   & \textbf{0,291} ($\pm$0,090)   &  & 0,449 ($\pm$0,007)   & 0,782 ($\pm$0,020)   & 1571 ($\pm$43)      \\
\midrule
\multirow{2}{*}{Caltech101} & Umap    & 0,737 ($\pm$0,246)   & 0,389 ($\pm$0,130)   & 0,771 ($\pm$0,214)   & 0,800 ($\pm$0,222)   & 0,744 ($\pm$0,206)   & 0,771 ($\pm$0,214)   & 0,435 ($\pm$0,139)   &  & \textbf{0,677} ($\pm$0,006)   & \textbf{0,419} ($\pm$0,008)   & \textbf{154548} ($\pm$7934)  \\
                                & Finsler Umap (Ours) & \textbf{0,821} ($\pm$0,004)   & \textbf{0,432} ($\pm$0,016)   & \textbf{0,845} ($\pm$0,003)   & \textbf{0,876} ($\pm$0,003)   & \textbf{0,815} ($\pm$0,004)   & \textbf{0,845} ($\pm$0,003)   & \textbf{0,481} ($\pm$0,016)   &  & 0,651 ($\pm$0,007)   & 0,476 ($\pm$0,017)   & 77907 ($\pm$3926)   \\
\midrule
\multirow{2}{*}{Caltech256} & Umap    & 0,579 ($\pm$0,290)   & 0,366 ($\pm$0,183)   & 0,655 ($\pm$0,238)   & 0,658 ($\pm$0,239)   & 0,652 ($\pm$0,237)   & 0,655 ($\pm$0,238)   & 0,372 ($\pm$0,184)   &  & \textbf{0,590} ($\pm$0,005)   & \textbf{0,553} ($\pm$0,010)   & \textbf{150961} ($\pm$5464)  \\
                                & Finsler Umap (Ours) & \textbf{0,743} ($\pm$0,001)   & \textbf{0,487} ($\pm$0,009)   & \textbf{0,789} ($\pm$0,001)   & \textbf{0,791} ($\pm$0,001)   & \textbf{0,786} ($\pm$0,001)   & \textbf{0,789} ($\pm$0,001)   & \textbf{0,492} ($\pm$0,009)   &  & 0,604 ($\pm$0,011)   & 0,553 ($\pm$0,010)   & 55076 ($\pm$1848)   \\
\midrule
\multirow{2}{*}{OxfordFlowers102} & Umap & 0,423 ($\pm$0,212)   & 0,315 ($\pm$0,158)   & 0,713 ($\pm$0,105)   & 0,707 ($\pm$0,104)   & 0,719 ($\pm$0,106)   & 0,713 ($\pm$0,105)   & 0,322 ($\pm$0,157)   &  & \textbf{0,483} ($\pm$0,007)   & \textbf{0,665} ($\pm$0,014)   & 2583 ($\pm$122)     \\
                                & Finsler Umap (Ours) & \textbf{0,571} ($\pm$0,005)   & \textbf{0,429} ($\pm$0,008)   & \textbf{0,785} ($\pm$0,002)   & \textbf{0,775} ($\pm$0,003)   & \textbf{0,795} ($\pm$0,003)   & \textbf{0,785} ($\pm$0,002)   & \textbf{0,437} ($\pm$0,007)   &  & 0,483 ($\pm$0,009)   & 0,722 ($\pm$0,019)   & \textbf{3652} ($\pm$526)     \\
\midrule
\multirow{2}{*}{OxfordIIITPet} & Umap & 0,720 ($\pm$0,362)   & 0,646 ($\pm$0,323)   & 0,735 ($\pm$0,342)   & 0,732 ($\pm$0,341)   & 0,739 ($\pm$0,344)   & 0,735 ($\pm$0,342)   & 0,656 ($\pm$0,315)   &  & \textbf{0,701} ($\pm$0,006)   & \textbf{0,426} ($\pm$0,024)   & \textbf{105243} ($\pm$5334)  \\
                                & Finsler Umap (Ours) & \textbf{0,885} ($\pm$0,005)   & \textbf{0,760} ($\pm$0,016)   & \textbf{0,892} ($\pm$0,005)   & \textbf{0,884} ($\pm$0,006)   & \textbf{0,899} ($\pm$0,005)   & \textbf{0,892} ($\pm$0,005)   & \textbf{0,768} ($\pm$0,015)   &  & 0,670 ($\pm$0,013)   & 0,478 ($\pm$0,040)   & 86351 ($\pm$6526)   \\
\midrule
\multirow{2}{*}{GTSRB} & Umap          & 0,444 ($\pm$0,222)   & 0,206 ($\pm$0,104)   & 0,449 ($\pm$0,220)   & 0,455 ($\pm$0,223)   & 0,443 ($\pm$0,217)   & 0,449 ($\pm$0,220)   & 0,233 ($\pm$0,100)   &  & \textbf{0,411} ($\pm$0,008)   & \textbf{0,779} ($\pm$0,018)   & \textbf{34781} ($\pm$1412)   \\
                                & Finsler Umap (Ours) & \textbf{0,515} ($\pm$0,010)   & \textbf{0,231} ($\pm$0,010)   & \textbf{0,520} ($\pm$0,010)   & \textbf{0,523} ($\pm$0,010)   & \textbf{0,517} ($\pm$0,010)   & \textbf{0,520} ($\pm$0,010)   & \textbf{0,258} ($\pm$0,010)   &  & 0,373 ($\pm$0,007)   & 0,930 ($\pm$0,024)   & 9529 ($\pm$430)     \\
\midrule
\multirow{2}{*}{Imagenette} & Umap    & 0,750 ($\pm$0,375)   & 0,756 ($\pm$0,378)   & 0,751 ($\pm$0,374)   & 0,751 ($\pm$0,375)   & 0,751 ($\pm$0,374)   & 0,751 ($\pm$0,374)   & 0,781 ($\pm$0,340)   &  & \textbf{0,830} ($\pm$0,005)   & \textbf{0,253} ($\pm$0,007)   & \textbf{161823} ($\pm$11371) \\
                                & Finsler Umap (Ours) & \textbf{0,846} ($\pm$0,282)   & \textbf{0,852} ($\pm$0,284)   & \textbf{0,846} ($\pm$0,281)   & \textbf{0,846} ($\pm$0,281)   & \textbf{0,846} ($\pm$0,281)   & \textbf{0,846} ($\pm$0,281)   & \textbf{0,867} ($\pm$0,255)   &  & 0,828 ($\pm$0,006)   & 0,268 ($\pm$0,016)   & 97813 ($\pm$6251)   \\
\midrule
\multirow{2}{*}{Imagenet} & Umap    & 0,784 ($\pm$0,000)   & 0,506 ($\pm$0,001)   & 0,797 ($\pm$0,000)   & 0,793 ($\pm$0,000)   & 0,802 ($\pm$0,000)   & 0,797 ($\pm$0,000)   & 0,508 ($\pm$0,001)   &  & 0,582 ($\pm$0,002)   & 0,570 ($\pm$0,003)   & \textbf{7315700} ($\pm$51432)\\
                                & Finsler Umap (Ours) & \textbf{0,803} ($\pm$0,001)   & \textbf{0,540} ($\pm$0,003)   & \textbf{0,816} ($\pm$0,001)   & \textbf{0,810} ($\pm$0,001)   & \textbf{0,821} ($\pm$0,001)   & \textbf{0,816} ($\pm$0,001)   & \textbf{0,542} ($\pm$0,003)   &  & \textbf{0,615} ($\pm$0,004)   & \textbf{0,565} ($\pm$0,007)   & 1930520 ($\pm$28049)\\
\bottomrule
\end{tabular}%
}
\end{table*}

\begin{table*}[t]
\caption{Clustering performance using kMeans on t-SNE and Finsler t-SNE embeddings across the smaller datasets. (+)/(-) signifies that higher/lower is better.}
\label{tab: classif kmeans on tsne finsler tsne}
\centering
\resizebox{\textwidth}{!}{%
\begin{tabular}{llccccccccccc}
\toprule
 &  & \multicolumn{7}{c}{Label-related scores} &  & \multicolumn{3}{c}{Label-unrelated scores} \\
\cmidrule(lr){3-9} \cmidrule(lr){11-13}
Dataset & Method           & AMI (+)        & ARI (+)        & NMI (+)        & HOM (+)        & COM (+)        & V-M (+)       & FMI (+)        &   & SIL (+)        & DBI (–)        & CHI (+)         \\
\midrule
\multirow{2}{*}{Iris}           & t-SNE            & 0,829          & 0,851          & 0,831          & 0,831          & 0,831          & 0,831          & 0,900          &   & \textbf{0,665} & \textbf{0,453} & 3028            \\
                                & Finsler t-SNE (Ours)   & \textbf{0,845} & \textbf{0,868} & \textbf{0,846} & \textbf{0,846} & \textbf{0,847} & \textbf{0,846} & \textbf{0,911} &   & 0,664          & 0,465          & \textbf{4822}   \\
\midrule
\multirow{2}{*}{DTD}   & t-SNE            & 0,498          & 0,288          & 0,581          & 0,578          & 0,584          & 0,581          & 0,303          &   & 0,402          & \textbf{0,804} & \textbf{2492}   \\
                                & Finsler t-SNE (Ours)   & \textbf{0,504} & \textbf{0,294} & \textbf{0,585} & \textbf{0,580} & \textbf{0,590} & \textbf{0,585} & \textbf{0,310} &   & \textbf{0,408} & 0,859          & 1186            \\
\midrule
\multirow{2}{*}{Caltech101} & t-SNE        & 0,804          & 0,377          & 0,830          & 0,865          & 0,798          & 0,830          & 0,428          &   & 0,547          & 0,610          & \textbf{23151}  \\
                                & Finsler t-SNE (Ours)   & \textbf{0,825} & \textbf{0,458} & \textbf{0,848} & \textbf{0,879} & \textbf{0,820} & \textbf{0,848} & \textbf{0,506} &   & \textbf{0,584} & \textbf{0,597} & 13419           \\
\midrule
\multirow{2}{*}{OxfordFlowers102} & t-SNE   & 0,549          & \textbf{0,388} & 0,775          & \textbf{0,767} & 0,783          & 0,775          & \textbf{0,397} &   & 0,419          & 0,740          & \textbf{1788}   \\
                                & Finsler t-SNE (Ours)   & \textbf{0,563} & 0,380          & \textbf{0,779} & 0,766          & \textbf{0,792} & \textbf{0,779} & 0,392          &   & \textbf{0,537} & \textbf{0,653} & 1690            \\
\midrule
\multirow{2}{*}{OxfordIIITPet} & t-SNE     & 0,899          & 0,814          & 0,904          & 0,900          & 0,908          & 0,904          & 0,819          &   & 0,607          & \textbf{0,554} & \textbf{15824}  \\
                                & Finsler t-SNE (Ours)   & \textbf{0,903} & \textbf{0,821} & \textbf{0,908} & \textbf{0,904} & \textbf{0,913} & \textbf{0,908} & \textbf{0,826} &   & \textbf{0,618} & 0,556          & 13912           \\
\midrule
\multirow{2}{*}{Imagenette} & t-SNE       & 0,939          & 0,946          & 0,939          & 0,939          & 0,939          & 0,939          & 0,951          &   & 0,610          & 0,536          & \textbf{22051}  \\
                                & Finsler t-SNE (Ours)   & \textbf{0,943} & \textbf{0,951} & \textbf{0,943} & \textbf{0,943} & \textbf{0,943} & \textbf{0,943} & \textbf{0,955} &   & \textbf{0,668} & \textbf{0,465} & 19651           \\
\bottomrule
\end{tabular}%
}
\end{table*}

\begin{table}[t]
    \centering
    \caption{
    Supervised accuracy on image benchmarks
    for two classifiers. 
    We report mean test accuracy over 2-stratified 5-fold CV (10 runs).
    Best/ties are bold/underlined. Finsler variants consistently outperform Euclidean ones, as suggested already by the unsupervised scores.
    }
    \label{tab: supervised evaluation}
    \setlength{\tabcolsep}{3.5pt}
    \resizebox{\columnwidth}{!}{%
    \begin{tabular}{llccccc}
        \toprule
        \textbf{Supervised Evaluator} & \textbf{Embedding} & \textbf{DTD} & \textbf{Caltech101} & \textbf{OxfordFlowers102} & \textbf{OxfordIIITPet} & \textbf{Imagenette} \\
        \midrule
        \multirow{4}{*}{\textbf{5-NN classifier}} 
        & UMAP           & 0.51 & 0.87 & 0.55 & 0.92 & \underline{0.98} \\
        & Finsler UMAP   & \textbf{0.56} & \textbf{0.88} & \textbf{0.62} & \textbf{0.93} & \underline{0.98} \\
        \cmidrule(lr){2-7}
        & t-SNE          & 0.57 & \underline{0.88} & \textbf{0.63} & \underline{0.92} & \underline{0.98} \\
        & Finsler t-SNE  & \textbf{0.58} & \underline{0.88} & 0.62 & \underline{0.92} & \underline{0.98} \\
        \midrule
        \multirow{4}{*}{\textbf{Linear classifier}} 
        & UMAP           & 0.34 & 0.53 & 0.37 & 0.80 & 0.97 \\
        & Finsler UMAP   & \textbf{0.43} & \textbf{0.66} & \textbf{0.48} & \textbf{0.87} & \textbf{0.98} \\
        \cmidrule(lr){2-7}
        & t-SNE          & 0.33 & 0.55 & 0.29 & 0.90 & \underline{0.98} \\
        & Finsler t-SNE  & \textbf{0.42} & \textbf{0.76} & \textbf{0.38} & \textbf{0.91} & \underline{0.98} \\
        \bottomrule
    \end{tabular}
    }%
\end{table}

\begin{figure*}
    \centering
    \begin{subfigure}{\textwidth}
        \adjincludegraphics[
            width=\textwidth, 
            clip,
        ]        {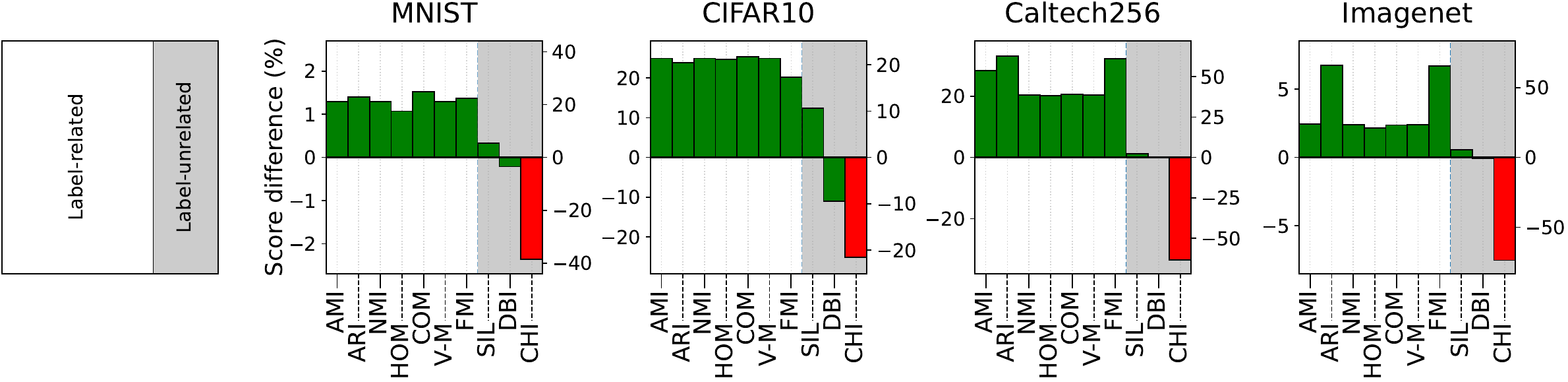}%
        \caption*{Finsler Umap vs.\ Umap}
    \end{subfigure}%
    \\[2em]
    \begin{subfigure}{\textwidth}
        \adjincludegraphics[
            width=\textwidth, 
            clip,
        ]   {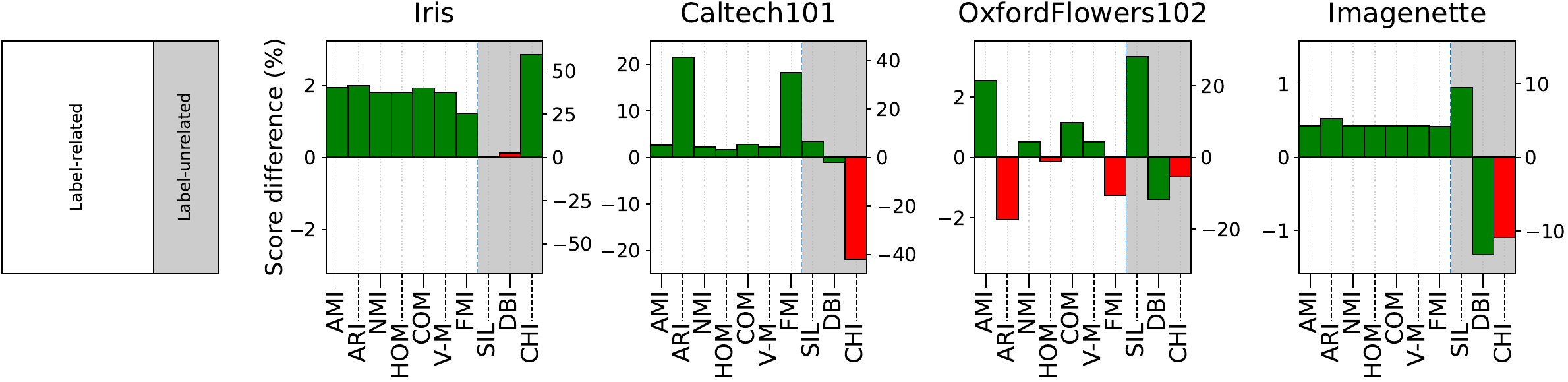}%
        \caption*{Finsler t-SNE vs.\ t-SNE}
    \end{subfigure}
    \caption{
    Percentage difference in mean performance between our Finsler methods and their traditional Euclidean baselines. Positive (resp.\ negative) differences, in green (resp.\ red) signifies we get better (resp.\ worse) scores evaluating the quality of kMeans clusters, either by comparing with groundtruth class labels or not.
    See \cref{fig: classif kmeans on umap finsler umap + tsne finsler tsne all labels only zoom in}. Raw performance values and standard deviations are pushed to 
    \cref{tab: classif kmeans on umap finsler umap,tab: classif kmeans on tsne finsler tsne}.
    This figure is the same as \cref{fig: classif kmeans on umap finsler umap + tsne finsler tsne label-related only}, except that we added to the label-related scores, in white background, the label-unrelated scores, in grey background. Having better label-unrelated score with worse label-related scores is particular bad, as it means that the method is artificially creating ``nicely looking'' clusters that do not correspond to the reality of the data manifold. This never happens for our Finsler methods, as we systematically have better label-related measures.
    }
    \label{fig: classif kmeans on umap finsler umap + tsne finsler tsne all labels}
\end{figure*}

\begin{figure*}[t]
    \centering
    \includegraphics[width=\textwidth]{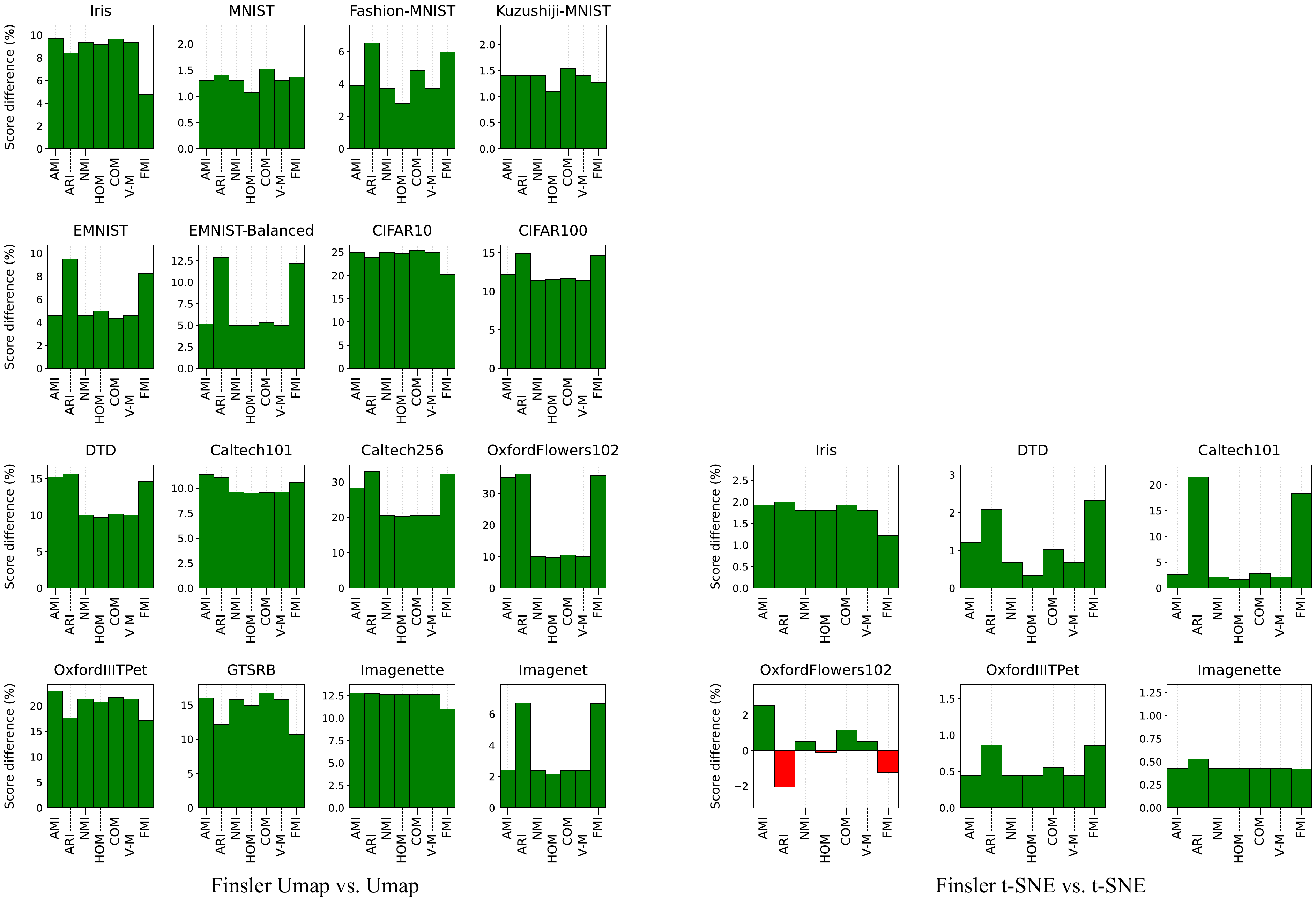}
    \caption{
    Complete version of
    \cref{fig: classif kmeans on umap finsler umap + tsne finsler tsne label-related only} for all tested datasets, presenting the percentage difference in mean performance between our Finsler methods and their traditional Euclidean baselines. Positive (resp.\ negative) differences, in green (resp.\ red) signifies we get better (resp.\ worse) scores comparing kMeans clustering with groundtruth class labels. Raw performance values are in \cref{tab: classif kmeans on umap finsler umap,tab: classif kmeans on tsne finsler tsne}.
    }
    \label{fig: classif kmeans on umap finsler umap + tsne finsler tsne label-related only zoom in}
\end{figure*}

\begin{figure*}[t]
    \centering
    \includegraphics[width=\textwidth]{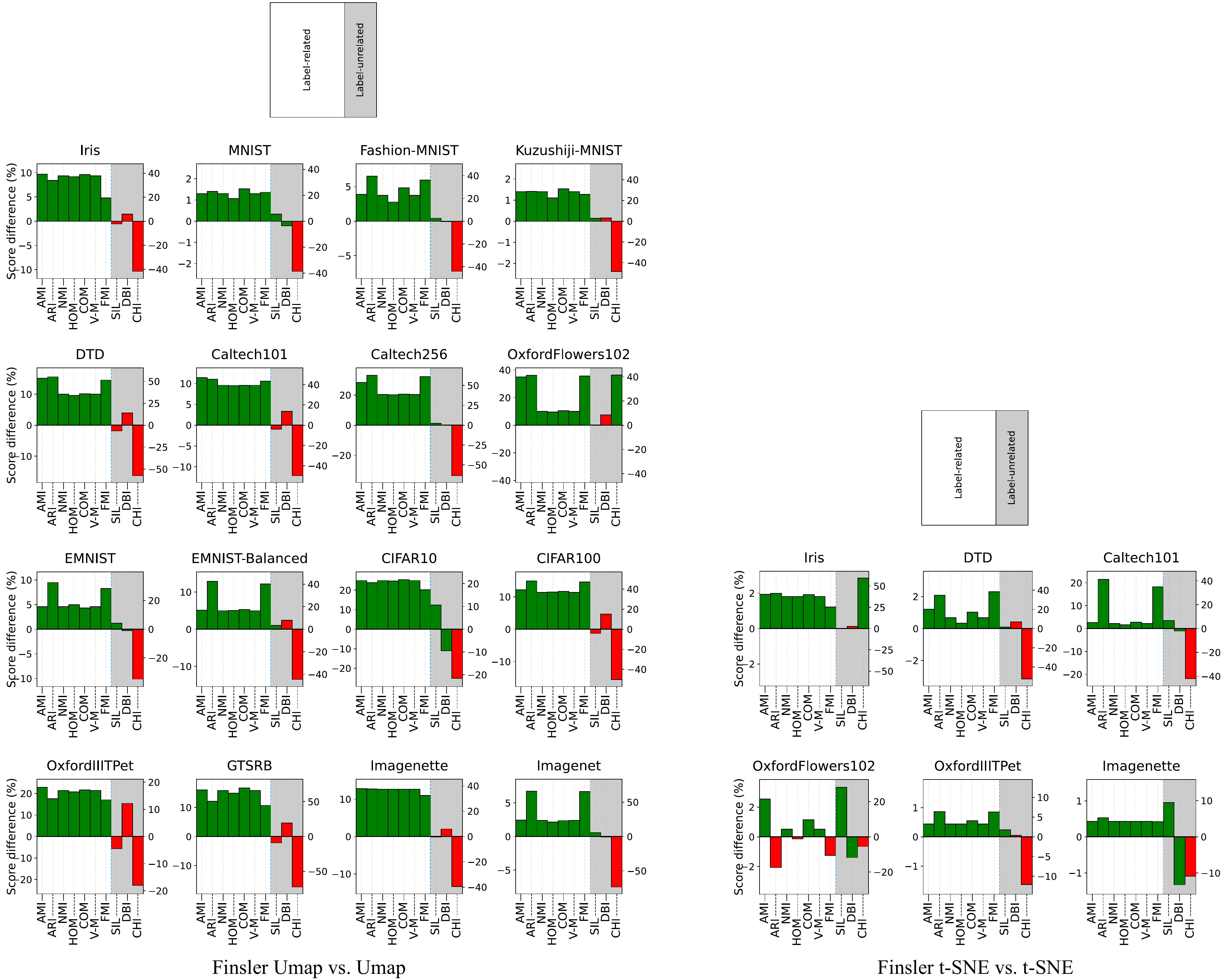}
    \caption{
    Complete version of \cref{fig: classif kmeans on umap finsler umap + tsne finsler tsne all labels} for all datasets, presenting the percentage difference in mean performance between our Finsler methods and their traditional Euclidean baselines. Positive (resp.\ negative) differences, in green (resp.\ red) signifies we get better (resp.\ worse) evaluating the quality of kMeans clusters, either by comparing with groundtruth class labels or not. Raw performance values are in \cref{tab: classif kmeans on umap finsler umap,tab: classif kmeans on tsne finsler tsne}.
    This figure is the same as \cref{fig: classif kmeans on umap finsler umap + tsne finsler tsne label-related only zoom in}, except that we added to the label-related scores, in white background, the label-unrelated scores, in grey background. Having better label-unrelated score with worse label-related scores is particular bad, as it means that the method is artificially creating ``nicely looking'' clusters that do not correspond to the reality of the data manifold. This never happens for our Finsler methods, as we systematically have better label-related measures.
    }
    \label{fig: classif kmeans on umap finsler umap + tsne finsler tsne all labels only zoom in}
\end{figure*}

\subsubsection{Non-Euclidean symmetric method.}
\label{sec: non euclidean symmetric method}

We provide the embedding results of the hyperbolic method Poincar\'e maps in \cref{fig: poincare maps imagenette}. Finsler t-SNE outperforms Poincar\'e maps on all scores\footnote{Except the FMI where it is on par.}. Thus, our Finsler approach yields better quality embeddings than the reference hyperbolic method in this experiment, and as demonstrated in the US cities experiment (\cref{fig: US cities}) it also reveals additional information that is discarded in symmetric methods, including Poincar\'e maps.
Thus, not only is our Finsler pipeline superior to the traditional one, outperforming the modern reference methods (t-SNE and Umap), it can surpass competitive methods using non-trivial changes of geometry, such as hyperbolic geometry.

\subsubsection{Full visualisations.}
\label{sec: full visualisations classification datasets}

We provide embedding visualisations on all datasets in \cref{fig: qualitative classif datasets summary}, extending \cref{fig: qualitative classif datasets summary short umap gt vs kmeans}, for traditional Euclidean Umap and t-SNE and our Finsler Umap and Finsler t-SNE counterparts. A random embedding was chosen for both traditional Euclidean and our Finsler Umap.

Additionally, we also provide embedding visualisations in \cref{fig: qualitative classif datasets summary extra min dim} using unrecommended embedding dimensions, namely $\mathbb{R}^3$ for Euclidean-based methods, instead of $\mathbb{R}^2$, and $\mathbb{\mathbb{R}}^2$ for Finsler-based methods, instead of $\mathbb{R}^3$. These experiments demonstrate that the success of Finsler-based methods is not due to the extra embedding dimension but to the asymmetric perspective on the data combined with the asymmetric structure of the embedding space.

\begin{figure*}[htbp]
    \centering
    \includegraphics[width=0.85\textwidth]{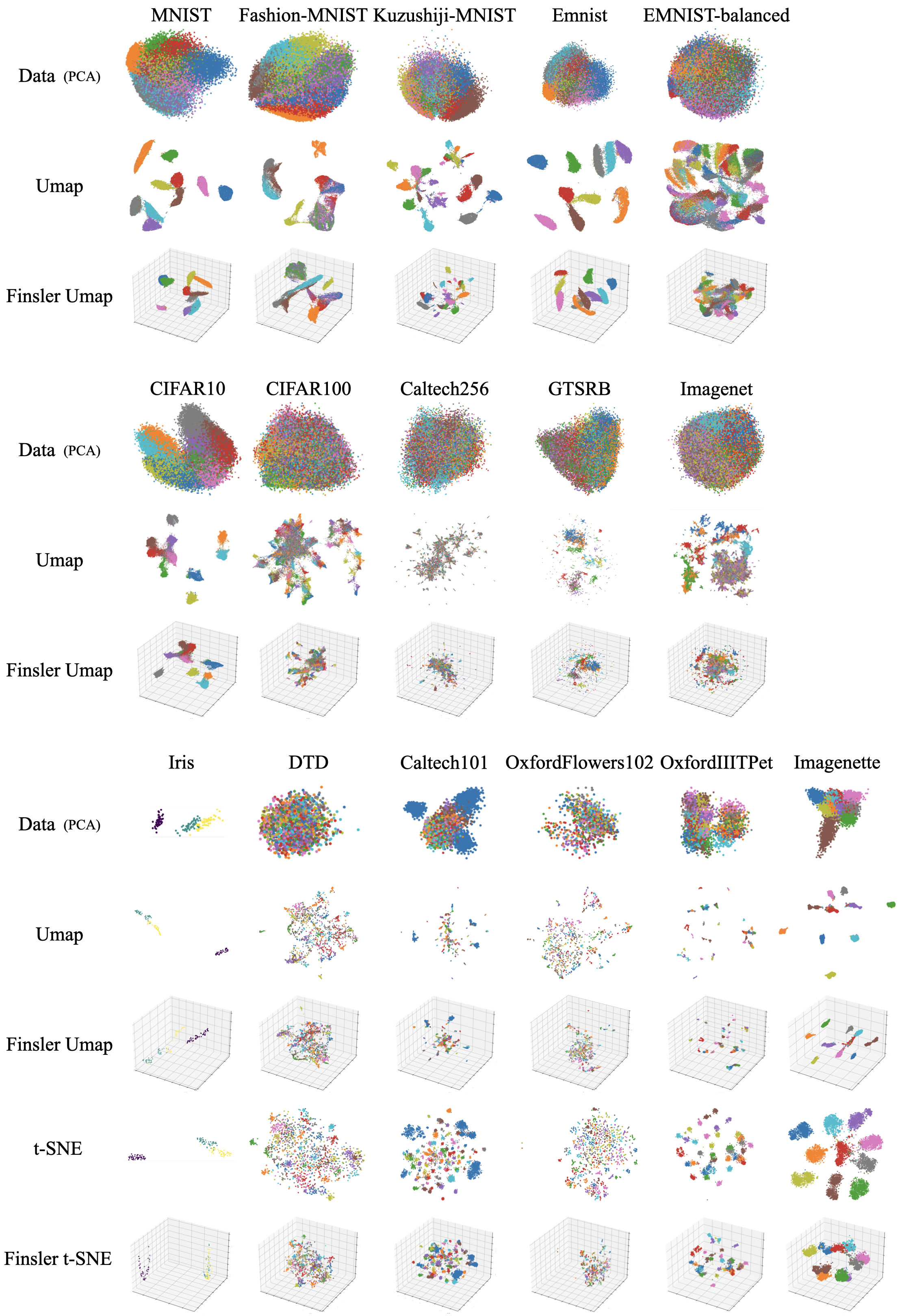}%
    \caption{
    Embeddings results, using recommended embedding dimensions, on all our tested reference classification datasets using either traditional Euclidean methods or our Finsler pipelines.
    }
    \label{fig: qualitative classif datasets summary}
\end{figure*}

\begin{figure*}[htbp]
    \centering
    \includegraphics[width=0.85\textwidth]{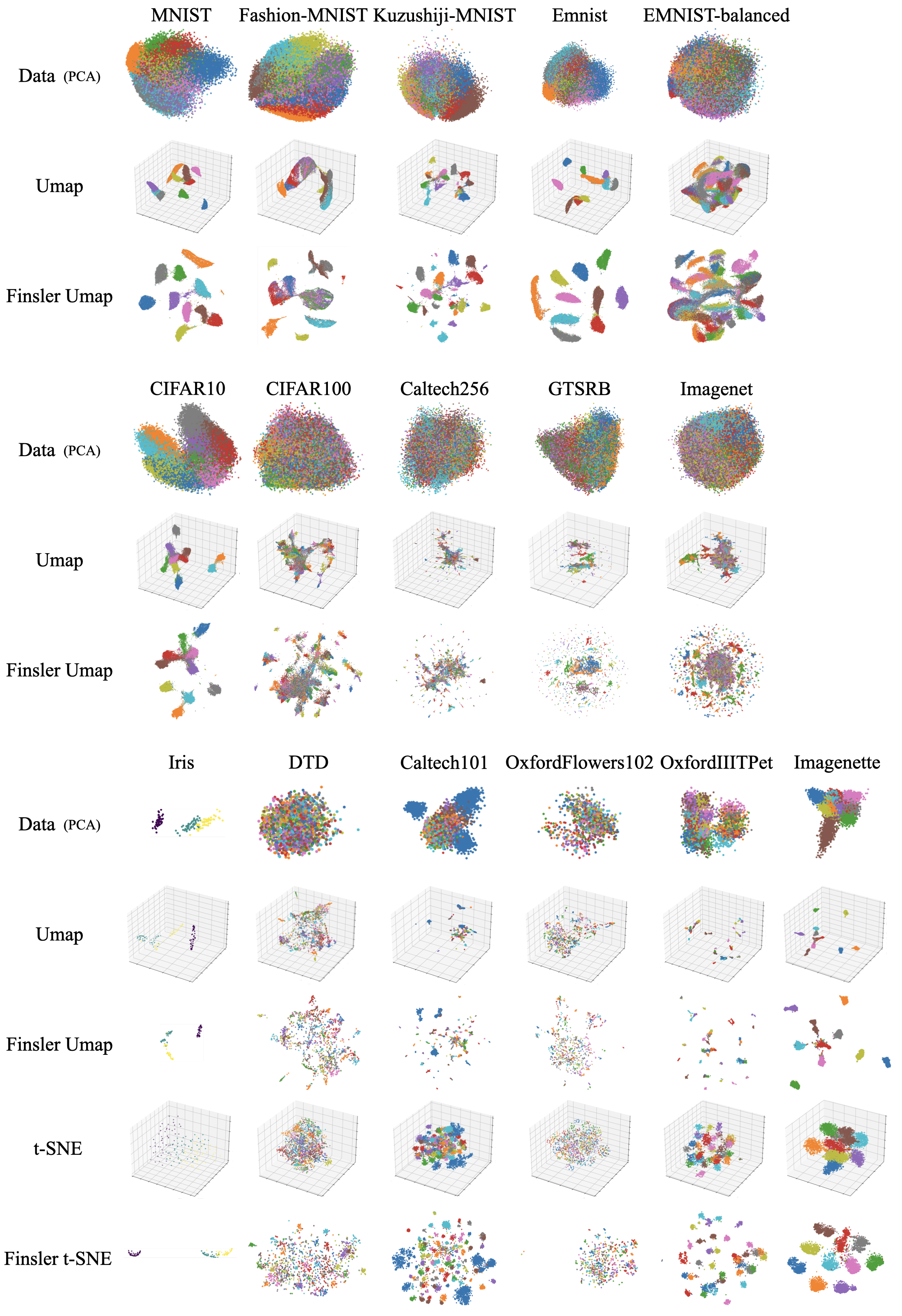}%
    \caption{
    Embeddings results, using unrecommended embedding dimensions, on all our tested reference classification datasets using either traditional Euclidean methods or our Finsler pipelines.
    }
    \label{fig: qualitative classif datasets summary extra min dim}
\end{figure*}

\subsubsection{Embedding dimensionality.}
\label{sec: embedding dim classification datasets}

We evaluated both Umap and our Finsler Umap pipelines when increasing the embedding dimensionality\footnote{Recall that for dimensionality $d$, Umap embeds in $\mathbb{R}^d$ while Finsler Umap embeds in $\mathbb{R}^{d+1}$, following the methodology of \cite{dages2025finsler}.} $d\in\{2,5,10,50\}$ of the CIFAR10, OxfordFlowers102, and Imagenette datasets. We provide in \cref{tab: classif kmeans on umap finsler umap varying dimensionality} quantitative kMeans clustering performance on these embeddings. When increasing the dimensions, our asymmetric approach still consistently yields better performance according to scores related to the labels, and thus to the data manifold. Nevertheless, the gap between the traditional Euclidean and our Finsler pipelines virtually vanishes in very high dimensions. This suggests that our unusual asymmetric perspective for analysing traditionally considered symmetric data is mostly 
beneficial when embedding data in low rather than high dimensions. Remember though that in manifold or representation learning, low dimensional, and not high dimensional, embeddings are the holy grail to search for. This means that our asymmetric Finsler perspective is not only beneficial, it is noticeably superior in the settings that matter in practice.

\begin{table*}[t]
    \caption{Mean clustering performance using kMeans on Umap and Finsler Umap data embeddings with varying dimensionality $d$. The mean and standard deviation over 10 runs is presented. (+)/(-) signifies that higher/lower is better.}
    \label{tab: classif kmeans on umap finsler umap varying dimensionality}
    \centering
    \resizebox{\textwidth}{!}{%
    \begin{tabular}{llcccccccccccc}
        \toprule
         &  &  & \multicolumn{7}{c}{Label-related scores} &  & \multicolumn{3}{c}{Label-unrelated scores} \\
        \cmidrule(lr){4-10}\cmidrule(lr){12-14}
        Dataset & Method & Dimensionality $d$ & AMI (+) & ARI (+) & NMI (+) & HOM (+) & COM (+) & V-M (+) & FMI (+) &  & SIL (+) & DBI (–) & CHI (+) \\
        \midrule
        \multirow{8}{*}{CIFAR10}
        & Umap & 2  & 0,654 ($\pm$0,327) & 0,619 ($\pm$0,310) & 0,654 ($\pm$0,327) & 0,651 ($\pm$0,325) & 0,656 ($\pm$0,328) & 0,654 ($\pm$0,327) & 0,658 ($\pm$0,279) &  & 0,585 ($\pm$0,115) & 0,570 ($\pm$0,128) & \textbf{239743} ($\pm$98339) \\
        & Finsler Umap (Ours) & 2  & \textbf{0,817} ($\pm$0,009) & \textbf{0,767} ($\pm$0,029) & \textbf{0,817} ($\pm$0,009) & \textbf{0,812} ($\pm$0,013) & \textbf{0,822} ($\pm$0,005) & \textbf{0,817} ($\pm$0,009) & \textbf{0,791} ($\pm$0,024) &  & \textbf{0,647} ($\pm$0,016) & \textbf{0,516} ($\pm$0,033) & 187999 ($\pm$11072) \\
        \cmidrule(lr){2-14}
        & Umap & 5  & 0,814 ($\pm$0,011) & 0,763 ($\pm$0,023) & 0,814 ($\pm$0,011) & 0,810 ($\pm$0,012) & 0,819 ($\pm$0,009) & 0,814 ($\pm$0,011) & 0,788 ($\pm$0,020) &  & 0,634 ($\pm$0,013) & 0,546 ($\pm$0,038) & \textbf{228691} ($\pm$12209) \\
        & Finsler Umap (Ours) & 5  & \textbf{0,815} ($\pm$0,011) & \textbf{0,767} ($\pm$0,025) & \textbf{0,815} ($\pm$0,011) & \textbf{0,811} ($\pm$0,013) & \textbf{0,820} ($\pm$0,009) & \textbf{0,815} ($\pm$0,011) & \textbf{0,791} ($\pm$0,021) &  & \textbf{0,638} ($\pm$0,015) & \textbf{0,541} ($\pm$0,045) & 173549 ($\pm$11579) \\
        \cmidrule(lr){2-14}
        & Umap & 10 & 0,816 ($\pm$0,010) & \textbf{0,770} ($\pm$0,014) & 0,816 ($\pm$0,010) & \textbf{0,812} ($\pm$0,010) & 0,820 ($\pm$0,009) & 0,816 ($\pm$0,010) & \textbf{0,794} ($\pm$0,012) &  & \textbf{0,637} ($\pm$0,012) & \textbf{0,538} ($\pm$0,034) & \textbf{233395} ($\pm$9628) \\
        & Finsler Umap (Ours) & 10 & \textbf{0,816} ($\pm$0,011) & 0,759 ($\pm$0,038) & \textbf{0,816} ($\pm$0,011) & 0,809 ($\pm$0,017) & \textbf{0,823} ($\pm$0,006) & \textbf{0,816} ($\pm$0,011) & 0,785 ($\pm$0,032) &  & 0,632 ($\pm$0,030) & 0,555 ($\pm$0,072) & 160470 ($\pm$20810) \\
        \cmidrule(lr){2-14}
        & Umap & 50 & 0,807 ($\pm$0,017) & 0,752 ($\pm$0,034) & 0,807 ($\pm$0,017) & 0,801 ($\pm$0,021) & 0,813 ($\pm$0,014) & 0,807 ($\pm$0,017) & 0,778 ($\pm$0,029) &  & 0,630 ($\pm$0,013) & 0,561 ($\pm$0,040) & \textbf{217258} ($\pm$30880) \\
        & Finsler Umap (Ours) & 50 & \textbf{0,816} ($\pm$0,010) & \textbf{0,763} ($\pm$0,030) & \textbf{0,816} ($\pm$0,010) & \textbf{0,810} ($\pm$0,014) & \textbf{0,822} ($\pm$0,006) & \textbf{0,816} ($\pm$0,010) & \textbf{0,788} ($\pm$0,026) &  & \textbf{0,641} ($\pm$0,014) & \textbf{0,540} ($\pm$0,045) & 143102 ($\pm$8470) \\
        \midrule
        \multirow{8}{*}{OxfordFlowers102}
        & Umap & 2  & 0,423 ($\pm$0,212) & 0,315 ($\pm$0,158) & 0,713 ($\pm$0,105) & 0,707 ($\pm$0,104) & 0,719 ($\pm$0,106) & 0,713 ($\pm$0,105) & 0,322 ($\pm$0,157) &  & 0,483 ($\pm$0,007) & \textbf{0,665} ($\pm$0,014) & 2583 ($\pm$122) \\
        & Finsler Umap (Ours) & 2  & \textbf{0,571} ($\pm$0,005) & \textbf{0,429} ($\pm$0,008) & \textbf{0,785} ($\pm$0,002) & \textbf{0,775} ($\pm$0,003) & \textbf{0,795} ($\pm$0,003) & \textbf{0,785} ($\pm$0,002) & \textbf{0,437} ($\pm$0,007) &  & \textbf{0,483} ($\pm$0,009) & 0,722 ($\pm$0,019) & \textbf{3652} ($\pm$526) \\
        \cmidrule(lr){2-14}
        & Umap & 5  & 0,568 ($\pm$0,005) & 0,428 ($\pm$0,007) & 0,785 ($\pm$0,003) & 0,776 ($\pm$0,003) & 0,793 ($\pm$0,002) & 0,785 ($\pm$0,003) & 0,436 ($\pm$0,006) &  & 0,429 ($\pm$0,005) & 0,849 ($\pm$0,013) & \textbf{683} ($\pm$27) \\
        & Finsler Umap (Ours) & 5  & \textbf{0,587} ($\pm$0,006) & \textbf{0,441} ($\pm$0,009) & \textbf{0,793} ($\pm$0,003) & \textbf{0,783} ($\pm$0,003) & \textbf{0,803} ($\pm$0,003) & \textbf{0,793} ($\pm$0,003) & \textbf{0,449} ($\pm$0,009) &  & \textbf{0,443} ($\pm$0,006) & \textbf{0,830} ($\pm$0,007) & 633 ($\pm$29) \\
        \cmidrule(lr){2-14}
        & Umap & 10 & 0,569 ($\pm$0,004) & 0,428 ($\pm$0,007) & 0,785 ($\pm$0,002) & 0,776 ($\pm$0,003) & 0,793 ($\pm$0,002) & 0,785 ($\pm$0,002) & 0,436 ($\pm$0,007) &  & 0,419 ($\pm$0,008) & 0,880 ($\pm$0,026) & \textbf{618} ($\pm$17) \\
        & Finsler Umap (Ours) & 10 & \textbf{0,589} ($\pm$0,006) & \textbf{0,440} ($\pm$0,007) & \textbf{0,793} ($\pm$0,003) & \textbf{0,783} ($\pm$0,004) & \textbf{0,804} ($\pm$0,003) & \textbf{0,793} ($\pm$0,003) & \textbf{0,449} ($\pm$0,006) &  & \textbf{0,432} ($\pm$0,010) & \textbf{0,846} ($\pm$0,023) & 600 ($\pm$12) \\
        \cmidrule(lr){2-14}
        & Umap & 50 & 0,569 ($\pm$0,006) & 0,426 ($\pm$0,009) & 0,784 ($\pm$0,003) & 0,775 ($\pm$0,004) & 0,793 ($\pm$0,003) & 0,784 ($\pm$0,003) & 0,434 ($\pm$0,009) &  & 0,411 ($\pm$0,009) & 0,888 ($\pm$0,019) & \textbf{617} ($\pm$17) \\
        & Finsler Umap (Ours) & 50 & \textbf{0,585} ($\pm$0,006) & \textbf{0,437} ($\pm$0,010) & \textbf{0,792} ($\pm$0,003) & \textbf{0,782} ($\pm$0,004) & \textbf{0,802} ($\pm$0,003) & \textbf{0,792} ($\pm$0,003) & \textbf{0,445} ($\pm$0,009) &  & \textbf{0,425} ($\pm$0,007) & \textbf{0,867} ($\pm$0,018) & 612 ($\pm$24) \\
        \midrule
        \multirow{8}{*}{Imagenette}
        & Umap & 2  & 0,750 ($\pm$0,375) & 0,756 ($\pm$0,378) & 0,751 ($\pm$0,374) & 0,751 ($\pm$0,375) & 0,751 ($\pm$0,374) & 0,751 ($\pm$0,374) & 0,781 ($\pm$0,340) &  & \textbf{0,830} ($\pm$0,005) & \textbf{0,253} ($\pm$0,007) & \textbf{161823} ($\pm$11371) \\
        & Finsler Umap (Ours) & 2  & \textbf{0,846} ($\pm$0,282) & \textbf{0,852} ($\pm$0,284) & \textbf{0,846} ($\pm$0,281) & \textbf{0,846} ($\pm$0,281) & \textbf{0,846} ($\pm$0,281) & \textbf{0,846} ($\pm$0,281) & \textbf{0,867} ($\pm$0,255) &  & 0,828 ($\pm$0,006) & 0,268 ($\pm$0,016) & 97813 ($\pm$6251) \\
        \cmidrule(lr){2-14}
        & Umap & 5  & 0,941 ($\pm$0,001) & 0,947 ($\pm$0,001) & 0,941 ($\pm$0,001) & 0,941 ($\pm$0,001) & 0,941 ($\pm$0,001) & 0,942 ($\pm$0,001) & 0,953 ($\pm$0,000) &  & \textbf{0,848} ($\pm$0,002) & \textbf{0,241} ($\pm$0,004) & \textbf{132728} ($\pm$3950) \\
        & Finsler Umap (Ours) & 5  & \textbf{0,941} ($\pm$0,001) & \textbf{0,948} ($\pm$0,001) & \textbf{0,942} ($\pm$0,001) & \textbf{0,942} ($\pm$0,001) & \textbf{0,941} ($\pm$0,001) & \textbf{0,942} ($\pm$0,001) & \textbf{0,953} ($\pm$0,000) &  & 0,840 ($\pm$0,002) & 0,258 ($\pm$0,003) & 93325 ($\pm$2307) \\
        \cmidrule(lr){2-14}
        & Umap & 10 & 0,941 ($\pm$0,001) & 0,947 ($\pm$0,001) & 0,941 ($\pm$0,001) & 0,941 ($\pm$0,001) & 0,941 ($\pm$0,001) & 0,941 ($\pm$0,001) & 0,953 ($\pm$0,001) &  & \textbf{0,849} ($\pm$0,001) & 0,242 ($\pm$0,002) & \textbf{130262} ($\pm$3277) \\
        & Finsler Umap (Ours) & 10 & \textbf{0,941} ($\pm$0,001) & \textbf{0,948} ($\pm$0,001) & \textbf{0,941} ($\pm$0,001) & \textbf{0,941} ($\pm$0,001) & \textbf{0,941} ($\pm$0,001) & \textbf{0,941} ($\pm$0,001) & \textbf{0,953} ($\pm$0,000) &  & 0,834 ($\pm$0,003) & \textbf{0,241} ($\pm$0,006) & 93464 ($\pm$3257) \\
        \cmidrule(lr){2-14}
        & Umap & 50 & 0,941 ($\pm$0,001) & 0,947 ($\pm$0,000) & 0,941 ($\pm$0,001) & 0,941 ($\pm$0,001) & 0,941 ($\pm$0,001) & 0,941 ($\pm$0,001) & 0,953 ($\pm$0,000) &  & \textbf{0,848} ($\pm$0,002) & \textbf{0,244} ($\pm$0,003) & \textbf{127384} ($\pm$2830) \\
        & Finsler Umap (Ours) & 50 & \textbf{0,941} ($\pm$0,001) & \textbf{0,948} ($\pm$0,000) & \textbf{0,941} ($\pm$0,001) & \textbf{0,941} ($\pm$0,001) & \textbf{0,941} ($\pm$0,001) & \textbf{0,941} ($\pm$0,001) & \textbf{0,953} ($\pm$0,000) &  & 0,840 ($\pm$0,001) & 0,262 ($\pm$0,003) & 88716 ($\pm$1271) \\
        \bottomrule
    \end{tabular}%
    }
\end{table*}

\subsubsection{Levels of emphasis on asymmetry.}
\label{sec: levels of emphasis on asymmetry}

The canonical Randers space used as embedding space of the Finsler-based methods depends on the choice of linear drift component $\omega$. While its orientation matters little, with the vertical upwards orientation being equivalent to any others by simply rotating the resulting embeddings, its magnitude $\lVert\omega\rVert_2\in[0,1)$ is however important. Increasing  $\lVert\omega\rVert_2$ puts more emphasis on preserving asymmetry and fits well to cases with high levels of asymmetry. When $\omega\xrightarrow[]{} 0$, the embedding space becomes approximately Euclidean, which suits cases of data with little asymmetry. As such, we recommend tuning $\lVert\omega\rVert_2$ to each task rather than choosing a default value, like $\lVert\omega\rVert_2 = 0.5$ as in the set of visual experiments in \cite{dages2025finsler}. In the results reported in the main paper (\cref{tab: classif kmeans on umap finsler umap,tab: classif kmeans on tsne finsler tsne}), we reported the results obtained with $\lVert\omega\rVert_2\in\{0.001, 0.01, 0.1, 0.5\}$ providing the highest mean AMI score. In \cref{tab: classif kmeans on umap finsler umap varying omega levels,tab: classif kmeans on tSNE finsler tSNE varying omega levels}, we provide full results for all tested $\lVert\omega\rVert_2$. For almost all datasets, the mean scores, and in particular the AMI, for most if not all tested levels of $\lVert\omega\rVert_2$ is superior to that of traditional Umap. These results demonstrate that our Finsler pipeline is robust to choices of $\lVert\omega\rVert_2$. They also reveal the benefit of appropriately tuning $\lVert\omega\rVert_2$ for each task in order to provide the embedding that best captures the data and its natural asymmetry.

\begin{table*}[ht]
    \caption{Mean clustering performance using kMeans on Umap and Finsler Umap data embeddings. We use various levels of emphasis for metric asymmetry, via $\lVert \omega\rVert_2\in\{0.001, 0.01, 0.1, 0.5\}$, in the embedding space of the Finsler methods. The mean and standard deviation over 10 runs is presented. (+)/(-) signifies that higher/lower is better.}
    \label{tab: classif kmeans on umap finsler umap varying omega levels}
    \centering
    \resizebox{\textwidth}{!}{%
    \begin{tabular}{lllccccccccccc}
        \toprule
         &  &  & \multicolumn{7}{c}{Label-related scores} &  & \multicolumn{3}{c}{Label-unrelated scores} \\
        \cmidrule(lr){4-10}\cmidrule(lr){12-14}
        Dataset & Method & $\lVert \omega\rVert_2$ & AMI (+) & ARI (+) & NMI (+) & HOM (+) & COM (+) & V-M (+) & FMI (+) &  & SIL (+) & DBI (–) & CHI (+) \\
        \midrule
        \multirow{5}{*}{Iris}
        & Umap & ---    & 0,724 ($\pm$0,248) & 0,688 ($\pm$0,241) & 0,728 ($\pm$0,245) & 0,719 ($\pm$0,243) & 0,737 ($\pm$0,247) & 0,728 ($\pm$0,245) & 0,794 ($\pm$0,159) &  & 0,721 ($\pm$0,019) & 0,389 ($\pm$0,025) & 2526 ($\pm$1202) \\
        \cmidrule(lr){2-14}
        & Finsler Umap (Ours) & 0,5  & 0,626 ($\pm$0,313) & 0,584 ($\pm$0,293) & 0,630 ($\pm$0,309) & 0,620 ($\pm$0,304) & 0,641 ($\pm$0,314) & 0,630 ($\pm$0,309) & 0,726 ($\pm$0,194) &  & 0,742 ($\pm$0,063) & 0,418 ($\pm$0,113) & 912 ($\pm$778) \\
        & Finsler Umap (Ours) & 0,1  & 0,719 ($\pm$0,242) & 0,677 ($\pm$0,228) & 0,723 ($\pm$0,239) & 0,713 ($\pm$0,236) & 0,732 ($\pm$0,242) & 0,723 ($\pm$0,239) & 0,786 ($\pm$0,151) &  & 0,731 ($\pm$0,021) & 0,382 ($\pm$0,027) & 1760 ($\pm$493) \\
        & Finsler Umap (Ours) & 0,01 & 0,722 ($\pm$0,243) & 0,683 ($\pm$0,230) & 0,726 ($\pm$0,240) & 0,717 ($\pm$0,237) & 0,735 ($\pm$0,243) & 0,726 ($\pm$0,240) & 0,790 ($\pm$0,152) &  & 0,717 ($\pm$0,018) & 0,402 ($\pm$0,031) & 1485 ($\pm$275) \\
        & Finsler Umap (Ours) & 0,001& 0,794 ($\pm$0,024) & 0,746 ($\pm$0,032) & 0,796 ($\pm$0,023) & 0,785 ($\pm$0,026) & 0,808 ($\pm$0,020) & 0,796 ($\pm$0,023) & 0,832 ($\pm$0,020) &  & 0,705 ($\pm$0,036) & 0,412 ($\pm$0,047) & 1471 ($\pm$331) \\
        \midrule
        \multirow{5}{*}{MNIST}
        & Umap & ---    & 0,847 ($\pm$0,017) & 0,782 ($\pm$0,040) & 0,847 ($\pm$0,017) & 0,839 ($\pm$0,019) & 0,855 ($\pm$0,015) & 0,847 ($\pm$0,017) & 0,805 ($\pm$0,035) &  & 0,580 ($\pm$0,018) & 0,606 ($\pm$0,033) & 194060 ($\pm$19078) \\
        \cmidrule(lr){2-14}
        & Finsler Umap (Ours) & 0,5  & 0,832 ($\pm$0,029) & 0,757 ($\pm$0,052) & 0,832 ($\pm$0,029) & 0,823 ($\pm$0,032) & 0,842 ($\pm$0,026) & 0,832 ($\pm$0,029) & 0,783 ($\pm$0,046) &  & 0,550 ($\pm$0,028) & 0,687 ($\pm$0,071) & 96743 ($\pm$11252) \\
        & Finsler Umap (Ours) & 0,1  & 0,856 ($\pm$0,019) & 0,794 ($\pm$0,054) & 0,856 ($\pm$0,019) & 0,847 ($\pm$0,024) & 0,864 ($\pm$0,014) & 0,856 ($\pm$0,019) & 0,817 ($\pm$0,046) &  & 0,600 ($\pm$0,024) & 0,594 ($\pm$0,055) & 115765 ($\pm$13484) \\
        & Finsler Umap (Ours) & 0,01 & 0,858 ($\pm$0,012) & 0,793 ($\pm$0,032) & 0,858 ($\pm$0,012) & 0,848 ($\pm$0,014) & 0,868 ($\pm$0,011) & 0,858 ($\pm$0,012) & 0,816 ($\pm$0,028) &  & 0,611 ($\pm$0,029) & 0,585 ($\pm$0,060) & 119299 ($\pm$10201) \\
        & Finsler Umap (Ours) & 0,001& 0,856 ($\pm$0,014) & 0,801 ($\pm$0,040) & 0,856 ($\pm$0,014) & 0,850 ($\pm$0,017) & 0,862 ($\pm$0,012) & 0,856 ($\pm$0,014) & 0,823 ($\pm$0,035) &  & 0,604 ($\pm$0,021) & 0,597 ($\pm$0,047) & 126533 ($\pm$10462) \\
        \midrule
        \multirow{5}{*}{Fashion-MNIST}
        & Umap & ---    & 0,616 ($\pm$0,009) & 0,445 ($\pm$0,028) & 0,617 ($\pm$0,009) & 0,610 ($\pm$0,011) & 0,623 ($\pm$0,008) & 0,617 ($\pm$0,009) & 0,503 ($\pm$0,024) &  & 0,491 ($\pm$0,017) & 0,738 ($\pm$0,039) & 173803 ($\pm$3963) \\
        \cmidrule(lr){2-14}
        & Finsler Umap (Ours) & 0,5  & 0,624 ($\pm$0,015) & 0,440 ($\pm$0,034) & 0,624 ($\pm$0,015) & 0,614 ($\pm$0,018) & 0,634 ($\pm$0,015) & 0,624 ($\pm$0,015) & 0,501 ($\pm$0,028) &  & 0,499 ($\pm$0,013) & 0,741 ($\pm$0,041) & 100874 ($\pm$22041) \\
        & Finsler Umap (Ours) & 0,1  & 0,633 ($\pm$0,015) & 0,456 ($\pm$0,017) & 0,633 ($\pm$0,015) & 0,621 ($\pm$0,014) & 0,646 ($\pm$0,018) & 0,633 ($\pm$0,015) & 0,516 ($\pm$0,014) &  & 0,503 ($\pm$0,020) & 0,734 ($\pm$0,050) & 107025 ($\pm$14459) \\
        & Finsler Umap (Ours) & 0,01 & 0,640 ($\pm$0,012) & 0,474 ($\pm$0,030) & 0,640 ($\pm$0,012) & 0,627 ($\pm$0,014) & 0,653 ($\pm$0,014) & 0,640 ($\pm$0,012) & 0,533 ($\pm$0,024) &  & 0,504 ($\pm$0,014) & 0,736 ($\pm$0,030) & 97213 ($\pm$13424) \\
        & Finsler Umap (Ours) & 0,001& 0,635 ($\pm$0,021) & 0,474 ($\pm$0,022) & 0,635 ($\pm$0,021) & 0,626 ($\pm$0,017) & 0,645 ($\pm$0,026) & 0,635 ($\pm$0,021) & 0,531 ($\pm$0,022) &  & 0,488 ($\pm$0,028) & 0,795 ($\pm$0,086) & 98307 ($\pm$10707) \\
        \midrule
        \multirow{5}{*}{Kuzushiji-MNIST}
        & Umap & ---    & 0,644 ($\pm$0,019) & 0,497 ($\pm$0,025) & 0,644 ($\pm$0,019) & 0,636 ($\pm$0,019) & 0,653 ($\pm$0,018) & 0,644 ($\pm$0,019) & 0,551 ($\pm$0,022) &  & 0,556 ($\pm$0,029) & 0,627 ($\pm$0,068) & 117390 ($\pm$10225) \\
        \cmidrule(lr){2-14}
        & Finsler Umap (Ours) & 0,5  & 0,644 ($\pm$0,021) & 0,501 ($\pm$0,031) & 0,644 ($\pm$0,021) & 0,634 ($\pm$0,023) & 0,654 ($\pm$0,020) & 0,644 ($\pm$0,021) & 0,555 ($\pm$0,027) &  & 0,542 ($\pm$0,025) & 0,692 ($\pm$0,051) & 51443 ($\pm$6543) \\
        & Finsler Umap (Ours) & 0,1  & 0,653 ($\pm$0,019) & 0,504 ($\pm$0,020) & 0,653 ($\pm$0,019) & 0,643 ($\pm$0,018) & 0,663 ($\pm$0,021) & 0,653 ($\pm$0,019) & 0,558 ($\pm$0,018) &  & 0,571 ($\pm$0,012) & 0,646 ($\pm$0,033) & 60490 ($\pm$6508) \\
        & Finsler Umap (Ours) & 0,01 & 0,635 ($\pm$0,011) & 0,486 ($\pm$0,021) & 0,635 ($\pm$0,011) & 0,625 ($\pm$0,014) & 0,645 ($\pm$0,011) & 0,635 ($\pm$0,011) & 0,542 ($\pm$0,017) &  & 0,560 ($\pm$0,021) & 0,664 ($\pm$0,061) & 57863 ($\pm$7456) \\
        & Finsler Umap (Ours) & 0,001& 0,652 ($\pm$0,020) & 0,509 ($\pm$0,037) & 0,652 ($\pm$0,020) & 0,643 ($\pm$0,023) & 0,661 ($\pm$0,017) & 0,652 ($\pm$0,020) & 0,562 ($\pm$0,031) &  & 0,559 ($\pm$0,028) & 0,680 ($\pm$0,058) & 56887 ($\pm$7713) \\
        \midrule
        \multirow{5}{*}{EMNIST}
        & Umap & ---    & 0,848 ($\pm$0,003) & 0,790 ($\pm$0,019) & 0,848 ($\pm$0,003) & 0,842 ($\pm$0,007) & 0,854 ($\pm$0,003) & 0,848 ($\pm$0,003) & 0,812 ($\pm$0,016) &  & 0,608 ($\pm$0,005) & 0,569 ($\pm$0,011) & 175759 ($\pm$5755) \\
        \cmidrule(lr){2-14}
        & Finsler Umap (Ours) & 0,5  & 0,855 ($\pm$0,029) & 0,803 ($\pm$0,063) & 0,855 ($\pm$0,029) & 0,849 ($\pm$0,033) & 0,861 ($\pm$0,026) & 0,855 ($\pm$0,029) & 0,824 ($\pm$0,056) &  & 0,575 ($\pm$0,033) & 0,652 ($\pm$0,064) & 97712 ($\pm$9353) \\
        & Finsler Umap (Ours) & 0,1  & 0,881 ($\pm$0,021) & 0,855 ($\pm$0,049) & 0,881 ($\pm$0,021) & 0,877 ($\pm$0,024) & 0,885 ($\pm$0,019) & 0,881 ($\pm$0,021) & 0,870 ($\pm$0,044) &  & 0,623 ($\pm$0,019) & 0,587 ($\pm$0,034) & 109551 ($\pm$9189) \\
        & Finsler Umap (Ours) & 0,01 & 0,869 ($\pm$0,027) & 0,817 ($\pm$0,056) & 0,869 ($\pm$0,027) & 0,861 ($\pm$0,029) & 0,877 ($\pm$0,026) & 0,869 ($\pm$0,027) & 0,837 ($\pm$0,050) &  & 0,624 ($\pm$0,032) & 0,570 ($\pm$0,075) & 109198 ($\pm$14587) \\
        & Finsler Umap (Ours) & 0,001& 0,887 ($\pm$0,024) & 0,865 ($\pm$0,056) & 0,887 ($\pm$0,024) & 0,884 ($\pm$0,028) & 0,891 ($\pm$0,020) & 0,887 ($\pm$0,024) & 0,879 ($\pm$0,050) &  & 0,633 ($\pm$0,017) & 0,563 ($\pm$0,037) & 114745 ($\pm$8031) \\
        \midrule
        \multirow{5}{*}{EMNIST-Balanced}
        & Umap & ---    & 0,641 ($\pm$0,004) & 0,396 ($\pm$0,008) & 0,642 ($\pm$0,004) & 0,639 ($\pm$0,004) & 0,644 ($\pm$0,004) & 0,642 ($\pm$0,004) & 0,409 ($\pm$0,008) &  & 0,419 ($\pm$0,004) & 0,764 ($\pm$0,014) & 167572 ($\pm$3708) \\
        \cmidrule(lr){2-14}
        & Finsler Umap (Ours) & 0,5  & 0,674 ($\pm$0,007) & 0,447 ($\pm$0,012) & 0,674 ($\pm$0,007) & 0,671 ($\pm$0,007) & 0,678 ($\pm$0,007) & 0,674 ($\pm$0,007) & 0,459 ($\pm$0,012) &  & 0,433 ($\pm$0,006) & 0,824 ($\pm$0,012) & 92918 ($\pm$5118) \\
        & Finsler Umap (Ours) & 0,1  & 0,668 ($\pm$0,003) & 0,437 ($\pm$0,008) & 0,669 ($\pm$0,003) & 0,665 ($\pm$0,003) & 0,672 ($\pm$0,003) & 0,669 ($\pm$0,003) & 0,450 ($\pm$0,007) &  & 0,416 ($\pm$0,010) & 0,839 ($\pm$0,024) & 94983 ($\pm$4762) \\
        & Finsler Umap (Ours) & 0,01 & 0,665 ($\pm$0,004) & 0,430 ($\pm$0,008) & 0,666 ($\pm$0,004) & 0,662 ($\pm$0,004) & 0,669 ($\pm$0,005) & 0,666 ($\pm$0,004) & 0,443 ($\pm$0,008) &  & 0,416 ($\pm$0,007) & 0,849 ($\pm$0,017) & 91657 ($\pm$3083) \\
        & Finsler Umap (Ours) & 0,001& 0,664 ($\pm$0,004) & 0,433 ($\pm$0,006) & 0,665 ($\pm$0,004) & 0,661 ($\pm$0,004) & 0,670 ($\pm$0,004) & 0,665 ($\pm$0,004) & 0,446 ($\pm$0,006) &  & 0,414 ($\pm$0,004) & 0,850 ($\pm$0,014) & 90969 ($\pm$4174) \\
        \midrule
        \multirow{5}{*}{CIFAR10}
        & Umap & ---    & 0,654 ($\pm$0,327) & 0,619 ($\pm$0,310) & 0,654 ($\pm$0,327) & 0,651 ($\pm$0,325) & 0,656 ($\pm$0,328) & 0,654 ($\pm$0,327) & 0,658 ($\pm$0,279) &  & 0,585 ($\pm$0,115) & 0,570 ($\pm$0,128) & 239743 ($\pm$98339) \\
        \cmidrule(lr){2-14}
        & Finsler Umap (Ours) & 0,5  & 0,707 ($\pm$0,236) & 0,643 ($\pm$0,217) & 0,707 ($\pm$0,236) & 0,698 ($\pm$0,233) & 0,717 ($\pm$0,239) & 0,707 ($\pm$0,236) & 0,681 ($\pm$0,196) &  & 0,541 ($\pm$0,082) & 0,686 ($\pm$0,116) & 102787 ($\pm$25852) \\
        & Finsler Umap (Ours) & 0,1  & 0,732 ($\pm$0,244) & 0,689 ($\pm$0,231) & 0,732 ($\pm$0,244) & 0,728 ($\pm$0,243) & 0,737 ($\pm$0,246) & 0,732 ($\pm$0,244) & 0,721 ($\pm$0,208) &  & 0,607 ($\pm$0,104) & 0,574 ($\pm$0,147) & 159475 ($\pm$46970) \\
        & Finsler Umap (Ours) & 0,01 & 0,809 ($\pm$0,011) & 0,752 ($\pm$0,029) & 0,809 ($\pm$0,011) & 0,802 ($\pm$0,014) & 0,816 ($\pm$0,009) & 0,809 ($\pm$0,011) & 0,778 ($\pm$0,025) &  & 0,639 ($\pm$0,018) & 0,534 ($\pm$0,047) & 174967 ($\pm$13199) \\
        & Finsler Umap (Ours) & 0,001& 0,817 ($\pm$0,009) & 0,767 ($\pm$0,029) & 0,817 ($\pm$0,009) & 0,812 ($\pm$0,013) & 0,822 ($\pm$0,005) & 0,817 ($\pm$0,009) & 0,791 ($\pm$0,024) &  & 0,647 ($\pm$0,016) & 0,516 ($\pm$0,033) & 187999 ($\pm$11072) \\
        \midrule
        \multirow{5}{*}{CIFAR100}
        & Umap & ---    & 0,549 ($\pm$0,183) & 0,315 ($\pm$0,105) & 0,560 ($\pm$0,179) & 0,556 ($\pm$0,178) & 0,563 ($\pm$0,180) & 0,560 ($\pm$0,179) & 0,322 ($\pm$0,104) &  & 0,444 ($\pm$0,036) & 0,715 ($\pm$0,030) & 123871 ($\pm$23089) \\
        \cmidrule(lr){2-14}
        & Finsler Umap (Ours) & 0,5  & 0,549 ($\pm$0,183) & 0,321 ($\pm$0,107) & 0,560 ($\pm$0,179) & 0,554 ($\pm$0,177) & 0,565 ($\pm$0,181) & 0,560 ($\pm$0,179) & 0,329 ($\pm$0,107) &  & 0,411 ($\pm$0,047) & 0,845 ($\pm$0,050) & 74769 ($\pm$17273) \\
        & Finsler Umap (Ours) & 0,1  & 0,554 ($\pm$0,185) & 0,324 ($\pm$0,108) & 0,564 ($\pm$0,181) & 0,560 ($\pm$0,179) & 0,569 ($\pm$0,182) & 0,564 ($\pm$0,181) & 0,332 ($\pm$0,107) &  & 0,411 ($\pm$0,048) & 0,833 ($\pm$0,056) & 57757 ($\pm$13248) \\
        & Finsler Umap (Ours) & 0,01 & 0,555 ($\pm$0,185) & 0,323 ($\pm$0,108) & 0,565 ($\pm$0,181) & 0,560 ($\pm$0,179) & 0,569 ($\pm$0,182) & 0,565 ($\pm$0,181) & 0,331 ($\pm$0,107) &  & 0,408 ($\pm$0,047) & 0,835 ($\pm$0,058) & 56631 ($\pm$13153) \\
        & Finsler Umap (Ours) & 0,001& 0,616 ($\pm$0,002) & 0,362 ($\pm$0,005) & 0,624 ($\pm$0,002) & 0,620 ($\pm$0,003) & 0,629 ($\pm$0,002) & 0,624 ($\pm$0,002) & 0,369 ($\pm$0,005) &  & 0,426 ($\pm$0,002) & 0,822 ($\pm$0,013) & 61756 ($\pm$2073) \\
        \midrule
        \multirow{5}{*}{DTD}
        & Umap & ---    & 0,402 ($\pm$0,202) & 0,237 ($\pm$0,119) & 0,501 ($\pm$0,169) & 0,498 ($\pm$0,168) & 0,504 ($\pm$0,170) & 0,501 ($\pm$0,169) & 0,254 ($\pm$0,116) &  & 0,480 ($\pm$0,006) & 0,686 ($\pm$0,011) & 3690 ($\pm$211) \\
        \cmidrule(lr){2-14}
        & Finsler Umap (Ours) & 0,5  & 0,448 ($\pm$0,148) & 0,261 ($\pm$0,087) & 0,539 ($\pm$0,124) & 0,534 ($\pm$0,123) & 0,543 ($\pm$0,125) & 0,539 ($\pm$0,124) & 0,278 ($\pm$0,085) &  & 0,451 ($\pm$0,013) & 0,812 ($\pm$0,025) & 4364 ($\pm$816) \\
        & Finsler Umap (Ours) & 0,1  & 0,410 ($\pm$0,206) & 0,242 ($\pm$0,121) & 0,507 ($\pm$0,172) & 0,503 ($\pm$0,171) & 0,511 ($\pm$0,173) & 0,507 ($\pm$0,172) & 0,260 ($\pm$0,119) &  & 0,447 ($\pm$0,005) & 0,798 ($\pm$0,022) & 1639 ($\pm$100) \\
        & Finsler Umap (Ours) & 0,01 & 0,463 ($\pm$0,154) & 0,274 ($\pm$0,092) & 0,551 ($\pm$0,129) & 0,546 ($\pm$0,127) & 0,555 ($\pm$0,130) & 0,551 ($\pm$0,129) & 0,291 ($\pm$0,090) &  & 0,449 ($\pm$0,007) & 0,782 ($\pm$0,020) & 1571 ($\pm$43) \\
        & Finsler Umap (Ours) & 0,001& 0,462 ($\pm$0,154) & 0,272 ($\pm$0,091) & 0,551 ($\pm$0,128) & 0,547 ($\pm$0,127) & 0,555 ($\pm$0,129) & 0,551 ($\pm$0,128) & 0,289 ($\pm$0,089) &  & 0,448 ($\pm$0,008) & 0,778 ($\pm$0,019) & 1602 ($\pm$100) \\
        \midrule
        \multirow{5}{*}{Caltech101}
        & Umap & ---    & 0,737 ($\pm$0,246) & 0,389 ($\pm$0,130) & 0,771 ($\pm$0,214) & 0,800 ($\pm$0,222) & 0,744 ($\pm$0,206) & 0,771 ($\pm$0,214) & 0,435 ($\pm$0,139) &  & 0,677 ($\pm$0,006) & 0,419 ($\pm$0,008) & 154548 ($\pm$7934) \\
        \cmidrule(lr){2-14}
        & Finsler Umap (Ours) & 0,5  & 0,736 ($\pm$0,245) & 0,383 ($\pm$0,128) & 0,771 ($\pm$0,213) & 0,798 ($\pm$0,221) & 0,745 ($\pm$0,206) & 0,771 ($\pm$0,213) & 0,427 ($\pm$0,136) &  & 0,643 ($\pm$0,010) & 0,490 ($\pm$0,017) & 103875 ($\pm$4468) \\
        & Finsler Umap (Ours) & 0,1  & 0,738 ($\pm$0,246) & 0,383 ($\pm$0,128) & 0,772 ($\pm$0,213) & 0,801 ($\pm$0,221) & 0,745 ($\pm$0,206) & 0,772 ($\pm$0,213) & 0,429 ($\pm$0,137) &  & 0,652 ($\pm$0,010) & 0,479 ($\pm$0,015) & 77854 ($\pm$5631) \\
        & Finsler Umap (Ours) & 0,01 & 0,821 ($\pm$0,004) & 0,432 ($\pm$0,016) & 0,845 ($\pm$0,003) & 0,876 ($\pm$0,003) & 0,815 ($\pm$0,004) & 0,845 ($\pm$0,003) & 0,481 ($\pm$0,016) &  & 0,651 ($\pm$0,007) & 0,476 ($\pm$0,017) & 77907 ($\pm$3926) \\
        & Finsler Umap (Ours) & 0,001& 0,739 ($\pm$0,246) & 0,393 ($\pm$0,131) & 0,774 ($\pm$0,214) & 0,802 ($\pm$0,222) & 0,747 ($\pm$0,207) & 0,774 ($\pm$0,214) & 0,438 ($\pm$0,140) &  & 0,649 ($\pm$0,011) & 0,473 ($\pm$0,022) & 74462 ($\pm$3758) \\
        \midrule
        \multirow{5}{*}{Caltech256}
        & Umap & ---    & 0,579 ($\pm$0,290) & 0,366 ($\pm$0,183) & 0,655 ($\pm$0,238) & 0,658 ($\pm$0,239) & 0,652 ($\pm$0,237) & 0,655 ($\pm$0,238) & 0,372 ($\pm$0,184) &  & 0,590 ($\pm$0,005) & 0,553 ($\pm$0,010) & 150961 ($\pm$5464) \\
        \cmidrule(lr){2-14}
        & Finsler Umap (Ours) & 0,5  & 0,664 ($\pm$0,221) & 0,435 ($\pm$0,145) & 0,723 ($\pm$0,182) & 0,725 ($\pm$0,183) & 0,722 ($\pm$0,182) & 0,723 ($\pm$0,182) & 0,440 ($\pm$0,145) &  & 0,568 ($\pm$0,007) & 0,644 ($\pm$0,014) & 51845 ($\pm$2257) \\
        & Finsler Umap (Ours) & 0,1  & 0,743 ($\pm$0,001) & 0,487 ($\pm$0,009) & 0,789 ($\pm$0,001) & 0,791 ($\pm$0,001) & 0,786 ($\pm$0,001) & 0,789 ($\pm$0,001) & 0,492 ($\pm$0,009) &  & 0,590 ($\pm$0,005) & 0,604 ($\pm$0,011) & 55076 ($\pm$1848) \\
        & Finsler Umap (Ours) & 0,01 & 0,667 ($\pm$0,222) & 0,434 ($\pm$0,145) & 0,726 ($\pm$0,183) & 0,729 ($\pm$0,183) & 0,724 ($\pm$0,182) & 0,726 ($\pm$0,183) & 0,440 ($\pm$0,145) &  & 0,586 ($\pm$0,007) & 0,608 ($\pm$0,012) & 54395 ($\pm$1912) \\
        & Finsler Umap (Ours) & 0,001& 0,742 ($\pm$0,001) & 0,488 ($\pm$0,008) & 0,788 ($\pm$0,001) & 0,790 ($\pm$0,001) & 0,786 ($\pm$0,001) & 0,788 ($\pm$0,001) & 0,494 ($\pm$0,008) &  & 0,588 ($\pm$0,005) & 0,609 ($\pm$0,008) & 54591 ($\pm$1419) \\
        \midrule
        \multirow{5}{*}{OxfordFlowers102}
        & Umap & ---    & 0,423 ($\pm$0,212) & 0,315 ($\pm$0,158) & 0,713 ($\pm$0,105) & 0,707 ($\pm$0,104) & 0,719 ($\pm$0,106) & 0,713 ($\pm$0,105) & 0,322 ($\pm$0,157) &  & 0,483 ($\pm$0,007) & 0,665 ($\pm$0,014) & 2583 ($\pm$122) \\
        \cmidrule(lr){2-14}
        & Finsler Umap (Ours) & 0,5  & 0,571 ($\pm$0,005) & 0,429 ($\pm$0,008) & 0,785 ($\pm$0,002) & 0,775 ($\pm$0,003) & 0,795 ($\pm$0,003) & 0,785 ($\pm$0,002) & 0,437 ($\pm$0,007) &  & 0,483 ($\pm$0,009) & 0,722 ($\pm$0,019) & 3652 ($\pm$526) \\
        & Finsler Umap (Ours) & 0,1  & 0,514 ($\pm$0,171) & 0,386 ($\pm$0,129) & 0,757 ($\pm$0,086) & 0,749 ($\pm$0,085) & 0,765 ($\pm$0,086) & 0,757 ($\pm$0,086) & 0,394 ($\pm$0,128) &  & 0,459 ($\pm$0,007) & 0,751 ($\pm$0,021) & 1134 ($\pm$41) \\
        & Finsler Umap (Ours) & 0,01 & 0,517 ($\pm$0,171) & 0,388 ($\pm$0,129) & 0,758 ($\pm$0,086) & 0,750 ($\pm$0,085) & 0,767 ($\pm$0,086) & 0,758 ($\pm$0,086) & 0,396 ($\pm$0,128) &  & 0,457 ($\pm$0,009) & 0,756 ($\pm$0,021) & 1014 ($\pm$42) \\
        & Finsler Umap (Ours) & 0,001& 0,516 ($\pm$0,170) & 0,389 ($\pm$0,129) & 0,758 ($\pm$0,086) & 0,751 ($\pm$0,085) & 0,766 ($\pm$0,086) & 0,758 ($\pm$0,086) & 0,397 ($\pm$0,129) &  & 0,461 ($\pm$0,006) & 0,746 ($\pm$0,016) & 1043 ($\pm$40) \\
        \midrule
        \multirow{5}{*}{OxfordIIITPet}
        & Umap & ---    & 0,720 ($\pm$0,362) & 0,646 ($\pm$0,323) & 0,735 ($\pm$0,342) & 0,732 ($\pm$0,341) & 0,739 ($\pm$0,344) & 0,735 ($\pm$0,342) & 0,656 ($\pm$0,315) &  & 0,701 ($\pm$0,006) & 0,426 ($\pm$0,024) & 105243 ($\pm$5334) \\
        \cmidrule(lr){2-14}
        & Finsler Umap (Ours) & 0,5  & 0,885 ($\pm$0,005) & 0,760 ($\pm$0,016) & 0,892 ($\pm$0,005) & 0,884 ($\pm$0,006) & 0,899 ($\pm$0,005) & 0,892 ($\pm$0,005) & 0,768 ($\pm$0,015) &  & 0,670 ($\pm$0,013) & 0,478 ($\pm$0,040) & 86351 ($\pm$6526) \\
        & Finsler Umap (Ours) & 0,1  & 0,805 ($\pm$0,269) & 0,702 ($\pm$0,235) & 0,816 ($\pm$0,254) & 0,810 ($\pm$0,252) & 0,822 ($\pm$0,256) & 0,816 ($\pm$0,254) & 0,711 ($\pm$0,228) &  & 0,693 ($\pm$0,010) & 0,430 ($\pm$0,019) & 51588 ($\pm$5479) \\
        & Finsler Umap (Ours) & 0,01 & 0,806 ($\pm$0,269) & 0,704 ($\pm$0,235) & 0,816 ($\pm$0,255) & 0,810 ($\pm$0,253) & 0,823 ($\pm$0,257) & 0,816 ($\pm$0,255) & 0,714 ($\pm$0,229) &  & 0,693 ($\pm$0,013) & 0,434 ($\pm$0,017) & 50056 ($\pm$2610) \\
        & Finsler Umap (Ours) & 0,001& 0,807 ($\pm$0,270) & 0,710 ($\pm$0,237) & 0,818 ($\pm$0,255) & 0,812 ($\pm$0,253) & 0,824 ($\pm$0,257) & 0,818 ($\pm$0,255) & 0,719 ($\pm$0,231) &  & 0,697 ($\pm$0,014) & 0,433 ($\pm$0,026) & 50436 ($\pm$3656) \\
        \midrule
        \multirow{5}{*}{GTSRB}
        & Umap & ---    & 0,444 ($\pm$0,222) & 0,206 ($\pm$0,104) & 0,449 ($\pm$0,220) & 0,455 ($\pm$0,223) & 0,443 ($\pm$0,217) & 0,449 ($\pm$0,220) & 0,233 ($\pm$0,100) &  & 0,411 ($\pm$0,008) & 0,779 ($\pm$0,018) & 34781 ($\pm$1412) \\
        \cmidrule(lr){2-14}
        & Finsler Umap (Ours) & 0,5  & 0,440 ($\pm$0,147) & 0,190 ($\pm$0,064) & 0,445 ($\pm$0,145) & 0,450 ($\pm$0,147) & 0,441 ($\pm$0,144) & 0,445 ($\pm$0,145) & 0,218 ($\pm$0,062) &  & 0,361 ($\pm$0,012) & 0,957 ($\pm$0,029) & 9369 ($\pm$571) \\
        & Finsler Umap (Ours) & 0,1  & 0,460 ($\pm$0,154) & 0,201 ($\pm$0,069) & 0,465 ($\pm$0,152) & 0,468 ($\pm$0,153) & 0,463 ($\pm$0,151) & 0,465 ($\pm$0,152) & 0,229 ($\pm$0,067) &  & 0,376 ($\pm$0,007) & 0,938 ($\pm$0,034) & 9379 ($\pm$609) \\
        & Finsler Umap (Ours) & 0,01 & 0,461 ($\pm$0,154) & 0,205 ($\pm$0,069) & 0,467 ($\pm$0,153) & 0,470 ($\pm$0,154) & 0,463 ($\pm$0,152) & 0,467 ($\pm$0,153) & 0,233 ($\pm$0,067) &  & 0,374 ($\pm$0,010) & 0,931 ($\pm$0,032) & 9523 ($\pm$389) \\
        & Finsler Umap (Ours) & 0,001& 0,515 ($\pm$0,010) & 0,231 ($\pm$0,010) & 0,520 ($\pm$0,010) & 0,523 ($\pm$0,010) & 0,517 ($\pm$0,010) & 0,520 ($\pm$0,010) & 0,258 ($\pm$0,010) &  & 0,373 ($\pm$0,007) & 0,930 ($\pm$0,024) & 9529 ($\pm$430) \\
        \midrule
        \multirow{5}{*}{Imagenette}
        & Umap & ---    & 0,750 ($\pm$0,375) & 0,756 ($\pm$0,378) & 0,751 ($\pm$0,374) & 0,751 ($\pm$0,375) & 0,751 ($\pm$0,374) & 0,751 ($\pm$0,374) & 0,781 ($\pm$0,340) &  & 0,830 ($\pm$0,005) & 0,253 ($\pm$0,007) & 161823 ($\pm$11371) \\
        \cmidrule(lr){2-14}
        & Finsler Umap (Ours) & 0,5  & 0,811 ($\pm$0,270) & 0,762 ($\pm$0,256) & 0,811 ($\pm$0,270) & 0,799 ($\pm$0,266) & 0,824 ($\pm$0,274) & 0,811 ($\pm$0,270) & 0,789 ($\pm$0,228) &  & 0,779 ($\pm$0,020) & 0,322 ($\pm$0,038) & 62465 ($\pm$9513) \\
        & Finsler Umap (Ours) & 0,1  & 0,840 ($\pm$0,280) & 0,843 ($\pm$0,281) & 0,840 ($\pm$0,280) & 0,840 ($\pm$0,280) & 0,840 ($\pm$0,280) & 0,840 ($\pm$0,280) & 0,858 ($\pm$0,253) &  & 0,817 ($\pm$0,006) & 0,279 ($\pm$0,016) & 92330 ($\pm$12149) \\
        & Finsler Umap (Ours) & 0,01 & 0,844 ($\pm$0,281) & 0,849 ($\pm$0,283) & 0,844 ($\pm$0,281) & 0,844 ($\pm$0,281) & 0,844 ($\pm$0,281) & 0,844 ($\pm$0,281) & 0,864 ($\pm$0,255) &  & 0,829 ($\pm$0,012) & 0,268 ($\pm$0,034) & 98684 ($\pm$10545) \\
        & Finsler Umap (Ours) & 0,001& 0,846 ($\pm$0,282) & 0,852 ($\pm$0,284) & 0,846 ($\pm$0,281) & 0,846 ($\pm$0,281) & 0,846 ($\pm$0,281) & 0,846 ($\pm$0,281) & 0,867 ($\pm$0,255) &  & 0,828 ($\pm$0,006) & 0,268 ($\pm$0,016) & 97813 ($\pm$6251) \\
        \midrule
        \multirow{5}{*}{Imagenet}
        & Umap & ---    & 0,784 ($\pm$0,000) & 0,506 ($\pm$0,001) & 0,797 ($\pm$0,000) & 0,793 ($\pm$0,000) & 0,802 ($\pm$0,000) & 0,797 ($\pm$0,000) & 0,508 ($\pm$0,001) &  & 0,582 ($\pm$0,002) & 0,570 ($\pm$0,003) & 7315700 ($\pm$51432) \\
        \cmidrule(lr){2-14}
        & Finsler Umap (Ours) & 0,5  & 0,799 ($\pm$0,001) & 0,530 ($\pm$0,002) & 0,811 ($\pm$0,000) & 0,805 ($\pm$0,001) & 0,818 ($\pm$0,000) & 0,811 ($\pm$0,000) & 0,533 ($\pm$0,002) &  & 0,600 ($\pm$0,003) & 0,588 ($\pm$0,007) & 1505910 ($\pm$36768) \\
        & Finsler Umap (Ours) & 0,1  & 0,803 ($\pm$0,001) & 0,539 ($\pm$0,002) & 0,816 ($\pm$0,001) & 0,810 ($\pm$0,001) & 0,821 ($\pm$0,001) & 0,816 ($\pm$0,001) & 0,542 ($\pm$0,002) &  & 0,614 ($\pm$0,004) & 0,565 ($\pm$0,008) & 1885240 ($\pm$27296) \\
        & Finsler Umap (Ours) & 0,01 & 0,803 ($\pm$0,000) & 0,538 ($\pm$0,001) & 0,815 ($\pm$0,000) & 0,810 ($\pm$0,000) & 0,821 ($\pm$0,001) & 0,815 ($\pm$0,000) & 0,540 ($\pm$0,001) &  & 0,614 ($\pm$0,004) & 0,564 ($\pm$0,006) & 1921640 ($\pm$25941) \\
        & Finsler Umap (Ours) & 0,001& 0,803 ($\pm$0,001) & 0,540 ($\pm$0,003) & 0,816 ($\pm$0,001) & 0,810 ($\pm$0,001) & 0,821 ($\pm$0,001) & 0,816 ($\pm$0,001) & 0,542 ($\pm$0,003) &  & 0,615 ($\pm$0,004) & 0,565 ($\pm$0,007) & 1930520 ($\pm$28049) \\
        \bottomrule
    \end{tabular}%
    }
\end{table*}

\begin{table*}[ht]
    \caption{Mean clustering performance using kMeans on t-SNE and Finsler t-SNE data embeddings across the smaller datasets. We use various levels of emphasis for metric asymmetry, via $\lVert \omega\rVert_2\in\{0.001, 0.01, 0.1, 0.5\}$, in the embedding space of the Finsler methods. (+)/(-) signifies that higher/lower is better.}
    \label{tab: classif kmeans on tSNE finsler tSNE varying omega levels}
    \centering
    \resizebox{\textwidth}{!}{%
    \begin{tabular}{lllccccccccccc}
        \toprule
         &  &  & \multicolumn{7}{c}{Label-related scores} &  & \multicolumn{3}{c}{Label-unrelated scores} \\
        \cmidrule(lr){4-10}\cmidrule(lr){12-14}
        Dataset & Method & $\lVert \omega\rVert_2$ & AMI (+) & ARI (+) & NMI (+) & HOM (+) & COM (+) & V-M (+) & FMI (+) &  & SIL (+) & DBI (–) & CHI (+) \\
        \midrule
        \multirow{5}{*}{Iris}
        & t-SNE                 & —     & 0,829 & 0,851 & 0,831 & 0,831 & 0,831 & 0,831 & 0,900 &  & 0,665 & 0,453 & 3028 \\
        \cmidrule(lr){2-14}
        & Finsler t-SNE (Ours) & 0,5   & 0,845 & 0,868 & 0,846 & 0,846 & 0,847 & 0,846 & 0,911 &  & 0,664 & 0,465 & 4822 \\
        & Finsler t-SNE (Ours) & 0,1   & 0,835 & 0,851 & 0,837 & 0,836 & 0,837 & 0,837 & 0,900 &  & 0,659 & 0,465 & 5092 \\
        & Finsler t-SNE (Ours) & 0,01  & 0,835 & 0,851 & 0,837 & 0,836 & 0,837 & 0,837 & 0,900 &  & 0,670 & 0,442 & 5063 \\
        & Finsler t-SNE (Ours) & 0,001 & 0,835 & 0,851 & 0,837 & 0,836 & 0,837 & 0,837 & 0,900 &  & 0,663 & 0,450 & 5263 \\
        \midrule
        \multirow{5}{*}{DTD}
        & t-SNE                 & —     & 0,498 & 0,288 & 0,581 & 0,578 & 0,584 & 0,581 & 0,303 &  & 0,402 & 0,804 & 2492 \\
        \cmidrule(lr){2-14}
        & Finsler t-SNE (Ours) & 0,5   & 0,497 & 0,290 & 0,579 & 0,573 & 0,584 & 0,579 & 0,306 &  & 0,440 & 0,819 & 1268 \\
        & Finsler t-SNE (Ours) & 0,1   & 0,502 & 0,298 & 0,584 & 0,582 & 0,587 & 0,584 & 0,313 &  & 0,430 & 0,826 & 1306 \\
        & Finsler t-SNE (Ours) & 0,01  & 0,496 & 0,288 & 0,578 & 0,574 & 0,583 & 0,578 & 0,305 &  & 0,424 & 0,831 & 1208 \\
        & Finsler t-SNE (Ours) & 0,001 & 0,504 & 0,294 & 0,585 & 0,580 & 0,590 & 0,585 & 0,310 &  & 0,408 & 0,859 & 1186 \\
        \midrule
        \multirow{5}{*}{Caltech101}
        & t-SNE                 & —     & 0,804 & 0,377 & 0,830 & 0,865 & 0,798 & 0,830 & 0,428 &  & 0,547 & 0,610 & 23151 \\
        \cmidrule(lr){2-14}
        & Finsler t-SNE (Ours) & 0,5   & 0,824 & 0,441 & 0,847 & 0,878 & 0,818 & 0,847 & 0,490 &  & 0,576 & 0,627 & 13206 \\
        & Finsler t-SNE (Ours) & 0,1   & 0,823 & 0,434 & 0,846 & 0,879 & 0,816 & 0,846 & 0,484 &  & 0,581 & 0,596 & 13901 \\
        & Finsler t-SNE (Ours) & 0,01  & 0,825 & 0,458 & 0,848 & 0,879 & 0,820 & 0,848 & 0,506 &  & 0,584 & 0,597 & 13419 \\
        & Finsler t-SNE (Ours) & 0,001 & 0,818 & 0,412 & 0,842 & 0,876 & 0,811 & 0,842 & 0,464 &  & 0,554 & 0,638 & 13707 \\
        \midrule
        \multirow{5}{*}{OxfordFlowers102}
        & t-SNE                 & —     & 0,549 & 0,388 & 0,775 & 0,767 & 0,783 & 0,775 & 0,397 &  & 0,419 & 0,740 & 1788 \\
        \cmidrule(lr){2-14}
        & Finsler t-SNE (Ours) & 0,5   & 0,556 & 0,381 & 0,776 & 0,764 & 0,788 & 0,776 & 0,392 &  & 0,510 & 0,712 & 1452 \\
        & Finsler t-SNE (Ours) & 0,1   & 0,563 & 0,380 & 0,779 & 0,766 & 0,792 & 0,779 & 0,392 &  & 0,537 & 0,653 & 1690 \\
        & Finsler t-SNE (Ours) & 0,01  & 0,557 & 0,388 & 0,776 & 0,763 & 0,789 & 0,776 & 0,400 &  & 0,528 & 0,650 & 1677 \\
        & Finsler t-SNE (Ours) & 0,001 & 0,558 & 0,388 & 0,776 & 0,764 & 0,789 & 0,776 & 0,400 &  & 0,527 & 0,656 & 172846 \\
        \midrule
        \multirow{5}{*}{OxfordIIITPet}
        & t-SNE                 & —     & 0,899 & 0,814 & 0,904 & 0,900 & 0,908 & 0,904 & 0,819 &  & 0,607 & 0,554 & 15824 \\
        \cmidrule(lr){2-14}
        & Finsler t-SNE (Ours) & 0,5   & 0,900 & 0,822 & 0,906 & 0,903 & 0,908 & 0,906 & 0,827 &  & 0,600 & 0,606 & 13946 \\
        & Finsler t-SNE (Ours) & 0,1   & 0,899 & 0,790 & 0,904 & 0,894 & 0,915 & 0,904 & 0,798 &  & 0,624 & 0,544 & 12501 \\
        & Finsler t-SNE (Ours) & 0,01  & 0,903 & 0,821 & 0,908 & 0,904 & 0,913 & 0,908 & 0,826 &  & 0,618 & 0,556 & 13912 \\
        & Finsler t-SNE (Ours) & 0,001 & 0,898 & 0,796 & 0,904 & 0,896 & 0,912 & 0,904 & 0,803 &  & 0,610 & 0,585 & 13367 \\
        \midrule
        \multirow{5}{*}{Imagenette}
        & t-SNE                 & —     & 0,939 & 0,946 & 0,939 & 0,939 & 0,939 & 0,939 & 0,951 &  & 0,610 & 0,536 & 22051 \\
        \cmidrule(lr){2-14}
        & Finsler t-SNE (Ours) & 0,5   & 0,943 & 0,951 & 0,943 & 0,943 & 0,943 & 0,943 & 0,955 &  & 0,668 & 0,465 & 19651 \\
        & Finsler t-SNE (Ours) & 0,1   & 0,937 & 0,944 & 0,937 & 0,937 & 0,937 & 0,937 & 0,950 &  & 0,696 & 0,451 & 19732 \\
        & Finsler t-SNE (Ours) & 0,01  & 0,936 & 0,944 & 0,936 & 0,936 & 0,936 & 0,936 & 0,950 &  & 0,703 & 0,446 & 19678 \\
        & Finsler t-SNE (Ours) & 0,001 & 0,936 & 0,944 & 0,936 & 0,936 & 0,936 & 0,936 & 0,949 &  & 0,706 & 0,443 & 19994 \\
        \bottomrule
    \end{tabular}%
    }
\end{table*}

\end{document}